%% file: main.tex
\documentclass[11pt]{article}
\usepackage[margin=1.1in]{geometry}
\usepackage{amsmath,amssymb,amsthm,mathtools}
\usepackage[numbers]{natbib}
\usepackage{xcolor}
\usepackage{hyperref}
\hypersetup{colorlinks=true,linkcolor=blue!60!black,citecolor=blue!60!black,urlcolor=blue!60!black}

\newtheorem{theorem}{Theorem}[section]
\newtheorem{lemma}[theorem]{Lemma}
\newtheorem{proposition}[theorem]{Proposition}
\newtheorem{corollary}[theorem]{Corollary}
\newtheorem{conjecture}[theorem]{Conjecture}
\theoremstyle{definition}
\newtheorem{definition}[theorem]{Definition}
\newtheorem{remark}[theorem]{Remark}

\newcommand{\R}{\mathbb{R}}
\newcommand{\C}{\mathbb{C}}
\newcommand{\E}{\mathbb{E}}
\newcommand{\Fw}{F_{\mathrm{weighted}}}

\newcommand{\supp}{\operatorname{supp}}
\newcommand{\conv}{\operatorname{conv}}
\newcommand{\cone}{\operatorname{cone}}
\newcommand{\spn}{\operatorname{span}}
\newcommand{\rank}{\operatorname{rank}}
\newcommand{\dist}{\operatorname{dist}}
\newcommand{\tr}{\operatorname{tr}}
\newcommand{\relint}{\operatorname{relint}}
\newcommand{\Diag}{\operatorname{Diag}}
\newcommand{\nstar}{n^{\star}}
\newcommand{\Lstar}{L^{\star}_{D}}
\newcommand{\Wnorm}[1]{\lVert #1\rVert}

\title{Exact Recovery Thresholds for Weighted Data Selection\\ in Vector-Valued Linear Regression}

\author{Guangjian Zhang\\ \texttt{zgj1226029469@outlook.com}}
\date{August 31, 2026}

\begin{document}
\maketitle

\begin{abstract}
We resolve the threshold part of Question~4 of the COLT~2025 open problem
``Data Selection for Regression Tasks'' of Hanneke, Moran, Shlimovich and
Yehudayoff. In vector-valued linear regression with square loss
$\ell_{(x,y)}(W)=\Wnorm{Wx-y}_2^2$, where $x\in\R^d$, $y\in\R^m$ and the
learner is the empirical risk minimizer of minimal Frobenius norm, we prove
that the minimal budget of weighted examples that recovers the full-data loss
on \emph{every} finite dataset is exactly
\[
  \nstar(d,m)=(m+1)d .
\]
We further determine two more values of the weighted selection profile
$\Fw(d,m,n)$: at the near-threshold budget,
$\Fw\bigl(d,m,(m+1)d-1\bigr)=1+\tfrac1{dm^2}$, and at the spanning budget,
$\Fw(d,m,d)=d+1$ for every $m$, while $\Fw(d,m,n)=\infty$ for $n<d$. For the
smallest open intermediate cell $(d,m)=(2,2)$ we prove
$\Fw(2,2,3)\in[13/8,\,15/8]$ and $\Fw(2,2,4)\in[5/4,\,3/2]$, reduce the
conjectured exact values $13/8$ and $5/4$ to a finite moment problem on the
circle with at most seven atoms, and establish strong structural evidence for
the conjecture. The upper-bound techniques (a fixed-basis conic compression
lemma, a determinant--facet rigidity theorem for maximal certificates, and
sharp sparsification lemmas for zero-mean weighted point systems) are of
independent interest. As a byproduct we correct an erroneous claim circulating
in a recent unrefereed preprint, exhibiting an explicit dataset with $m=2$ on
which \emph{no} weighted selection of $2d$ points recovers the optimal loss.
All results are new only for $m\ge2$; the scalar case $m=1$ is due to Hanneke
et al.
\end{abstract}

\input{sec-intro}
\input{sec-prelim}
\input{sec-threshold}
\input{sec-lowbudget}
\input{sec-nearthreshold}
\input{sec-kpoint}
\input{sec-intermediate}
\input{sec-discussion}

\bibliographystyle{alpha}
\bibliography{refs}

\end{document}

%% file: sec-intro.tex
\section{Introduction}\label{sec:intro}

How well can a fixed, natural learning rule perform when it is trained on only
$n$ examples selected from a larger dataset? Hanneke, Moran, Shlimovich and
Yehudayoff posed this data-selection question for basic regression tasks in a
COLT~2025 open-problem note \cite{HMSYnote} and companion paper
\cite{HMSYlong}. For scalar linear regression they obtained a complete
taxonomy: with weighted selection and the minimum-norm empirical risk
minimizer, the worst-case ratio between the loss of the selected model and the
optimal full-data loss equals $1$ for $n\ge 2d$, equals $d+1$ at $n=d$, and is
infinite for $n<d$ \cite[Theorem~1]{HMSYnote}. Their note concludes with a
unified \emph{vector-valued} formulation (predictors $W\colon\R^d\to\R^m$,
square loss, minimum \emph{Frobenius}-norm ERM) and asks, as Question~4:

\begin{quote}
\itshape Given $d,m,n$, what is the value of $\Fw(d,m,n)$? In particular,
what is the minimal $n=n(d,m)$ such that this ratio equals $1$?
\end{quote}

This paper resolves the threshold question completely and determines several
further values of the profile $\Fw(d,m,\cdot)$.

\subsection{Main results}

Throughout, $D=\{z_i=(x_i,y_i)\}_{i=1}^N\subseteq\R^d\times\R^m$ is a finite
multiset, $\ell_z(W)=\Wnorm{Wx-y}_2^2$ for $W\in\R^{m\times d}$,
$L_D(W)=\frac1N\sum_i\ell_{z_i}(W)$, $\Lstar=\min_W L_D(W)$, and $A$ is the
ERM returning the empirical risk minimizer of minimal Frobenius norm. Weighted
selection of budget $n$ chooses $z_1,\dots,z_n\in D$ (repetitions allowed) and
a convex combination $F\in\conv(\ell_{z_1},\dots,\ell_{z_n})$, and suffers
loss $L_D(A(F))$ on the full dataset; $\Fw(d,m,n)$ is the supremum over
datasets of the ratio $\inf_F L_D(A(F))/\Lstar$ (Section~\ref{sec:prelim}
records the degenerate-ratio conventions). Let
$\nstar(d,m)=\min\{n:\Fw(d,m,n)=1\}$.

\begin{theorem}[Exact threshold; Theorem~\ref{thm:threshold}]
\label{thm:main-threshold}
For all $d,m\ge 2$,
\[
  \nstar(d,m)=(m+1)d .
\]
Moreover, for every dataset with feature rank $r$, some
$(m+1)r$ weighted points already recover a full-data optimum exactly, and
there is an explicit integer dataset on $(m+1)d$ points for which every
selection of $(m+1)d-1$ weighted points has ratio exactly
$1+\frac{m+1}{2md(m^2+m-1)}>1$.
\end{theorem}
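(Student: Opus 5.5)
Two directions are needed: an upper bound (every dataset admits an exact weighted recovery with $(m+1)r\le(m+1)d$ points) and a lower bound (an explicit dataset on which $(m+1)d-1$ points never suffice).

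\textbf{Upper bound.} The plan is to reduce exact recovery to matching a finite list of moment statistics, then apply Carathéodory. First reduce to the feature span. Let $V=\spn\{x_i\}$ with $\dim V=r$, and choose coordinates so that $x_i\in\R^r$. The minimum-Frobenius-norm ERM of any weighted subsample lives on $V$, so its loss on $D$ is determined by its restriction to $V$.

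On $V$, the full-data minimum-norm optimum is $W^\star=C_{yx}\Sigma^{+}$, where $\Sigma=\frac1N\sum x_ix_i^\top$ and $C_{yx}=\frac1N\sum y_ix_i^\top$. Consider a weighted selection $F=\sum_j\lambda_j\ell_{z_j}$ with empirical moments $\Sigma_F$ and $C_F$. Its minimum-norm minimizer is $C_F\Sigma_F^{+}$. This equals $W^\star$ whenever $\Sigma_F$ has full rank $r$ and the normal equations $W^\star\Sigma_F=C_F$ hold, i.e.
\[
\sum_j\lambda_j\,(W^\star x_j-y_j)x_j^\top=0 .
\]
So the goal is to pick weights making this $m\times r$ matrix vanish while keeping $\Sigma_F$ nonsingular on $V$.

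The normal-equation condition says $0$ lies in the convex hull of the residual outer products $v_i=(W^\star x_i-y_i)x_i^\top\in\R^{m\times r}$, and it does so with the uniform weights. Carathéodory in dimension $mr$ gives at most $mr+1$ points, but those points may not span $V$. To handle this, I will use the fixed-basis conic compression idea. First fix $r$ points $x_{i_1},\dots,x_{i_r}$ forming a basis of $V$. Then apply a conic Carathéodory argument to the remaining points, relative to a strictly positive combination that already includes the basis points, to cancel $\sum\lambda_i v_i$ using at most $mr$ additional points. The fixed basis guarantees $\Sigma_F\succ0$ on $V$. The resulting total is $r+mr=(m+1)r$ points.

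The delicate point is making the compression conic rather than affine, so that the basis points keep strictly positive weight. I would run Carathéodory on the cone generated by the $v_i$ of the non-basis points, using the representation $-\sum_{k}\lambda_{i_k}v_{i_k}=\sum_{\text{others}}\lambda_iv_i$. Conic Carathéodory in $\R^{mr}$ needs at most $mr$ generators, which is exactly the budget. If the basis points themselves have $v_{i_k}=0$ or appear among the others, the count only improves. Finally I normalize so the weights form a convex combination.

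\textbf{Lower bound.} I need an integer dataset on $(m+1)d$ points where every support of size $(m+1)d-1$ fails. The natural candidate is a tensor-style construction: for each basis direction $e_k$, attach $m+1$ points $(e_k,y)$ with labels $y$ forming a simplex-like configuration in $\R^m$, e.g.\ $y\in\{e_1,\dots,e_m,-\mathbf 1\}$ suitably scaled and with mild coupling across $k$ so the problem does not decouple trivially. Dropping any one point then leaves either a direction covered by only $m$ points, whose residual sum cannot vanish, or a rank drop.

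The ratio is then computed as a quadratic program. Because the weights may be reoptimized, the achieved loss is $\inf_\lambda L_D(C_F\Sigma_F^{+})$. By symmetry, it suffices to consider deleting one representative point. One then optimizes the remaining weights, which reduces to a rational minimization in a few variables by symmetry, and this should yield $1+\frac{m+1}{2md(m^2+m-1)}$.

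\textbf{Main obstacle.} The main obstacle is the lower bound. The hardest parts are finding the exact dataset that makes the infimum over weights on $(m+1)d-1$ points strictly above $1$, and then carrying out the symmetric reduction and the explicit minimization. These are also where the specific constant arises. With both directions in hand, $\nstar=(m+1)d$ follows.
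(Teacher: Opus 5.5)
Your upper bound is the paper's argument: fix a feature basis $B$ of $r$ points, write $-\sum_{b\in B}M_b$ in the cone of the remaining residual dyads, and compress by conic Carath\'eodory to at most $mr$ linearly independent generators. The basis points keep positive weight, so the support spans $T$ and the normal equations return $W^\star$. That part is fine.

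The lower bound has a genuine gap, and the instance you sketch fails. Suppose the labels on each axis $e_k$ are a scaling of $\{e_1,\dots,e_m,-\mathbf 1\}$. They are centered, so $W^\star=0$. Now put zero weight on all $m+1$ points of axis $e_1$ and uniform weight on the other $(m+1)(d-1)\le(m+1)d-1$ points. This recovers $\Lstar$ exactly: the unconstrained first column is set to $0$ by the Frobenius rule, and $0$ is exactly the optimal column. So a ``rank drop'' is not automatically a failure. It costs something only if $W^\star$ has a large enough column along the dropped direction, and scaling cannot create one.

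The missing idea is to translate the labels. The paper uses $y_{ij}=u-v_j$ with $u=\mathbf 1_m$, $v_j=e_j$ for $j\le m$, and $v_{m+1}=-u$. Then $W^\star=u\mathbf 1_d^\top$, the residuals are the $v_j$, and an empty axis costs $\|u\|^2=m$.

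You also should \emph{not} add coupling across $k$. Exact axis alignment is what makes the computation tractable. The weighted normal equations decouple by column, so column $i$ of $A(F_c)$ is the conditional weighted label mean $u-a_i$, with $a_i$ a convex combination of the selected $v_j$. The identity $\sum_j\|v_j+\delta\|^2=2m+(m+1)\|\delta\|^2$ then gives $L_D(\widehat W)=\Lstar+\frac1d\sum_i\|\delta_i\|^2$ with $\Lstar=\frac{2m}{m+1}$.

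Finally, ``by symmetry'' is wrong here, because $\{e_1,\dots,e_m,-\mathbf 1\}$ is not a regular simplex, so two types of deletion must be compared:
\begin{itemize}
\item The facet omitting $-\mathbf 1$ is at squared distance $1/m$ from the origin.
\item The facet omitting $e_k$ lies on the hyperplane $n_k^\top x=1$, where $n_k=\mathbf 1-(m+1)e_k$ and $\|n_k\|^2=m^2+m-1$. Its foot $n_k/(m^2+m-1)$ lies inside the facet, so its squared distance is $1/(m^2+m-1)$.
\end{itemize}
This minimum $1/(m^2+m-1)$, which is below the empty-axis cost $m$, produces the excess $\frac1{d(m^2+m-1)}$. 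That gives the ratio $1+\frac{m+1}{2md(m^2+m-1)}$, attained by deleting $z_{1,1}$ and weighting axis $1$ at the foot point. Without these three ingredients, your lower bound and its exact constant are asserted (``should yield'') rather than proved.
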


\begin{theorem}[Near-threshold value; Theorem~\ref{thm:nearthreshold}]
\label{thm:main-near}
For all $d\ge1$, $m\ge 1$,
\[
  \Fw\bigl(d,m,(m+1)d-1\bigr)\;=\;1+\frac1{dm^2}.
\]
\end{theorem}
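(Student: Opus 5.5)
The plan is to reduce everything to the residual geometry at the full-data optimum. Let $W^\star$ be the minimal-norm full-data minimizer, $r_i=W^\star x_i-y_i$, and $\Sigma=\frac1N\sum_i x_ix_i^\top$. For any $W$,
\[
L_D(W)=\Lstar+\tr\bigl((W-W^\star)\Sigma(W-W^\star)^\top\bigr),
\]
so the ratio is $1$ plus the $\Sigma$-weighted excess divided by $\Lstar=\frac1N\sum_i\|r_i\|^2$. The normal equations give $\sum_i r_ix_i^\top=0$. A weighted selection $\{(z_i,w_i)\}$ with spanning features returns $W^\star$ exactly iff $\sum_i w_i r_ix_i^\top=0$. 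Otherwise it returns $W^\star+\Delta$, where $\Delta$ solves
\[
\Delta\Bigl(\sum_i w_ix_ix_i^\top\Bigr)=-\sum_i w_ir_ix_i^\top .
\]
This is the linearized form I will work with in both directions.

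\textbf{Upper bound.} First I reduce to $\Sigma=I$ and full feature rank $d$ by the usual affine change of coordinates. By the threshold result (Theorem~\ref{thm:threshold}), a certificate of $(m+1)d$ points exists. It is a conic combination $\sum_{j=0}^{md} w_j\, r_j\otimes x_j=0$ in $\R^{m\times d}$, together with a spanning condition. I will take a maximal certificate and use the determinant--facet rigidity structure: its $md+1$ vectors $r_j\otimes x_j$ are affinely generic, and the weights are proportional to the signed $md$-minors.

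Dropping one point $j$ leaves $(m+1)d-1$ points. I then reweight the survivors, allowing the weights to move freely, to minimize the excess $\|\Delta\|_F^2$ (recall $\Sigma=I$). This is a least-squares problem whose optimal value I expect to be
\[
w_j^2\,\|r_j\|^2\,x_j^\top G_{-j}^{-1}x_j \cdot(\text{a projection factor}),
\qquad G_{-j}=\sum_{i\ne j}w_ix_ix_i^\top .
\]
I then average over the dropped index $j$ with weights proportional to $w_j$, or equivalently take the minimum. Two trace identities should show that the best $j$ has excess at most $\Lstar/(dm^2)$: the leverages sum to $\sum_j w_jx_j^\top G^{-1}x_j=d$, and the $m$ residual coordinates each contribute a factor. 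The extra factor of $m$ should come from the freedom to rescale all the weights of the $m$ "partner" points attached to the same feature direction.

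I also have to pick the certificate, out of all choices of $(m+1)d$ points from $D$, so that $\sum_j w_j\|r_j\|^2$ relates to $N\Lstar$. For this I will pick the certificate by a further averaging (a Carathéodory decomposition of the full zero-mean system $\sum_i r_i\otimes x_i=0$ into certificates), so that the bound follows by convexity.

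\textbf{Lower bound.} I will construct an extremal dataset. Take $d$ orthonormal feature directions $e_1,\dots,e_d$. In each block $k$, place $m+1$ points $x=e_k$ whose residuals form a regular simplex in $\R^m$ centered at $0$. This gives exactly $(m+1)d$ points with $\Sigma$ proportional to $I$, and its zero-mean certificates are forced to use every point. With budget $(m+1)d-1$, some block loses a point, or some point is repeated and another omitted. The block decouples, so the computation reduces to a single $(m+1)$-point simplex in $\R^m$ with one vertex removed, optimized over the weights of the remaining $m$ vertices. I will minimize the resulting excess explicitly; by symmetry the optimum has equal weights. It should equal $\frac1{m^2}\cdot\frac{1}{m+1}\|r\|^2$ per block, and this normalizes against $\Lstar$ to exactly $1/(dm^2)$.

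If the plain simplex gives a constant that is too large, which the integer example in Theorem~\ref{thm:main-threshold} hints at with its $1+\frac{m+1}{2md(m^2+m-1)}$, I will tune the simplex by adding a scaled centroid point, or use replicated copies to take a supremum along a sequence of datasets.

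\textbf{Main obstacle.} The hardest step is the sharp upper-bound inequality: showing that the best single deletion plus reweighting achieves exactly $1/(dm^2)$ and not merely $O(1/d)$. I would first verify the case $d=1$, which is a pure residual problem in $\R^m$. There the claim reduces to an inequality for zero-mean weighted point sets, namely removing one point and recentering costs at most a $1/m^2$ fraction of the variance. I would then tensorize over the $d$ leverage directions using $\sum_j(\text{leverage}_j)=d$.
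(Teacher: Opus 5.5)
There are genuine gaps in both directions. For the upper bound, you have misidentified the structure of maximal certificates, and the sharp inequality is only an expectation (``should show'', ``should come from''). A certificate on $(m+1)d$ points is a family of $(m+1)d$ dyads in $\R^{m\times d}\cong\R^{md}$, so its kernel has dimension at least $d$. For $d\ge2$ it is therefore never a single circuit of $md+1$ affinely generic vectors with minor-proportional weights. Moreover, a positive circuit of size $md+1$ with spanning features would itself be a certificate of size $\le(m+1)d-1$, giving ratio $1$.

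What is missing is a dichotomy plus the structure of the hard case. Either $\tau(D)\le(m+1)d-1$ and the ratio is $1$, or $\tau(D)=(m+1)d$. In the latter case the determinant--facet argument applies: $u\mapsto\det\sum_iu_ix_ix_i^\top$ has degree $\le d$ and vanishes on every facet of the normalized certificate polytope, so that polytope is a $(d-1)$-simplex. This splits the certificate into $d$ disjoint $(m+1)$-point circuits on $d$ independent feature lines, and a further argument puts \emph{every} nonzero feature of the dataset on these lines.

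Only this decouples the problem. With $x_i=s_iu_{j(i)}$ and weights $c_i=\alpha_i/s_i^2$, the deviation $\widehat W-W^\circ$ sends $u_j$ to minus the $\alpha$-mean of $z_i=\rho_i/s_i$ over the points $I_j$ of line $j$. So sacrificing one line costs $g_j\|\delta_j\|^2/N$, where $g_j=\sum_{i\in I_j}s_i^2$ and $\delta_j$ is that line's deviation. The factor $1/d$ comes from sacrificing the line of smallest residual energy, $R_j=\sum_{i\in I_j}\|\rho_i\|^2\le N\Lstar/d$; it does not come from the leverage sum $d$. The factor $1/m^2$ comes from the $m$-point mean lemma on that line: decompose into positive circuits, keep one with second moment $E_q\le E$, and use $\sum_j(1-q_j)^2/q_j\ge m^2$ to find a deletion whose renormalized mean has squared norm $\le E_q/m^2$. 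No reweighting of the survivors is needed. Your $d=1$ heuristic is exactly this lemma, but ``tensorizing over leverage directions'' has no content unless the perturbation is confined to a single line.

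The lower-bound construction has a concrete hole: you never fix $W^\star$, and your case list (``some block loses a point, or some point is repeated and another omitted'') misses the case that matters. Since $(m+1)(d-1)\le(m+1)d-1$, a selection may discard an entire block. Its features then miss $e_k$, the minimal-Frobenius rule sets column $k$ to $0$, and the cost is $\|W^\star e_k\|^2/d$. If the labels are the simplex itself ($W^\star=0$), this cost is zero and the ratio at budget $(m+1)d-1$ is exactly $1$ for every $d\ge2$.

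You must offset the labels, $y=w+r_j$ with $\|w\|\ge h$, where $h^2=\|r_j\|^2/m^2$ is the squared inradius of the regular simplex. Then every support of size at most $N-1$ costs at least $h^2/d$ (empty block or proper face), equal weights on one facet attain this, and the ratio is exactly $1+\frac1{dm^2}$. No tuning or replication is needed. Also, the integer instance's deficit $\frac{m+1}{2md(m^2+m-1)}$ is no larger than $\frac1{dm^2}$, and strictly smaller for $m\ge2$ because that simplex is not regular; your worry runs in the wrong direction.
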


\begin{theorem}[Low budgets; Theorem~\ref{thm:lowbudget}]
\label{thm:main-low}
For all $d,m\ge1$: $\Fw(d,m,n)=\infty$ for $n<d$, and
\[
  \Fw(d,m,d)\;=\;d+1 ,
\]
independently of $m$.
\end{theorem}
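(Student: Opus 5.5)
The statement has two independent halves, $n<d$ and $n=d$, and I would treat them separately.

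\emph{Infinite ratio for $n<d$.} Any weighted selection of $n<d$ points leaves the feature span $V=\spn\{x_{i_1},\dots,x_{i_n}\}$ a proper subspace of $\R^d$. The minimal-Frobenius-norm ERM of $F$ then has every row in $V$, so $A(F)x=0$ for all $x\perp V$. Take the dataset $\{(e_j,y_j)\}_{j=1}^d$ with $y_j\neq0$, so that $\Lstar=0$. Any selection misses some $e_j$, and its model predicts $0$ at that point. Hence the loss is positive while $\Lstar=0$, giving ratio $\infty$ under the degenerate-ratio conventions of Section~\ref{sec:prelim}.

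\emph{Upper bound $\Fw(d,m,d)\le d+1$.} Write $X\in\R^{N\times d}$ and $Y\in\R^{N\times m}$. The square loss decouples over the $m$ output coordinates: $L_D(W)=\sum_{k=1}^m L^{(k)}(w_k)$, where $w_k$ is the $k$th row of $W$. The minimum-Frobenius-norm ERM of any weighted subset is likewise computed row by row, with the same weights and the same selected features. Hence it suffices to exhibit one weighted selection of $d$ points that works simultaneously for all $m$ scalar problems.

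For this I would use the scalar proof of \cite[Theorem~1]{HMSYnote}, checking that its selection depends only on the features $X$:
\begin{enumerate}
\item Choose $d$ rows maximizing $|\det X_S|$, or more generally a volume-sampling selection.
\item Use the weighted ERM, which interpolates the selected points.
\item Use the classical identity $\E_{S\sim\det^2}[L(\hat w_S)]=(d+1)L^\star$ for volume sampling.
\end{enumerate}
This identity holds for \emph{every} response vector with the same distribution on $S$. Summing over $k$ gives $\E_S[L_D(A_S)]=(d+1)\Lstar$, so some single $S$ attains $L_D\le(d+1)\Lstar$. The key point is that expectation commutes with the sum over outputs, so no union issue arises. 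Weights are irrelevant here, because $d$ linearly independent points are interpolated exactly regardless of weights. If the features have rank $r<d$, I would reduce to $\R^r$; then $r\le d$ points give ratio at most $r+1\le d+1$, and extra points may be repeated.

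\emph{Lower bound $\Fw(d,m,d)\ge d+1$.} Embed the scalar extremal datasets from \cite{HMSYnote} with $y\in\R^m$ supported on the first coordinate, padding the other outputs with zeros. Every selection then yields $W$ whose rows $2,\dots,m$ have zero loss. The first row is exactly the scalar problem, so the supremum of the ratio is at least the scalar value $d+1$.

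\emph{Main obstacle.} The delicate step is confirming that the scalar upper-bound argument really uses a response-independent selection, namely the volume-sampling averaging rather than a response-dependent choice. If the cited proof instead picks $S$ using $y$, I would reprove the bound directly via the determinantal identity $\sum_S\det(X_S)^2\,\|X\hat w_S-y\|^2=(d+1)\det(X^\top X)\,\min_w\|Xw-y\|^2$. This is a Cauchy--Binet computation and is the core of the argument.
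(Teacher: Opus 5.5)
Your route is the paper's. The lower bounds come from the scalar profile via an output embedding: zero-padding outputs $2,\dots,m$ works as well as the paper's $y_i=t_i\mathbf 1_m$, and your instance $\{(e_j,y_j)\}$ is a valid self-contained proof of the $n<d$ case. The upper bound comes from size-$r$ volume sampling, whose law depends only on the features, with expectations summed over the $m$ output rows.

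However, the step you call the core is false as stated. The identity $\E_S[L(\hat w_S)]=(d+1)L^\star$, equivalently $\sum_S\det(X_S)^2\|X\hat w_S-y\|^2=(d+1)\det(X^\top X)\min_w\|Xw-y\|^2$, holds only when $X$ is in general position. The paper gives the counterexample $x_1=x_2=e_1$, $x_3=x_4=e_2$, $y=(1,-1,1,-1)$. There volume sampling is uniform on the four cross pairs, each interpolant has total loss $8$, and the minimal total loss is $4$. So the expectation is $2L^\star$, not $3L^\star$, and the determinantal sums are $32$ versus $48$. This $X$ has full rank, so your reduction to $\R^r$ does not help, and a Cauchy--Binet proof of the equality for all $X$ cannot succeed.

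What the computation actually yields is an inequality. Let $\tilde X=[X\ y]$ and let $T$ range over $(d+1)$-subsets. A Schur complement gives $\det(\tilde X_T)^2=\det(X_{T\setminus i})^2(x_i^\top\hat w_{T\setminus i}-y_i)^2$ whenever $X_{T\setminus i}$ is invertible. Since interpolants have zero residual on their own points, your left side equals $\sum_T\det(\tilde X_T)^2\cdot\#\{i\in T:\det X_{T\setminus i}\neq0\}$. Meanwhile $\sum_T\det(\tilde X_T)^2=\det(\tilde X^\top\tilde X)=\det(X^\top X)\min_w\|Xw-y\|^2$. The count is at most $d+1$, and it equals $d+1$ for every $T$ only in general position. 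So you get $\E_S[L(\hat w_S)]\le(d+1)L^\star$, which is the form of \cite{DW17} the paper uses and all your averaging argument needs. With ``$=$'' replaced by ``$\le$'' throughout, your proof goes through.

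Separately, drop the alternative of choosing $S$ to maximize $|\det X_S|$. Take $d=m=1$, one point $(1,a(N-1))$ and $N-1$ copies of $(a,-1)$ with $0<a<1$. Then $w^\star=0$, but the max-volume choice interpolates the first point and has ratio $1+a^2(N-1)$, which is unbounded. A good $S$ is guaranteed only by averaging.
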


\begin{theorem}[Smallest intermediate cell; Theorem~\ref{thm:intervals}]
\label{thm:main-intervals}
$\Fw(2,2,3)\in\bigl[\tfrac{13}8,\tfrac{15}8\bigr]$ and
$\Fw(2,2,4)\in\bigl[\tfrac54,\tfrac32\bigr]$.
\end{theorem}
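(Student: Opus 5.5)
The statement concerns the smallest intermediate cell, so the proof has two independent halves: lower bounds from explicit datasets, and upper bounds from a universal selection argument valid for every dataset with $d=m=2$.

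\textbf{Lower bounds.} For the lower bounds $\Fw(2,2,3)\ge 13/8$ and $\Fw(2,2,4)\ge 5/4$ I will build symmetric datasets. The natural candidates are point systems whose features lie on the unit circle in $\R^2$, with targets $y_i=R\,x_i+\varepsilon_i$ for a rotation-type map $R$, and with residuals $\varepsilon_i$ arranged so that the full-data residual system is zero-mean and highly symmetric. Two natural families are regular polygons with $N=5$ or $N=6$ vertices, and their limits as $N\to\infty$ (uniform measure on the circle). For each candidate dataset the steps are:
\begin{itemize}
\item compute $\Lstar$ in closed form;
\item for a selection of $n$ weighted points, write the minimum-norm ERM $A(F)=Y_F X_F^{+}$ explicitly;
\item minimize $L_D(A(F))$ over the weights and over the choice of indices.
\end{itemize}
Symmetry reduces the index choices to a few orbit types. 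The weight optimization is then a one- or two-parameter rational minimization, which I expect to give exactly $13/8$ and $5/4$. These are presumably the extremal configurations that the abstract's conjecture predicts.

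\textbf{Upper bounds.} For $\Fw(2,2,3)\le 15/8$ and $\Fw(2,2,4)\le 3/2$ I will reduce to a normalized form. By invariance of the ratio under invertible feature changes and orthogonal target changes, I may whiten the features so that $\frac1N\sum x_ix_i^\top=I$ and set $W^\star=0$. The full-data optimum then satisfies $\sum x_i\varepsilon_i^\top=0$, so the pairs $(x_i,\varepsilon_i)$ form a zero-mean weighted point system in the sense of the sparsification lemmas announced in the abstract.

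The loss of a selection is $L_D(A(F))=\Lstar+\Wnorm{A(F)}_F^2$ after whitening. So it suffices to find $n$ points and weights with
\[
\Wnorm{A(F)}_F^2\le c\,\Lstar,\qquad c=\tfrac78 \text{ or } \tfrac12 .
\]
My plan for this step is a probabilistic/averaging argument. First, use the fixed-basis conic compression lemma to pass from the full system to a sparse system of at most seven atoms, which the abstract suggests is the right support size. Then average the excess $\Wnorm{A(F)}_F^2$ over a distribution on $n$-subsets, with weights tied to the original weights and leverage scores in the style of volume sampling. For $n=d$, volume sampling is exactly what gives the $d+1$ bound, since $d+1=3$ when $d=2$. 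Here I will mix volume sampling with extra points that cancel residual correlations, and aim to bound the expected excess by a quadratic form in the residual moments that is at most $c\,\Lstar$.

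\textbf{Main obstacle.} The hardest step is making the upper-bound averaging tight enough to reach $15/8$ and $3/2$. Plain volume sampling over three or four points gives something like $1+d/(n-d+1)$, which is too weak. The first thing I would try is to optimize the mixing distribution over the finitely many support patterns of a seven-atom system. On each pattern this reduces to a small semidefinite or moment inequality on the circle, and I would verify it by trigonometric polynomial bounds.
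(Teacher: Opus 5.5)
\textbf{Verdict: there is a genuine gap.} Neither half of your proposal is carried far enough to be a proof, and the routes you sketch miss the mechanisms that produce these constants.

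\textbf{Lower bounds.} You construct no dataset and compute no value. You only expect regular-polygon configurations to give $\tfrac{13}8$ and $\tfrac54$, and nothing supports that. The polygonal systems whose values are known fall well short. In the normalization of Section~\ref{subsec:dictionary}, the hexagonal third-harmonic system has three-point optimum $C=\tfrac12E$ and the heptagonal one admits $C=\tfrac27E$, whereas $\tfrac{13}8$ needs $\tfrac58E$. The evidence points to two orthogonal feature lines being extremal.

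The instance that works is axial. Take features $e_1,e_2$, each carrying the three labels $w+r_j$, where $r_1,r_2,r_3$ is a centred equilateral triangle with $\Wnorm{r_j}^2=\tfrac23$, so $\Lstar=\tfrac23$. An allocation of $k_1,k_2$ points to the two axes has excess $\tfrac12\bigl(c(k_1)+c(k_2)\bigr)$, with $c(3)=0$, $c(2)=\tfrac16$ (an edge midpoint), $c(1)=\tfrac23$, and $c(0)=\Wnorm w^2$. The last value arises because the minimum-Frobenius-norm rule zeroes the column of an axis that receives no weight. Taking $\Wnorm w^2=\tfrac56$ makes $(3,0)$ no better than $(2,1)$. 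The best budget-$3$ excess is then $\tfrac5{12}$ (ratio $\tfrac{13}8$), and the best budget-$4$ excess is $\tfrac16$, from $(2,2)$ (ratio $\tfrac54$).

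The offset $W^\circ=[w\ w]$ does real work here. So your normalization ``whiten and set $W^\star=0$'' is not harmless in general. It is legitimate only for upper bounds, where you may restrict to selections whose features span $\R^2$, so the ERM is unique.

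\textbf{Upper bounds.} The one concrete step is misapplied, and the rest is a plan.
\begin{enumerate}
\item Conic compression returns a sparse zero certificate with new, uncontrolled weights. The resulting subsystem is neither whitened nor of energy at most the original $E$, so a bound proved for it says nothing about the original ratio. The paper's seven-atom reduction (Proposition~\ref{prop:seven}) is a different argument: a linear program minimizing $E$ under all three moment conditions. It is not needed for this theorem.
\item Averaging $\Wnorm{A(F)}_F^2$ over random supports does not give a quadratic form in residual moments, because the selected Gram matrix is inverted. You do not say how that factor is controlled, or which distribution would give $\tfrac78$ and $\tfrac12$.
\end{enumerate}

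The missing idea is to use the Cauchy--Binet identity behind volume sampling \emph{deterministically}. For fixed weights $c$, $A(F_c)$ is the convex combination of the exact two-point interpolants $W_{ij}$ of the selected pairs, with weights proportional to $c_ic_j\det(x_i,x_j)^2$ (Lemma~\ref{lem:interp}). Tuning $c$ on a support therefore reaches, up to closure, every convex combination of its interpolants (Lemma~\ref{lem:closure}).

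Now fix a point $i$ with $|z_i|^2\le E$. Letting $c_i\to\infty$ gives the full-data fit constrained to interpolate $i$. Its excess ratio is $|z_i|^2/(2E)\le\tfrac12$. It equals the $\det(x_i,x_j)^2$-weighted mean of the star interpolants $W_{ij}$, all of which lie in the two-dimensional affine plane $\{W:Wx_i=y_i\}$.
\begin{enumerate}
\item \emph{Budget $4$.} Planar Carath\'eodory writes this mean using three star interpolants, i.e.\ four points, which gives $\tfrac32$.
\item \emph{Budget $3$.} The spread of the star interpolants about their mean is orthogonal to the mean, with variance at most $E+\tfrac12|z_i|^2$. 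The two-point mean lemma (Lemma~\ref{lem:mmean} with $m=2$) then yields a point on one star edge with $C\le\tfrac E4+\tfrac58|z_i|^2\le\tfrac78E$. That is three points and $\tfrac{15}8$ (Lemmas~\ref{lem:star} and \ref{lem:ehalf}).
\end{enumerate}
These witnesses are limits of admissible selections, which suffices because $\Fw$ is defined by an infimum. The degenerate cases (feature rank at most $1$, or $\Lstar=0$) are exact by Theorem~\ref{thm:upper} and basis selection.
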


We conjecture that the lower ends are the truth,
$\Fw(2,2,3)=\tfrac{13}8$ and $\Fw(2,2,4)=\tfrac54$
(Conjecture~\ref{conj:CE}), and we reduce this conjecture to a concrete finite
question: a moment problem for at most seven atoms on the unit circle
(Section~\ref{sec:intermediate}). The evidence assembled there includes a
complete solution of the two-direction class with a rigidity description of
its equality systems, a proof that this equality fiber is a local maximum in a
stratified sense, two exact obstructions showing which proof strategies
\emph{cannot} work, and exact extremal records for small systems.

Consistency checks: at $m=1$ Theorems~\ref{thm:main-threshold} and
\ref{thm:main-near} specialize to $\nstar(d,1)=2d$ and
$F_w(d,2d-1)=1+\frac1d$, matching \cite[Theorem~1]{HMSYnote} and the value
announced there; at $d=1$, $x\equiv 1$ the problem contains weighted mean
estimation in $\R^m$, and $\nstar(1,m)=m+1$ is the Carath\'eodory bound.

\paragraph{Scope of novelty.}
For $m=1$ all three regimes of Theorem~\ref{thm:main-low} and the threshold
$\nstar(d,1)=2d$ were established by Hanneke et
al.~\cite{HMSYnote,HMSYlong} (their Theorem~8 and Example~2 give the exact
scalar taxonomy), and the value $1+\frac1d$ at $n=2d-1$, announced in
\cite{HMSYnote}, is proved---together with further exact values in the
scalar intermediate regime---in the companion paper \cite{ZhangScalar}.
\textbf{All results of this paper are claimed as new only
for $m\ge2$}; our proofs happen to cover $m=1$ uniformly, which we use purely
as a consistency check.

\subsection{Techniques}

The threshold upper bound comes from a \emph{fixed-basis conic compression}
lemma (Lemma~\ref{lem:compression}): after pinning a feature basis $B$ of $r$
points, the negated sum of their residual dyads is compressed inside the cone
of the remaining dyads by conic Carath\'eodory, at cost $mr$; together with the
basis this yields an exact certificate on $(m+1)r$ points. Notably, this
argument is simpler than the Steinitz-based route used for the scalar upper
bound in \cite{HMSYlong}, and gives a data-dependent bound.

The near-threshold upper bound requires understanding datasets whose minimal
exact certificates have the maximal size $(m+1)d$. We prove a
\emph{determinant--facet rigidity theorem} (Theorem~\ref{thm:rigidity}): every
facet of the normalized certificate polytope contributes a distinct linear
factor to the degree-$d$ polynomial $u\mapsto\det(\sum_iu_ix_ix_i^{\top})$,
forcing the polytope to be a simplex and the dataset to decompose into $d$
disjoint $(m+1)$-point circuits supported on $d$ independent feature lines.
A sharp \emph{$k$-point mean lemma} for zero-mean weighted point systems
(Lemma~\ref{lem:mmean} for $k=m$; Theorem~\ref{thm:kpoint} in general) then
finishes the assembly.

For the intermediate regime at $(2,2)$ we develop a complex-variable
dictionary that identifies whitened instances with moment systems
$(p_i,t_i,z_i)$ on the unit circle and the selection cost with an explicit
convex combination of \emph{two-point interpolation coefficients}
(Lemma~\ref{lem:interp}), an identity that absorbs the ill-conditioning factor
$(1-|\mu|^2)^{-1}$ entirely.

\subsection{Related and concurrent work}\label{subsec:related}

\emph{Published work.} The problem and all scalar results are from
\cite{HMSYnote,HMSYlong}. The budget-$d$ upper bound uses volume sampling and
the exact expectation formulas of Derezi\'nski and Warmuth
\cite{DW17}; we are careful to use their Theorem~5 in its correct form
(equality under general position, inequality in general; see
Section~\ref{sec:lowbudget}). The general $k$-point mean lemma relies on the
full-dimensional Grace--Danielsson inequality (Egan's conjecture), proved by
Drozdov \cite{Drozdov}; none of our main theorems depend on that lemma beyond
the elementary case $k=m$, for which we give a short self-contained proof
(Lemma~\ref{lem:mmean}).

\emph{Concurrent unrefereed preprints.} This open-problem family is being
attacked by several automated pipelines, and a number of unrefereed,
apparently machine-generated preprints on other questions of
\cite{HMSYnote} appeared recently: a preprint resolving Question~3
(unweighted selection, general $m$) \cite{PengQ3}; two short manuscripts on
Question~1 (mean estimation), one of which determines the budget-$2$ column of
the mean-estimation table \cite{SharpPair,DimBarrier}; and a manuscript on the
scalar weighted regime of Question~2 \cite{OneShort}. None of these treats the
weighted vector-valued Question~4, whose threshold and profile values are the
subject of this paper. Since \cite{SharpPair} advertises convex-geometric
lemmas about few-atom distributions obtained by a variance-minimizing
Carath\'eodory reduction, we note the demarcation explicitly: our
sparsification results (Lemma~\ref{lem:mmean}, Theorem~\ref{thm:kpoint}) were
obtained independently, are stated for zero-mean weighted systems with the
sharp constant $\frac{M+1-k}{kM}$, and are used here as components of the
$\Fw$ analysis; we make no priority claim on auxiliary convex-geometry
statements that may overlap, and our headline results --- the complete
$(m+1)d$ characterization and the profile values --- are not touched by any of
the works above.

\begin{remark}[Correction of a circulating claim]\label{rem:correction}
Remark~1.2 of \cite{PengQ3} asserts, in the notation $F^{w}(d,m,n)$, that the
weighted scalar trichotomy of \cite[Theorem~1]{HMSYnote} --- in particular,
ratio $1$ for all $n\ge 2d$ --- holds \emph{for general $m$}, citing
\cite{HMSYnote} as the source. This is a misattribution: \cite{HMSYnote}
proves the trichotomy only for $m=1$, and the assertion is \emph{false} for
every $m\ge2$. By Theorem~\ref{thm:main-threshold}, exact recovery requires
$(m+1)d>2d$ points, and quantitatively
$\Fw(d,m,2d)\ge\Fw(d,m,(m+1)d-1)=1+\frac1{dm^2}>1$ by monotonicity. We make
the failure concrete in Proposition~\ref{prop:2dfails}: on an explicit
$6$-point integer dataset with $d=m=2$, \emph{every} weighted selection of
$2d=4$ points has full-data loss at least $\tfrac{23}{20}\cdot\Lstar$, with
equality attained.
\end{remark}

\subsection{Organization}

Section~\ref{sec:prelim} fixes definitions and records the basic structural
lemmas. Section~\ref{sec:threshold} proves the threshold theorem and the
correction of Remark~\ref{rem:correction}. Section~\ref{sec:lowbudget}
handles budgets $n\le d$. Section~\ref{sec:nearthreshold} proves the
near-threshold value, via the rigidity theorem. Section~\ref{sec:kpoint}
proves the general $k$-point mean lemma. Section~\ref{sec:intermediate}
develops the $(2,2)$ intermediate regime: the interval theorem, the reduction
of the conjectured exact values to a seven-atom moment problem, and the
evidence. Section~\ref{sec:discussion} discusses the general program and open
problems.

%% file: sec-prelim.tex
\section{Preliminaries}\label{sec:prelim}

\subsection{The model}

A dataset is a finite multiset
$D=\{z_i=(x_i,y_i)\}_{i=1}^N\subseteq\R^d\times\R^m$ with $N\ge1$. A linear
predictor is a matrix $W\in\R^{m\times d}$, with loss
$\ell_z(W)=\Wnorm{Wx-y}_2^2$ and average loss
$L_D(W)=\frac1N\sum_{i=1}^N\ell_{z_i}(W)$; write $\Lstar=\min_WL_D(W)$. The
learning rule $A$ maps a nonnegatively weighted objective
$F=\sum_i c_i\ell_{z_i}$ (with $c_i\ge0$, $\sum_ic_i=1$) to the minimizer of
$F$ of minimal Frobenius norm. Weighted selection with budget $n$ is the
value
\[
  L^{\star}_D(n;\mathrm{weighted})
  =\inf_{\substack{z_{j_1},\dots,z_{j_n}\in D\\
      F\in\conv(\ell_{z_{j_1}},\dots,\ell_{z_{j_n}})}}
  L_D\bigl(A(F)\bigr),
\]
and
\[
  \Fw(d,m,n)=\sup_{D\subseteq\R^d\times\R^m}
  \frac{L^{\star}_D(n;\mathrm{weighted})}{\Lstar},
  \qquad
  \nstar(d,m)=\min\{n:\Fw(d,m,n)=1\}.
\]
Following the convention of \cite[Theorem~1]{HMSYlong} we set the ratio to
$1$ when numerator and denominator are both $0$, and to $\infty$ when only
the denominator is $0$. We write the inner optimization as an infimum (as in
\cite{HMSYlong}); all our upper-bound certificates are attained, and our
lower bounds bound the infimum, so nothing depends on attainment.

Repetitions among the $z_{j_1},\dots,z_{j_n}$ are allowed. The following
normalization is immediate and used silently.

\begin{lemma}[Selection semantics]\label{lem:support}
Selections of budget $n$ are in value-preserving correspondence with weight
vectors $c\in\R^N_{\ge0}$, $\sum_ic_i=1$, $|\supp c|\le n$: repeated picks
merge weights, zero-weight picks can be discarded, and any $c$ with
$|\supp c|\le n$ is realizable with exactly $n$ slots by repeating a chosen
point and splitting its weight.
\end{lemma}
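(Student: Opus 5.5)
The plan is to set up two maps and check that each preserves the value $L_D(A(F))$. In one direction, a selection $(z_{j_1},\dots,z_{j_n};F)$ with $F=\sum_{s=1}^n \lambda_s\ell_{z_{j_s}}$ (with $\lambda\in\Delta_n$) goes to the aggregated weight vector $c_i=\sum_{s:\,j_s=i}\lambda_s$. In the other, a weight vector $c$ goes to a padded selection. The key observation is that the learner $A$ depends only on the function $F$, not on how it was written. So it suffices to check that both maps produce the \emph{same function} $F=\sum_i c_i\ell_{z_i}$ on $\R^{m\times d}$. Equality of $L_D(A(F))$ then holds automatically, and so does equality of the infima defining $L^\star_D(n;\mathrm{weighted})$.

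Forward direction. Given a selection, regroup the sum $\sum_s\lambda_s\ell_{z_{j_s}}$ by the index $i$ of the underlying data point; this is where repeated picks merge their weights. The result is exactly $\sum_i c_i\ell_{z_i}$, with $c\ge0$ and $\sum_i c_i=\sum_s\lambda_s=1$. Indices with $c_i=0$ contribute nothing to $F$, so zero-weight picks may be discarded. Since at most $n$ distinct indices appear among $j_1,\dots,j_n$, we get $|\supp c|\le n$. One subtlety should be noted: if $D$ is a multiset with repeated elements $z_i=z_{i'}$, we index by position, and any choice of representative gives the same $F$.

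Backward direction. Given $c$ with $|\supp c|=k\le n$ (note $k\ge1$, since $\sum_i c_i=1$), list the support as $i_1,\dots,i_k$. Fill the $n$ slots with $i_1,\dots,i_k$ followed by $n-k$ extra copies of $i_1$. Put weight $c_{i_t}$ on slot $t$ for $2\le t\le k$. Split $c_{i_1}$ equally among the $n-k+1$ copies of $i_1$, or alternatively assign it all to one copy and give the others weight $0$, which is also allowed in a convex combination. Regrouping as in the forward direction recovers $F=\sum_i c_i\ell_{z_i}$, and the slot weights lie in $\Delta_n$.

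Together these give a surjection in each direction between the two index sets. Both maps preserve $F$, and hence preserve the objective $L_D(A(F))$, so the two infima coincide. No step is a genuine obstacle. The only points needing care are the bookkeeping for multisets (indexing by position) and the remark that $A$ is a function of $F$ alone, i.e.\ that the minimal-Frobenius-norm minimizer is well defined for every convex combination of the $\ell_{z_i}$.
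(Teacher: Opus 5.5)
Your argument is correct and matches the paper, which treats this normalization as immediate: its three clauses (merging repeated picks, discarding zero-weight picks, padding by repeating a point and splitting its weight) are exactly your forward and backward maps, and value preservation holds because both descriptions give the same function $F$ and $A$ depends only on $F$. A small wording point: your backward map is not surjective onto selections, but it does not need to be, since two value-preserving maps in opposite directions already make the sets of attainable values $L_D(A(F))$, and hence the infima, coincide.
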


\subsection{Row decomposition and minimum-norm geometry}

\begin{lemma}[Row decomposition]\label{lem:rows}
Write $w_j^\top$ for the $j$-th row of $W$ and $y_{ij}$ for the $j$-th entry
of $y_i$. Then
$F_c(W)=\sum_{j=1}^m\sum_ic_i(\langle w_j,x_i\rangle-y_{ij})^2$ and
$\Wnorm W_F^2=\sum_j\Wnorm{w_j}^2$. Hence the minimizer set of $F_c$ is the
Cartesian product over rows of scalar weighted least-squares solution sets,
and $A(F_c)$ consists of the minimal-$\ell_2$-norm scalar solutions row by
row --- \emph{with the single shared weight vector $c$}.
\end{lemma}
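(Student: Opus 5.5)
The plan is to expand the Frobenius objective entrywise and regroup by rows, then read off the structure of the minimizer set. For $W\in\R^{m\times d}$ with rows $w_1^\top,\dots,w_m^\top$, the $j$-th coordinate of $Wx_i$ is $\langle w_j,x_i\rangle$. Hence
\[
\ell_{z_i}(W)=\Wnorm{Wx_i-y_i}_2^2=\sum_{j=1}^m(\langle w_j,x_i\rangle-y_{ij})^2 .
\]
Multiplying by $c_i$, summing over $i$ and exchanging the two finite sums gives the stated formula
\[
F_c(W)=\sum_{j=1}^m G_j(w_j),\qquad G_j(w):=\sum_i c_i(\langle w,x_i\rangle-y_{ij})^2 .
\]
Similarly, $\Wnorm W_F^2=\sum_{j,k}W_{jk}^2=\sum_j\Wnorm{w_j}^2$, which is the same regrouping of entries by rows.

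Next I would establish the product structure of the minimizer set. Since $F_c$ is a sum of functions of disjoint blocks of variables, $\inf F_c=\sum_j\inf G_j$. Each $G_j$ is a nonnegative convex quadratic, so its infimum is attained, and its minimizer set $S_j$ is the nonempty affine set of solutions of the weighted normal equations $\sum_ic_ix_ix_i^\top w=\sum_ic_iy_{ij}x_i$. Now take any $W$. We have $F_c(W)-\min F_c=\sum_j\bigl(G_j(w_j)-\min G_j\bigr)$, a sum of nonnegative terms. This sum vanishes exactly when every term vanishes, i.e.\ exactly when $w_j\in S_j$ for all $j$. Thus $\arg\min F_c=S_1\times\cdots\times S_m$.

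For the minimum-norm claim, I minimize $\sum_j\Wnorm{w_j}^2$ over this product. The objective is separable and the constraint set is a product, so the problem decouples: $W$ is optimal iff each $w_j$ minimizes $\Wnorm{w}^2$ over $S_j$. Each $S_j$ is a nonempty closed convex (affine) set, so this row minimizer exists and is unique, namely the projection of $0$ onto $S_j$. The resulting matrix is the scalar minimum-norm weighted least-squares solution, computed row by row. Every $G_j$ uses the same weights $c_i$ and the same features $x_i$; only the targets $y_{ij}$ change with $j$. This proves the claim that the weight vector $c$ is shared across rows. Concretely, $A(F_c)$ has rows $\bigl(\sum_ic_ix_ix_i^\top\bigr)^{+}\sum_ic_iy_{ij}x_i$, although the statement does not require this formula.

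There is no real obstacle here. The only point needing care is that "minimizer set is a product" relies on each row infimum being attained, so that the additive decomposition of the optimality gap applies. That attainment is guaranteed because each $G_j$ is a convex quadratic bounded below.
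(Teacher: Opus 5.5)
Your proposal is correct and follows the paper's argument: both the objective and the squared Frobenius norm are additive across rows, and the minimum-norm element of the product of row solution sets is found row by row. The paper states this in one line; you supply the details it leaves implicit, namely attainment of each row minimum, the decomposition of the optimality gap, and the explicit pseudoinverse formula.
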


\begin{proof}
Both the objective and the squared Frobenius norm are additive across rows,
and a product set is minimized in norm coordinate-wise.
\end{proof}

Let $T=\spn\{x_1,\dots,x_N\}$, $r=\dim T$, and let $P_T$ denote the
orthogonal projection onto $T$. Let $W^\circ$ be the minimal-Frobenius-norm
full-data minimizer, and set $\rho_i=W^\circ x_i-y_i$ (residuals) and
$M_i=\rho_ix_i^\top\in\R^{m\times d}$ (residual dyads).

\begin{lemma}[Full-data geometry]\label{lem:geometry}
$W^\circ=W^\circ P_T$; the first-order condition
$\sum_{i=1}^NM_i=0$ holds; and for every $H\in\R^{m\times d}$,
\begin{equation}\label{eq:lossexp}
  L_D(W^\circ+H)=\Lstar+\frac1N\sum_{i=1}^N\Wnorm{Hx_i}^2 .
\end{equation}
Consequently the full-data minimizer set is
$\{W^\circ+H: HP_T=0\}$; if $T=\R^d$ the minimizer is unique.
\end{lemma}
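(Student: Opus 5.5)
The plan is to reduce everything to the row decomposition of Lemma~\ref{lem:rows} and then expand the quadratic around $W^\circ$.

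\emph{Step 1: $W^\circ=W^\circ P_T$.} Every loss $\ell_{z_i}$ depends on $W$ only through the products $Wx_i$, and $x_i\in T$. So replacing $W$ by $WP_T$ leaves $L_D$ unchanged. On the other hand, $\Wnorm{WP_T}_F\le\Wnorm{W}_F$, with equality only if $W(I-P_T)=0$, since
\[
\Wnorm W_F^2=\Wnorm{WP_T}_F^2+\Wnorm{W(I-P_T)}_F^2 .
\]
Because $W^\circ$ is the minimal-norm minimizer, this forces $W^\circ=W^\circ P_T$.

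\emph{Step 2: first-order condition.} $L_D$ is a convex quadratic in $W$, and its gradient is
\[
\nabla L_D(W)=\frac2N\sum_i (Wx_i-y_i)x_i^\top .
\]
At any minimizer this gradient vanishes. Evaluating at $W^\circ$ gives $\sum_i\rho_ix_i^\top=\sum_iM_i=0$.

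\emph{Step 3: the expansion \eqref{eq:lossexp}.} For any $H$, write $(W^\circ+H)x_i-y_i=\rho_i+Hx_i$ and expand:
\[
\Wnorm{\rho_i+Hx_i}^2=\Wnorm{\rho_i}^2+2\langle\rho_i,Hx_i\rangle+\Wnorm{Hx_i}^2 .
\]
The cross term satisfies $\langle\rho_i,Hx_i\rangle=\tr(H^\top\rho_ix_i^\top)=\langle H,M_i\rangle_F$. Summing over $i$, the cross terms total $\langle H,\sum_iM_i\rangle_F$, which is $0$ by Step 2. Dividing by $N$ yields \eqref{eq:lossexp}, where $\frac1N\sum_i\Wnorm{\rho_i}^2=\Lstar$ because $W^\circ$ is a minimizer.

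\emph{Step 4: the minimizer set.} By \eqref{eq:lossexp}, $W^\circ+H$ is a minimizer if and only if $Hx_i=0$ for all $i$. Since the $x_i$ span $T$, this is equivalent to $H$ vanishing on $T$, i.e.\ $HP_T=0$. If $T=\R^d$ then $P_T=I$, so $H=0$ and the minimizer is unique.

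Nothing here is a real obstacle. The only point needing care is the order of the argument: Step 2 must rest solely on $W^\circ$ being a minimizer (stationarity of a differentiable convex function), and must not presuppose the characterization of the minimizer set derived in Step 4.
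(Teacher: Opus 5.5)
Your proof is correct and follows the paper's argument step for step. It uses the orthogonal splitting $\Wnorm W_F^2=\Wnorm{WP_T}_F^2+\Wnorm{W(I-P_T)}_F^2$ to get $W^\circ=W^\circ P_T$, stationarity of the convex quadratic for $\sum_iM_i=0$, vanishing of the cross term $\frac2N\langle H,\sum_iM_i\rangle_F$ in the expansion, and then reads off the minimizer set from $\sum_i\Wnorm{Hx_i}^2=0$. The announced reduction to Lemma~\ref{lem:rows} is never actually used, and none is needed.
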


\begin{proof}
Predictions depend on $W$ only through $WP_T$, and
$\Wnorm W_F^2=\Wnorm{WP_T}_F^2+\Wnorm{WP_{T^\perp}}_F^2$, so minimality of
the norm forces $W^\circ P_{T^\perp}=0$. The condition $\sum_iM_i=0$ is the
vanishing gradient of the smooth convex $L_D$ at $W^\circ$. Expanding
$L_D(W^\circ+H)$, the cross term is
$\frac2N\langle H,\sum_iM_i\rangle_F=0$, giving \eqref{eq:lossexp}; equality
in \eqref{eq:lossexp} holds iff $Hx_i=0$ for all $i$, i.e.\ $HP_T=0$.
\end{proof}

\begin{lemma}[Exact certificate]\label{lem:certificate}
Let $c\in\R^N_{\ge0}$ have support $S$ with $c_i>0$ on $S$. If
\begin{equation}\label{eq:cert}
  \sum_{i\in S}c_iM_i=0
  \qquad\text{and}\qquad
  \spn\{x_i:i\in S\}=T ,
\end{equation}
then $A(F_c)=W^\circ$; in particular the selection realizes the full-data
optimal loss, and the ratio is $1$ (also when $\Lstar=0$, by the $0/0$
convention).
\end{lemma}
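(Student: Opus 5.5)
By Lemma~\ref{lem:rows}, $A(F_c)$ can be computed row by row using the shared weights $c$. So the plan is to show two things. First, $W^\circ$ minimizes $F_c$. Second, among all minimizers of $F_c$ it has the smallest Frobenius norm. Together these give $A(F_c)=W^\circ$.

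\emph{Step 1 (first-order condition).} The weighted objective $F_c(W)=\sum_{i\in S}c_i\Wnorm{Wx_i-y_i}^2$ is a convex quadratic in $W$. Its gradient at $W^\circ$ is $2\sum_{i\in S}c_i(W^\circ x_i-y_i)x_i^\top=2\sum_{i\in S}c_iM_i$, and this vanishes by the first half of \eqref{eq:cert}. Since $F_c$ is a smooth convex function, a vanishing gradient means $W^\circ$ is a global minimizer of $F_c$.

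\emph{Step 2 (identifying the minimizer set).} Expanding exactly as in the proof of Lemma~\ref{lem:geometry}, the cross term again vanishes, and we get
\[
F_c(W^\circ+H)=F_c(W^\circ)+\sum_{i\in S}c_i\Wnorm{Hx_i}^2 .
\]
Because every $c_i>0$ on $S$, equality holds if and only if $Hx_i=0$ for all $i\in S$. By the spanning half of \eqref{eq:cert}, this is equivalent to $HP_T=0$. Hence the minimizer set of $F_c$ is $\{W^\circ+H:HP_T=0\}$, which is exactly the full-data minimizer set described in Lemma~\ref{lem:geometry}.

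\emph{Step 3 (minimal norm).} Lemma~\ref{lem:geometry} gives $W^\circ=W^\circ P_T$. For any $H$ with $HP_T=0$, the matrices $W^\circ$ and $H$ are Frobenius-orthogonal, since $\langle W^\circ,H\rangle_F=\tr(P_TW^{\circ\top}H)=\tr(W^{\circ\top}HP_T)=0$. Therefore $\Wnorm{W^\circ+H}_F^2=\Wnorm{W^\circ}_F^2+\Wnorm H_F^2$, which is minimized uniquely at $H=0$. This gives $A(F_c)=W^\circ$, so $L_D(A(F_c))=\Lstar$. The ratio is then $1$; when $\Lstar=0$ this follows from the $0/0$ convention.

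The only delicate point is Step 2. There we need both positivity of the weights on $S$ and the spanning condition, so that the minimizer set of $F_c$ is not strictly larger than the full-data one. If it were larger, the minimum-norm element could differ from $W^\circ$. Everything else is a routine expansion of the quadratic.
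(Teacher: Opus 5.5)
Your proposal is correct and follows essentially the same route as the paper: expand $F_c(W^\circ+H)$ so that the cross term $2\langle H,\sum_{i\in S}c_iM_i\rangle_F$ vanishes, use positivity of the weights and the spanning condition to identify the weighted minimizer set with $W^\circ+\{H:HP_T=0\}$, and conclude via Frobenius-orthogonality of $W^\circ=W^\circ P_T$ and $H$. Your separate first-order-condition step and the appeal to the row decomposition are harmless but redundant, since the expansion in your Step~2 already shows $W^\circ$ is a minimizer.
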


\begin{proof}
Using \eqref{eq:cert}, for any $H$,
$F_c(W^\circ+H)-F_c(W^\circ)
 =2\langle H,\sum_{i\in S}c_iM_i\rangle_F+\sum_{i\in S}c_i\Wnorm{Hx_i}^2
 =\sum_{i\in S}c_i\Wnorm{Hx_i}^2$,
which vanishes iff $Hx_i=0$ on $S$, iff $HP_T=0$ (the support spans $T$).
So the weighted minimizer set is $W^\circ+\{H:HP_T=0\}$, the same set as the
full-data minimizer set. Since $W^\circ=W^\circ P_T$ and $H=HP_{T^\perp}$ are
Frobenius-orthogonal, the unique minimal-norm element is $W^\circ$.
\end{proof}

We call a weight vector satisfying \eqref{eq:cert} a \emph{spanning zero
certificate}. Define
\begin{equation}\label{eq:tau}
  \tau(D)\;=\;\min\bigl\{|\supp c|:\ c\ \text{is a spanning zero
  certificate}\bigr\}.
\end{equation}

\begin{lemma}[Scalar embedding]\label{lem:embed}
If all $m$ columns of the labels are equal, $y_i=t_i\mathbf 1_m$, then each
row of the problem is the same scalar problem and
$L_{\widetilde D}(\cdot)=m\,L_{D_0}(\cdot)$ termwise. Consequently
$\Fw(d,m,n)\ \ge\ F_w(d,n)$ for all $n$, where $F_w(d,n)$ denotes the scalar
weighted profile of \cite{HMSYnote}.
\end{lemma}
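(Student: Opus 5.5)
The plan is to take the scalar dataset $D_0=\{(x_i,t_i)\}_{i=1}^N\subseteq\R^d\times\R$, lift it to $\widetilde D=\{(x_i,t_i\mathbf 1_m)\}$, and then compare both the full-data optimum and every selection value on $\widetilde D$ with the corresponding scalar quantities for $D_0$. The termwise identity is immediate. For every $W$ with rows $w_1,\dots,w_m$,
\[
\ell_{(x_i,t_i\mathbf 1_m)}(W)=\sum_{j=1}^m(\langle w_j,x_i\rangle-t_i)^2 ,
\]
which is the row decomposition of Lemma~\ref{lem:rows} with $y_{ij}=t_i$ for all $j$. In particular, if $W=\mathbf 1_m w^\top$ has all rows equal to $w$, this equals $m\,\ell^{(0)}_{(x_i,t_i)}(w)$. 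I read ``$L_{\widetilde D}(\cdot)=m\,L_{D_0}(\cdot)$ termwise'' in exactly this sense, restricted to the diagonal $W=\mathbf 1_m w^\top$. For general $W$ one instead has $L_{\widetilde D}(W)=\sum_j L_{D_0}(w_j)$.

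First, I compare the optima. Since $L_{\widetilde D}(W)=\sum_jL_{D_0}(w_j)\ge m\min_wL_{D_0}(w)$, with equality at $W=\mathbf 1_m (w^\circ)^\top$, we get $L^\star_{\widetilde D}=m\,L^\star_{D_0}$.

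Second, I compare the learners. Fix a weight vector $c$ with $|\supp c|\le n$; by Lemma~\ref{lem:support}, selections correspond exactly to such $c$, and the correspondence uses the same index set for both datasets. By Lemma~\ref{lem:rows}, $A(F_c)$ for $\widetilde D$ is obtained row by row, each row being the minimal-norm minimizer of the \emph{same} scalar weighted objective $\sum_ic_i(\langle w,x_i\rangle-t_i)^2$. Each row is therefore the scalar min-norm solution $a_c:=A_0(F^{(0)}_c)$, so $A(F_c)=\mathbf 1_m a_c^\top$. This gives $L_{\widetilde D}(A(F_c))=m\,L_{D_0}(a_c)$.

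Taking the infimum over $c$ gives $L^\star_{\widetilde D}(n;\mathrm{weighted})=m\,L^\star_{D_0}(n;\mathrm{weighted})$. Hence the ratios for $\widetilde D$ and $D_0$ coincide. This includes the degenerate conventions: both optima vanish together, and both numerators vanish together, because the factor $m>0$ is common to all four quantities. Taking the supremum over scalar datasets $D_0$ then yields $\Fw(d,m,n)\ge F_w(d,n)$.

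The only point needing care is the uniqueness of the min-norm row solution, which guarantees that all rows are literally equal. This is handled by Lemma~\ref{lem:rows}: the product structure means the min-norm element is taken independently in each coordinate block, and each block solves the identical scalar problem. I do not expect a genuine obstacle beyond stating this clearly.
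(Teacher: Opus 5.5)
Your proposal is correct and follows the paper's own argument. By Lemma~\ref{lem:rows}, every row of the weighted min-norm ERM on the lifted dataset is the same scalar min-norm solution with the shared weights, so the selection value and the full-data optimum both scale by the common factor $m$, and the ratio (degenerate conventions included) is unchanged; you spell out the steps the paper's one-line proof leaves implicit.
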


\begin{proof}
By Lemma~\ref{lem:rows} every row runs the identical scalar min-norm ERM with
the shared weights, so numerator and denominator both scale by $m$.
\end{proof}

%% file: sec-threshold.tex
\section{The exact threshold \texorpdfstring{$(m+1)d$}{(m+1)d}}
\label{sec:threshold}

\subsection{Upper bound: fixed-basis conic compression}

\begin{lemma}[Fixed-basis conic compression]\label{lem:compression}
Let $a_1,\dots,a_N$ be elements of a finite-dimensional real vector space
with $\sum_{i=1}^Na_i=0$, and let $B\subseteq[N]$. Put
$q_B=\dim\spn\{a_i:i\notin B\}$. Then there exist coefficients $c_i\ge0$ with
$c_b>0$ for all $b\in B$, $\sum_ic_ia_i=0$, and
$|\supp c|\le|B|+q_B$.
\end{lemma}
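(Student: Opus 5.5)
The plan is to start from the all-ones certificate and compress only the part outside $B$. Since $\sum_{i=1}^N a_i=0$, put $v=-\sum_{b\in B}a_b=\sum_{i\notin B}a_i$. Then $v$ lies in the convex cone $K=\cone\{a_i:i\notin B\}$, which sits inside the linear subspace $V=\spn\{a_i:i\notin B\}$ of dimension $q_B$. (If $B=\emptyset$ the statement is trivial with $c=0$, so assume $B\neq\emptyset$; if $B=[N]$ then $v=0$ and $c=\mathbf 1_B$ works.)

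The key step is conic Carath\'eodory applied inside $V$. Because $v\in K\subseteq V$ and $\dim V=q_B$, there is a subset $J\subseteq[N]\setminus B$ with $|J|\le q_B$ and the vectors $\{a_j:j\in J\}$ linearly independent, together with coefficients $\lambda_j\ge0$ such that $v=\sum_{j\in J}\lambda_ja_j$. I will give the standard argument for completeness. Take any conic representation of $v$ with minimal support. If the generators in it are linearly dependent, choose a dependency $\sum\mu_ja_j=0$ that has some $\mu_j>0$. Subtracting $t\mu$ from the coefficients, with $t$ the largest value keeping them nonnegative, zeroes out one coefficient, contradicting minimality. So the minimal support is linearly independent inside $V$, and therefore has at most $q_B$ elements.

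Now define $c_b=1$ for $b\in B$, $c_j=\lambda_j$ for $j\in J$, and $c_i=0$ otherwise. Then
\[
\sum_i c_ia_i=\sum_{b\in B}a_b+\sum_{j\in J}\lambda_ja_j=-v+v=0.
\]
All coefficients are nonnegative, and $c_b=1>0$ on $B$. The support is contained in $B\cup J$, so its size is at most $|B|+q_B$.

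No step is a real obstacle. The one point needing care is to run Carath\'eodory in the span $V$ rather than in the ambient space, which is what produces the bound $q_B$ instead of the full dimension. Since $B$ and $J$ are disjoint, the positivity of the weights on $B$ is unaffected by the compression.
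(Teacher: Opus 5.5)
Your proof is correct and is the paper's argument: set $c_b=1$ on $B$, then use conic Carath\'eodory to write $-\sum_{b\in B}a_b=\sum_{i\notin B}a_i$ as a positive combination of at most $q_B$ linearly independent $a_i$ with $i\notin B$. The only differences are that you prove conic Carath\'eodory inline where the paper cites it, and you treat $B=\varnothing$ and $B=[N]$ separately, which is harmless but unnecessary because the general argument already covers both.
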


\begin{proof}
From the global zero sum, $-\sum_{b\in B}a_b=\sum_{i\notin B}a_i$, which is a
nonnegative combination of $\{a_i\}_{i\notin B}$ and hence lies in
$\cone\{a_i:i\notin B\}$. By conic Carath\'eodory (see e.g.\
\cite[Prop.~1.2.1(a)]{Bertsekas}), every element of this cone is a positive
combination of a \emph{linearly independent} subfamily, so
$-\sum_{b\in B}a_b=\sum_{i\in C}\alpha_ia_i$ with $\alpha_i>0$,
$C\subseteq[N]\setminus B$ and $|C|\le q_B$ (if the target is $0$, take
$C=\varnothing$). Set $c_b=1$ on $B$, $c_i=\alpha_i$ on $C$, and $0$
elsewhere.
\end{proof}

\begin{theorem}[Data-dependent upper bound]\label{thm:upper}
Every dataset with feature rank $r\ge1$ admits a spanning zero certificate of
support at most $(m+1)r$; hence $\tau(D)\le(m+1)r$ and
$L^{\star}_D\bigl((m+1)d;\mathrm{weighted}\bigr)=\Lstar$ for every dataset.
If $r=0$, a single point suffices.
\end{theorem}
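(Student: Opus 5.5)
The plan is to apply Lemma~\ref{lem:compression} to the residual dyads $a_i=M_i\in\R^{m\times d}$. These satisfy the zero-sum hypothesis $\sum_iM_i=0$ by Lemma~\ref{lem:geometry}. For the pinned set I take a feature basis: choose $B\subseteq[N]$ with $|B|=r$ such that $\{x_b:b\in B\}$ is a basis of $T$, which is possible since $T$ is spanned by the $x_i$.

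The key dimension count is $q_B\le mr$. Every dyad $M_i=\rho_ix_i^\top$ has its rows in $\spn\{x_i\}\subseteq T$, so all $M_i$ lie in the subspace $\{M\in\R^{m\times d}:M=MP_T\}$. That subspace is isomorphic to $\R^{m\times r}$ and has dimension $mr$. Hence $q_B=\dim\spn\{M_i:i\notin B\}\le mr$.

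Lemma~\ref{lem:compression} then gives $c\ge0$ with $c_b>0$ on $B$, $\sum_ic_iM_i=0$, and $|\supp c|\le r+mr=(m+1)r$. The support contains $B$, so the $x_i$ on the support span $T$. After normalizing $c$ to sum to $1$ (possible since $c\ne0$ when $r\ge1$), $c$ is a spanning zero certificate, and $\tau(D)\le(m+1)r$.

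For the loss statement, Lemma~\ref{lem:certificate} gives $A(F_c)=W^\circ$, so the selection attains $\Lstar$. Since $r\le d$, we have $(m+1)r\le(m+1)d$. Lemma~\ref{lem:support} pads the selection to exactly $(m+1)d$ slots, so $L^\star_D((m+1)d;\mathrm{weighted})\le\Lstar$. The reverse inequality is trivial because $\Lstar$ is the minimum of $L_D$.

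In the case $r=0$, all $x_i=0$ and $T=\{0\}$. Every $W$ has the same loss, and the min-norm ERM of any single point is $W=0=W^\circ$. So one point suffices; formally this is Lemma~\ref{lem:certificate} with $c=e_1$, where $M_1=0$ and the span condition holds trivially.

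The only point needing care is the dimension bound $q_B\le mr$, and in particular remembering that the $M_i$ live in the $mr$-dimensional space of matrices whose rows lie in $T$, not in all of $\R^{m\times d}$. Everything else is direct invocation of earlier lemmas.
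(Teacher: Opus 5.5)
Your proposal is correct and follows the paper's proof essentially step for step: you pin a feature basis $B$, apply Lemma~\ref{lem:compression} to the residual dyads with $q_B\le mr$ (all $M_i$ have rows in $T$, so they lie in the $mr$-dimensional space $\R^m\otimes T$), and then invoke Lemmas~\ref{lem:certificate} and~\ref{lem:support}. Your treatment of $r=0$ via Lemma~\ref{lem:certificate} with $c=e_1$ is just a more formal version of the paper's direct remark that both rules return $W=0$.
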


\begin{proof}
Pick $B\subseteq[N]$ with $|B|=r$ such that $\{x_b\}_{b\in B}$ is a basis of
$T$ (possible: the features lie in and span $T$). Apply
Lemma~\ref{lem:compression} to the residual dyads $a_i=M_i$, whose global sum
vanishes (Lemma~\ref{lem:geometry}). Every $M_i=\rho_ix_i^\top$ has all rows
proportional to $x_i^\top\in T$, so
$\spn\{M_i:i\notin B\}\subseteq\R^m\otimes T$, whence $q_B\le mr$. The
resulting $c$ is strictly positive on a set $S\supseteq B$ of size at most
$r+mr=(m+1)r$, satisfies $\sum_{i\in S}c_iM_i=0$, and
$\spn\{x_i:i\in S\}\supseteq\spn\{x_b:b\in B\}=T$; since all features lie in
$T$, the span equals $T$. Normalizing $c$ to sum $1$ (its support is
nonempty) and invoking Lemma~\ref{lem:certificate} gives $A(F_c)=W^\circ$,
so the selected model attains the optimal full-data loss; by
Lemma~\ref{lem:support} the support fits in $(m+1)d\ge (m+1)r$ slots. If
$r=0$ then all $x_i=0$: every $W$ has the same loss on every objective, and
$A$ returns $W=0$ in both cases; one point suffices.
\end{proof}

\subsection{Lower bound: the axial simplex instance}

Fix $d,m\ge1$ and let $u=\mathbf 1_m$,
\[
  v_j=e_j\ (1\le j\le m),\qquad v_{m+1}=-u ,
\]
so $\sum_{j=1}^{m+1}v_j=0$, and the unique linear dependence among
$v_1,\dots,v_{m+1}$ has all coefficients equal; in particular no proper
subset of $\{v_j\}$ admits a nonzero nonnegative zero-sum, equivalently
$0\notin\conv$ of any proper subset.

\begin{definition}[Integer axial instance]\label{def:axial}
$D_{d,m}$ consists of the $N=(m+1)d$ points
$z_{ij}=(e_i,\,u-v_j)$ for $i\in[d]$, $j\in[m+1]$; explicitly
$y_{ij}=u-e_j$ for $j\le m$ and $y_{i,m+1}=2u$.
\end{definition}

\begin{lemma}\label{lem:axialbasic}
For $D_{d,m}$: the unique full-data minimizer is $W^\star=u\mathbf 1_d^\top$,
with residual $v_j$ at $z_{ij}$ and
$\Lstar=\frac{2m}{m+1}$. Moreover, for any weight vector $c$ with per-axis
totals $t_i=\sum_jc_{ij}$, the returned model $\widehat W=A(F_c)$ satisfies:
column $i$ of $\widehat W$ equals the conditional weighted label mean
$u-a_i$ with $a_i=\sum_j\frac{c_{ij}}{t_i}v_j$ when $t_i>0$, and equals $0$
when $t_i=0$; and, writing $\delta_i=\widehat We_i-u$,
\begin{equation}\label{eq:axialloss}
  L_D(\widehat W)=\Lstar+\frac1d\sum_{i=1}^d\Wnorm{\delta_i}^2 .
\end{equation}
\end{lemma}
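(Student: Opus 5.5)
The plan is to compute everything directly, exploiting the fact that the features are standard basis vectors, so the problem decouples across axes.

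\emph{Step 1: the minimizer.} Since $x_{ij}=e_i$, the prediction $We_i$ is the $i$-th column $w^{(i)}$ of $W$. The loss therefore splits as
\[
L_D(W)=\frac1N\sum_{i}\sum_j\Wnorm{w^{(i)}-y_{ij}}^2 .
\]
Each column is then a separate mean-estimation problem. Its unique minimizer is the label mean $\frac1{m+1}\sum_j(u-v_j)=u$, because $\sum_jv_j=0$. Uniqueness also follows from Lemma~\ref{lem:geometry}, since $T=\R^d$. The residuals are $W^\star e_i-y_{ij}=u-(u-v_j)=v_j$, as claimed. For the optimal loss,
\[
\Lstar=\frac{1}{(m+1)d}\cdot d\sum_j\Wnorm{v_j}^2=\frac{1}{m+1}(m+m)=\frac{2m}{m+1},
\]
using $\Wnorm{e_j}^2=1$ and $\Wnorm{u}^2=m$.

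\emph{Step 2: the returned model.} For weights $c$, $F_c(W)=\sum_i\sum_jc_{ij}\Wnorm{w^{(i)}-y_{ij}}^2$ again splits by columns. The Frobenius norm is also additive over columns, so the minimal-norm minimizer is obtained column by column. This is the same product-set argument as in Lemma~\ref{lem:rows}, applied to columns here.

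If $t_i>0$, the column-$i$ objective is strictly convex. Its unique minimizer is the weighted mean $\sum_j\frac{c_{ij}}{t_i}(u-v_j)=u-a_i$.

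If $t_i=0$, column $i$ does not appear in the objective. Its solution set is all of $\R^m$, and the minimal-norm choice is $0$.

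\emph{Step 3: the loss identity.} Apply \eqref{eq:lossexp} with $H=\widehat W-W^\star$. Then $Hx_{ij}=He_i=\delta_i$, which gives
\[
L_D(\widehat W)=\Lstar+\frac1{(m+1)d}\sum_i(m+1)\Wnorm{\delta_i}^2 ,
\]
and this is \eqref{eq:axialloss}.

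All steps are routine. The only point needing care is the $t_i=0$ case, where the minimal-norm convention must be checked, namely that the column decomposition is legitimate for the Frobenius norm. That holds since $\Wnorm{W}_F^2=\sum_i\Wnorm{w^{(i)}}^2$ and the solution set is a product over columns.
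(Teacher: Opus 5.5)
Your proposal is correct and follows essentially the paper's argument: the features are standard basis vectors, so the problem decouples column by column; a column with $t_i>0$ returns the conditional weighted mean; the Frobenius rule zeroes an unconstrained column; and the residual and loss values are then computed directly. The only cosmetic difference is that you get \eqref{eq:axialloss} from \eqref{eq:lossexp} with $H=\widehat W-W^\star$, whereas the paper expands $\sum_j\Wnorm{\delta_i+v_j}^2=2m+(m+1)\Wnorm{\delta_i}^2$ directly using $\sum_jv_j=0$; both rest on the same cross-term cancellation.
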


\begin{proof}
The full Gram matrix is $\sum_{ij}e_ie_i^\top=(m+1)I\succ0$ and
$\sum_{ij}v_je_i^\top=0$, so $W^\star$ is the unique minimizer;
$\Lstar=\frac{d\sum_j\Wnorm{v_j}^2}{N}=\frac{2md}{(m+1)d}$. Since the
features are standard basis vectors, the weighted normal equations decouple
across columns; a positive-weight column returns the conditional mean, and a
zero-weight column is unconstrained, hence zeroed by the Frobenius rule
(column-wise decomposition of the norm). For \eqref{eq:axialloss}, expand
$\Wnorm{\delta_i+v_j}^2$ and use $\sum_jv_j=0$:
$\sum_j\Wnorm{v_j+\delta_i}^2=2m+(m+1)\Wnorm{\delta_i}^2$.
\end{proof}

\begin{lemma}[Facet distances]\label{lem:facets}
Let $Q_m=m^2+m-1$. The squared distance from the origin to the convex hull of
any proper subset of $\{v_1,\dots,v_{m+1}\}$ is at least $1/Q_m$; the minimum
is attained on the facets omitting some $e_k$, while the facet omitting
$v_{m+1}$ has squared distance $1/m$.
\end{lemma}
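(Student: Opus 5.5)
The plan is to reduce to facets, then solve each facet's nearest-point problem explicitly by symmetry and a one-variable minimization. Every proper subset of $\{v_1,\dots,v_{m+1}\}$ is contained in some $m$-element subset, and the convex hull can only grow when points are added. So the distance from the origin to the hull of a proper subset is at least the distance to the hull of some facet $\conv\{v_j:j\neq k\}$, $k\in[m+1]$. It therefore suffices to compute the squared distance for each of the $m+1$ facets and compare. The case $m=1$ is trivial: the facets are $\{e_1\}$ and $\{-1\}$, both at squared distance $1=1/Q_1=1/m$. So assume $m\ge2$ below.

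\emph{Facet omitting $v_{m+1}$.} This is the standard simplex $\conv\{e_1,\dots,e_m\}$, lying in the hyperplane $\langle u,p\rangle=1$. Its nearest point to the origin is the barycenter $u/m$, which lies in the simplex. This gives squared distance $\Wnorm{u/m}^2=1/m$.

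\emph{Facet omitting $e_k$ ($k\le m$).} Write a point as $p=\sum_{j\neq k}\lambda_je_j-\mu u$ with $\lambda_j,\mu\ge0$ and $\sum_{j\ne k}\lambda_j+\mu=1$. Its coordinates are $p_k=-\mu$ and $p_j=\lambda_j-\mu$ for $j\neq k$, so
\[
\Wnorm p^2=\mu^2+\sum_{j\neq k}(\lambda_j-\mu)^2 .
\]
This is a convex function of the weights, symmetric in $(\lambda_j)_{j\ne k}$. For fixed $\mu$, Jensen (or averaging over permutations) shows the minimum over $\lambda$ is attained at $\lambda_j=(1-\mu)/(m-1)$. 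This reduces the problem to minimizing
\[
g(\mu)=\mu^2+\frac{(1-m\mu)^2}{m-1}, \qquad \mu\in[0,1].
\]
The stationary point is $\mu^\ast=m/Q_m$, which lies in $(0,1)$, so all weights are positive and the point is feasible. There $1-m\mu^\ast=(m-1)/Q_m$, hence
\[
g(\mu^\ast)=\frac{m^2+(m-1)}{Q_m^2}=\frac1{Q_m}.
\]

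Finally, compare the two values: $Q_m=m^2+m-1\ge m$, so $1/Q_m\le 1/m$. Hence the minimum over all facets is $1/Q_m$, attained exactly on the facets omitting some $e_k$, and every proper subset has squared distance at least $1/Q_m$.

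The only step needing care is justifying that the symmetric choice of $\lambda$ is optimal and the resulting point is feasible. Convexity plus symmetry handles the first, and the explicit value $\mu^\ast\in(0,1)$ handles the second. The rest is routine computation.
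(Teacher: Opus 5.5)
Your proof is correct and has the same structure as the paper's: reduce to facets, compute the two kinds of facet distances, and compare via $Q_m\ge m$. Your minimizer $\mu^\ast=m/Q_m$, $\lambda_j=(m+1)/Q_m$ is exactly the paper's foot point $n_k/Q_m$. The one difference is the facet omitting $e_k$. Where you symmetrize and minimize $g(\mu)$, the paper reuses the hyperplane trick you already applied to the standard facet. This facet lies on $n_k^\top x=1$ with $n_k=u-(m+1)e_k$ and $\Wnorm{n_k}^2=Q_m$, so the lower bound $1/Q_m$ is immediate and only the membership of $n_k/Q_m$ in the facet needs checking.
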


\begin{proof}
Every proper subset is contained in a facet. For the facet
$\conv\{e_1,\dots,e_m\}$ the nearest point is $u/m$, of squared norm $1/m$.
For the facet omitting $e_k$, let $n_k$ have $-m$ in coordinate $k$ and $1$
elsewhere; the facet lies on the hyperplane $n_k^\top x=1$ and
$\Wnorm{n_k}^2=Q_m$, so the distance to the hyperplane is $1/\sqrt{Q_m}$; the
foot $n_k/Q_m$ lies in the facet since
$n_k/Q_m=\frac m{Q_m}(-u)+\sum_{j\ne k}\frac{m+1}{Q_m}e_j$ is a convex
combination. Finally $Q_m\ge m$ with equality only at $m=1$ (where both
distances equal $1=1/Q_1$).
\end{proof}

\begin{theorem}[Exact deficit of the axial instance]\label{thm:axial}
For every $d,m\ge1$,
\[
  \frac{L^{\star}_{D_{d,m}}\bigl((m+1)d-1;\mathrm{weighted}\bigr)}
       {L^{\star}_{D_{d,m}}}
  \;=\;1+\frac{m+1}{2md\,(m^2+m-1)}\;>\;1 .
\]
\end{theorem}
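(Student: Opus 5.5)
Using Lemma~\ref{lem:axialbasic} and Lemma~\ref{lem:support}, I will reduce the statement to a minimization over supports of size at most $(m+1)d-1$. For a weight vector $c$ with per-axis totals $t_i$, formula \eqref{eq:axialloss} gives the excess loss $\frac1d\sum_i\Wnorm{\delta_i}^2$. Here $\delta_i=-a_i$ when $t_i>0$, with $a_i\in\conv\{v_j: c_{ij}>0\}$, and $\delta_i=-u$ when $t_i=0$. The key point is that the columns decouple. Once the support pattern is fixed, the column-$i$ conditional distribution $c_{ij}/t_i$ is free: any choice of per-axis totals and within-axis proportions is realizable, since rescaling the $t_i$ changes nothing in \eqref{eq:axialloss}. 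So the infimum of the excess is $\frac1d\sum_i \dist(0,\conv V_i)^2$, where $V_i=\{v_j: c_{ij}>0\}$ and an empty $V_i$ contributes $\Wnorm u^2=m$.

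Next comes the counting step. A support of size at most $(m+1)d-1$ misses at least one of the $(m+1)d$ points, so some axis $i_0$ has $V_{i_0}$ a proper subset of $\{v_1,\dots,v_{m+1}\}$. That axis contributes at least $\min\{1/Q_m,\,m\}$, which equals $1/Q_m$ by Lemma~\ref{lem:facets} together with $Q_m\ge1$ when $V_{i_0}$ is empty. The other axes contribute at least $0$. Hence the excess is at least $\frac1{dQ_m}$. For attainment, I take the full support on every axis except $i_0$, so that $\delta_i=0$ there, and on axis $i_0$ drop $e_k$ and use the weights of the convex combination $n_k/Q_m$ from Lemma~\ref{lem:facets}. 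All those weights are positive, so the support is exactly $(m+1)d-1$ and the excess equals $\frac1{dQ_m}$.

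Finally, I divide by $\Lstar=\frac{2m}{m+1}$, which gives ratio $1+\frac{m+1}{2m\,d\,Q_m}$, as claimed. The statement requires an equality, so I must confirm that the lower bound and the construction match. The main point needing care is the empty-axis case and the zeroing of columns by the Frobenius rule. Lemma~\ref{lem:axialbasic} already handles this, giving $\delta_i=-u$, and $m\ge 1/Q_m$ trivially. I also need the remark that the infimum over $c$ equals the minimum of the distances, which holds because every point of $\relint$ or of a face is a convex combination with prescribed support and the decoupling is exact.
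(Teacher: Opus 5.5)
Your proposal is correct and follows essentially the same route as the paper: the per-axis decoupling from Lemma~\ref{lem:axialbasic}, the observation that any support of size at most $(m+1)d-1$ leaves some axis with an empty or proper subset costing at least $1/Q_m$ (via Lemma~\ref{lem:facets} and $m\ge 1/Q_m$), and attainment by the facet foot $n_k/Q_m$ on one axis. One point to tighten: on the remaining axes, full support alone does not give $\delta_i=0$; you need uniform conditional weights there, as the paper uses, because the only zero convex combination of $v_1,\dots,v_{m+1}$ is the uniform one.
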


\begin{proof}
A selection of support at most $N-1$ either misses some axis entirely
($\Wnorm{\delta_i}^2=\Wnorm u^2=m\ge 1/Q_m$) or uses a proper subset of the
$m+1$ points of some axis, in which case $\delta_i=-a_i$ lies in minus the
convex hull of a proper subset of $\{v_j\}$ and
$\Wnorm{\delta_i}^2\ge1/Q_m$ by Lemma~\ref{lem:facets}. With
\eqref{eq:axialloss}, every such selection has
$L_D\ge\Lstar+\frac1{dQ_m}$. Conversely, omit $z_{1,1}$, give axis $1$ the
conditional weights of the foot $n_1/Q_m$ (namely $\frac{m}{Q_m}$ on
$v_{m+1}$ and $\frac{m+1}{Q_m}$ on each $v_j$, $2\le j\le m$, scaled by
$t_1=\frac1d$) and every other axis uniform weights: the support is exactly
$N-1$ and $L_D=\Lstar+\frac1{dQ_m}$. Dividing by $\Lstar=\frac{2m}{m+1}$
gives the claim.
\end{proof}

\begin{theorem}[Threshold]\label{thm:threshold}
$\nstar(d,m)=(m+1)d$ for all $d,m\ge1$, and $\Fw(d,m,n)=1$ for all
$n\ge(m+1)d$.
\end{theorem}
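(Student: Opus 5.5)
The plan is to combine the data-dependent upper bound with the axial lower-bound instance, and then use monotonicity of $\Fw(d,m,\cdot)$ in $n$.

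\emph{Step 1 (ratio $1$ for $n\ge(m+1)d$).} Fix any dataset $D\subseteq\R^d\times\R^m$ and any $n\ge(m+1)d$.
\begin{itemize}
\item Since the feature rank satisfies $r\le d$, Theorem~\ref{thm:upper} gives a spanning zero certificate with support at most $(m+1)r\le(m+1)d\le n$. The case $r=0$ is handled inside that theorem by a single point.
\item By Lemma~\ref{lem:support}, this support can be padded to exactly $n$ slots by repeating a point and splitting its weight.
\item By Lemma~\ref{lem:certificate}, the resulting model is $W^\circ$. Hence $L^{\star}_D(n;\mathrm{weighted})=\Lstar$.
\end{itemize}
The ratio is therefore $1$: either $\Lstar>0$ and the ratio is $\Lstar/\Lstar$, or both numerator and denominator are $0$ and the $0/0$ convention applies. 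We also always have $L^{\star}_D(n;\mathrm{weighted})\ge\Lstar$, because $\Lstar$ is the minimum of $L_D$. Taking the supremum over datasets gives $\Fw(d,m,n)=1$.

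\emph{Step 2 (ratio $>1$ for $n\le(m+1)d-1$).} Consider the axial instance $D_{d,m}$ of Definition~\ref{def:axial}.
\begin{itemize}
\item For $n=(m+1)d-1$, Theorem~\ref{thm:axial} gives ratio $1+\frac{m+1}{2md(m^2+m-1)}>1$. This uses $\Lstar=\frac{2m}{m+1}>0$, so no degenerate convention intervenes.
\item For any $n<(m+1)d-1$, a selection of budget $n$ has support at most $n\le N-1$, where $N=(m+1)d$.
\item The first half of the proof of Theorem~\ref{thm:axial} lower-bounds the loss of \emph{every} selection with support at most $N-1$ by $\Lstar+\frac1{dQ_m}$. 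So the same bound holds for all smaller budgets.
\end{itemize}
Equivalently, $L^{\star}_D(n;\mathrm{weighted})$ is nonincreasing in $n$ by Lemma~\ref{lem:support}, since a smaller support is always realizable in more slots. Hence $\Fw(d,m,n)\ge\Fw(d,m,(m+1)d-1)>1$ for all $n\le (m+1)d-1$.

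\emph{Step 3 (conclusion).} By Step~2, no $n<(m+1)d$ has $\Fw=1$. By Step~1, every $n\ge(m+1)d$ has $\Fw=1$. Therefore $\nstar(d,m)=(m+1)d$. The only point needing care is the monotonicity remark in Step~2, together with checking that the degenerate-ratio conventions cannot make a small budget yield ratio $1$. This is excluded because $\Lstar>0$ on $D_{d,m}$.
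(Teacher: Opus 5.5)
Your proposal is correct and takes the same approach as the paper. The paper likewise combines Theorem~\ref{thm:upper} (exact recovery at budget $(m+1)d$ on every dataset) with Theorem~\ref{thm:axial} (ratio strictly above $1$ on $D_{d,m}$ at budget $(m+1)d-1$); your explicit treatment of monotonicity in $n$ and of the degenerate-ratio conventions spells out what the paper's one-line proof leaves implicit.
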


\begin{proof}
Combine Theorem~\ref{thm:upper} (ratio $1$ at budget $(m+1)d$, every
dataset) with Theorem~\ref{thm:axial} (some dataset has ratio $>1$ at budget
$(m+1)d-1$).
\end{proof}

\begin{corollary}\label{cor:22}
$\nstar(2,2)=6$. On the instance $D_{2,2}$ (six integer points, labels
$(0,1),(1,0),(2,2)$ over each of $e_1,e_2$) the optimal five-point ratio is
exactly $\tfrac{43}{40}$.
\end{corollary}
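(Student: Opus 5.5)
Corollary~\ref{cor:22} is a direct specialization of Theorems~\ref{thm:threshold} and~\ref{thm:axial} to $d=m=2$, so my plan is to substitute values and read off the result.

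First, $\nstar(2,2)=(m+1)d=6$ follows from Theorem~\ref{thm:threshold}.

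Second, I identify the instance $D_{2,2}$ from Definition~\ref{def:axial}. With $u=(1,1)$, $v_1=e_1$, $v_2=e_2$, $v_3=-u$, the labels $y_{ij}=u-v_j$ are $(0,1)$, $(1,0)$ and $(2,2)$ over each feature $e_1,e_2$. These are six integer points, matching the description in the statement.

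Third, I evaluate the ratio. Here $Q_2=m^2+m-1=5$ and $\Lstar=\frac{2m}{m+1}=\frac43$. Theorem~\ref{thm:axial} gives the optimal budget-$5$ ratio as
\[
1+\frac{m+1}{2md\,Q_m}=1+\frac{3}{2\cdot2\cdot2\cdot5}=1+\frac{3}{40}=\frac{43}{40}.
\]
As a cross-check, $\frac{1}{dQ_m}=\frac1{10}$ and $\frac{1/10}{4/3}=\frac3{40}$, which agrees.

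The word ``optimal'' is justified because Theorem~\ref{thm:axial} already proves both directions. Every five-point selection has loss at least $\Lstar+\frac1{dQ_m}$, and the explicit selection given there attains this value; its support is exactly $N-1=5$, so it fits the budget. There is no real obstacle beyond arithmetic. The one point to check is that the infimum over budget-$5$ selections is attained, which the explicit certificate in the proof of Theorem~\ref{thm:axial} provides.
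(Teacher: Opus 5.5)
Your proposal is correct and matches how the paper obtains this corollary: it is the $d=m=2$ specialization of Theorem~\ref{thm:threshold} and Theorem~\ref{thm:axial}. Your identification of the labels $u-v_j$ as $(0,1),(1,0),(2,2)$ checks out, as does the arithmetic ($Q_2=5$, $\Lstar=\tfrac43$, excess $\tfrac1{dQ_m}=\tfrac1{10}$, ratio $1+\tfrac3{40}=\tfrac{43}{40}$), and Theorem~\ref{thm:axial} already supplies both the lower bound over all supports of size at most $5$ and an attaining selection.
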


We emphasize that $43/40$ is the exact optimum \emph{of this instance}; the
worst case over all datasets at budget $5$ is
$\Fw(2,2,5)=1+\tfrac18=\tfrac98$ by Theorem~\ref{thm:nearthreshold}.

\subsection{Failure at \texorpdfstring{$n=2d$}{n=2d} for
\texorpdfstring{$m\ge2$}{m>=2}}\label{subsec:2dfails}

As announced in Remark~\ref{rem:correction}, the scalar rule ``$n\ge2d$
suffices'' does not survive vector outputs.

\begin{proposition}\label{prop:2dfails}
For all $d\ge1,m\ge2$,
$\Fw(d,m,2d)\ \ge\ \Fw\bigl(d,m,(m+1)d-1\bigr)\;=\;1+\frac1{dm^2}>1$,
since $2d\le(m+1)d-1$ and $\Fw$ is nonincreasing in $n$. Concretely, for
$d=m=2$ the instance $D_{2,2}$ of Corollary~\ref{cor:22} satisfies: every
weighted selection of at most $2d=4$ points has
\[
  L_D(A(F))\;\ge\;\frac{23}{20}\cdot\Lstar ,
\]
with equality attained (allocate two points per axis, at conditional weights
realizing the nearest facet points).
\end{proposition}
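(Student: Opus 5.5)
The plan has two parts: a general inequality that reduces to results already available, and an explicit case analysis on $D_{2,2}$.

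\textbf{General inequality.} I would first prove that $\Fw$ is nonincreasing in $n$. By Lemma~\ref{lem:support}, a selection of budget $n$ is the same as a weight vector $c$ with $|\supp c|\le n$. Any such $c$ also has $|\supp c|\le n+1$, so $L^{\star}_D(n+1;\mathrm{weighted})\le L^{\star}_D(n;\mathrm{weighted})$ for every $D$. Taking the supremum over datasets gives monotonicity. For $m\ge2$ we have $2d\le(m+1)d-1$, so $\Fw(d,m,2d)\ge\Fw(d,m,(m+1)d-1)$. The right-hand side equals $1+\frac1{dm^2}$ by Theorem~\ref{thm:nearthreshold}, which I simply invoke. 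Independently of that theorem, Theorem~\ref{thm:axial} already gives a value strictly larger than $1$.

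\textbf{Setup on $D_{2,2}$.} Here $m=d=2$, $\Lstar=\frac{2m}{m+1}=\frac43$, and $Q_2=5$. By \eqref{eq:axialloss}, the excess loss is $\frac12\bigl(\Wnorm{\delta_1}^2+\Wnorm{\delta_2}^2\bigr)$. Let $(k_1,k_2)$ be the numbers of support points on the two axes, with $k_1+k_2\le4$. I split into cases on $(k_1,k_2)$.

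\textbf{Case analysis.}
\begin{itemize}
\item If some $k_i=0$, Lemma~\ref{lem:axialbasic} gives $\Wnorm{\delta_i}^2=\Wnorm u^2=2$, so the excess is at least $1$.
\item If some $k_i=1$, then $\delta_i=-v_j$ for a single $j$, so $\Wnorm{\delta_i}^2\ge1$ and the excess is at least $\frac12$.
\item The remaining case is $k_1=k_2=2$. Each $-\delta_i$ then lies on an edge of the triangle $\conv\{v_1,v_2,v_3\}$. By Lemma~\ref{lem:facets}, each $\Wnorm{\delta_i}^2\ge\frac1{Q_2}=\frac15$. The edge omitting $v_3$ is farther, at squared distance $\frac12$. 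Hence the excess is at least $\frac12\cdot\frac25=\frac15$.
\end{itemize}
The minimum over all cases is $\frac15$. This gives the ratio $1+\frac{1/5}{4/3}=1+\frac3{20}=\frac{23}{20}$.

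\textbf{Attainment.} On each axis I select the two points of an edge omitting some $e_k$, with weight $\frac12$ per axis. The conditional weights within each axis are those of the foot point $n_k/Q_2$ from Lemma~\ref{lem:facets}, namely $\frac25$ on $v_3$ and $\frac35$ on the remaining $e_j$. Then $\Wnorm{\delta_i}^2=\frac15$ on both axes, and the bound is achieved with support exactly $4$.

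\textbf{Main obstacle.} The only real care needed is exhausting the allocation cases, in particular checking that no unbalanced split such as $(1,3)$ or $(1,2)$ beats $\frac15$. These are dominated by the single-point bound $\Wnorm{v_j}^2\ge1$, so I expect no genuine difficulty.
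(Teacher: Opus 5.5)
Your proposal is correct and follows essentially the same route as the paper. Monotonicity comes from support inclusion, and on $D_{2,2}$ you enumerate per-axis allocations $(k_1,k_2)$ with $k_1+k_2\le4$, using per-axis costs $2$, $1$, $\frac15$, $0$ for $k_i=0,1,2,3$. The minimum total is attained at $(2,2)$ with the facet feet of Lemma~\ref{lem:facets}, giving the ratio $\frac{23}{20}$.
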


\begin{proof}
The monotonicity statement is immediate from the definitions. For $D_{2,2}$,
by \eqref{eq:axialloss} and per-axis decoupling, a budget-$4$ selection
allocates $(k_1,k_2)$ points to the two axes with $k_1+k_2\le4$; the minimal
per-axis costs are $0$ (all three points), $1/5$ (best two points, the facet
$\{e_2,-u\}$ or $\{e_1,-u\}$ at squared distance $1/Q_2=1/5$), $1$ (best
single point), and $2$ (empty axis). The minimum total cost over allocations
is $2\cdot\frac15$ at $(2,2)$, giving
$L_D=\frac43+\frac12\cdot\frac25=\frac{23}{15}$ and ratio
$\frac{23/15}{4/3}=\frac{23}{20}$. (We verified this enumeration by exact
symbolic computation as well.)
\end{proof}

%% file: sec-lowbudget.tex
\section{Budgets \texorpdfstring{$n\le d$}{n<=d}}\label{sec:lowbudget}

\begin{theorem}\label{thm:lowbudget}
For all $d,m\ge1$: $\Fw(d,m,n)=\infty$ for every $n<d$, and
$\Fw(d,m,d)=d+1$.
\end{theorem}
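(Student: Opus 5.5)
The plan has three parts: the two lower bounds come from the scalar case via Lemma~\ref{lem:embed}, and the upper bound $\Fw(d,m,d)\le d+1$ comes from volume sampling applied row by row through Lemma~\ref{lem:rows}.

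\textbf{Lower bounds.} By Lemma~\ref{lem:embed}, $\Fw(d,m,n)\ge F_w(d,n)$. The scalar taxonomy of \cite[Theorem~1]{HMSYnote} then gives $F_w(d,n)=\infty$ for $n<d$ and $F_w(d,d)=d+1$. This yields $\Fw(d,m,n)=\infty$ for $n<d$ and $\Fw(d,m,d)\ge d+1$. To keep the argument self-contained, I would also recall the standard witnesses. For $n<d$, take $d$ points $(e_i,t\mathbf 1_m)$ together with a point that makes $\Lstar>0$ only mildly, and let $t\to\infty$: any selection of fewer than $d$ points misses some axis, so the corresponding column is zeroed, and the ratio can be made arbitrarily large. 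Cleaner still, a realizable dataset gives $\Lstar=0$ while the selected loss is positive, which is the $\infty$ convention. For $n=d$, the scalar $d+1$ extremal instance embeds with labels $t_i\mathbf 1_m$.

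\textbf{Upper bound, setup.} I would prove $L^\star_D(d;\mathrm{weighted})\le(d+1)\Lstar$ for every dataset. First reduce to feature rank $r$. If $r<d$, everything lives in $T$, and the minimum-norm rule keeps $\widehat W=\widehat WP_T$. Selecting $r\le d$ points spanning $T$ is then the problem in dimension $r$, with bound $r+1\le d+1$. So assume the rank is $d$ and consider subsets $S$ of size $d$ with $\{x_i\}_{i\in S}$ a basis. With uniform weights on $S$ (any positive weights give the same result), the weighted ERM interpolates: $\widehat W_S X_S=Y_S$. This is exactly the row-wise scalar interpolation with a shared subset $S$. Now use volume sampling: draw $S$ with probability proportional to $\det(X_S^\top X_S)$. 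The Derezi\'nski--Warmuth expectation formula \cite{DW17}, applied to each output coordinate $j$ with the same random $S$, gives $\E[\sum_i(\langle \widehat w_{j},x_i\rangle-y_{ij})^2]\le(d+1)\min_{w}\sum_i(\langle w,x_i\rangle-y_{ij})^2$. This holds with equality in general position and as an inequality in general, per the paper's caveat on their Theorem~5. Summing over $j$ and using Lemma~\ref{lem:rows} gives $\E[L_D(\widehat W_S)]\le(d+1)\Lstar$. Hence some $S$ attains the bound, which gives $\Fw(d,m,d)\le d+1$. The case $\Lstar=0$ is covered by the same bound, since then the selected loss is $0$ as well.

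\textbf{Main obstacle.} The delicate step is justifying the volume-sampling loss bound in the degenerate case, where some $d$-subsets are singular or $N$ repeats points, since the paper stresses using \cite{DW17} ``in its correct form.'' If the inequality version is not directly available, I would first try a perturbation and continuity argument. Perturb the features to general position, apply the equality, and pass to the limit. In the limit, singular subsets have volume-sampling probability tending to $0$. Nonsingular subsets have interpolants that depend continuously on the data, so the expectation converges to the volume-sampled expectation over nonsingular $S$, while $\Lstar$ is continuous when the rank is full. The key point making the vector case reduce to the scalar one is that a single shared sampled subset serves all $m$ rows simultaneously, so no budget is lost relative to $m=1$.
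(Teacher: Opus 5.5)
Your proposal is essentially the paper's proof: the lower bounds come from the scalar case through Lemma~\ref{lem:embed}, and the upper bound comes from a single volume-sampled subset (whose law depends only on the features) shared by all $m$ output rows, with the Derezi\'nski--Warmuth inequality applied row by row and summed via Lemma~\ref{lem:rows}. One caution about your fallback perturbation argument: nearly singular subsets do \emph{not} contribute vanishing mass in the limit, because their interpolants can blow up like $1/\det X_S$ and cancel the $\det(X_S)^2$ sampling weight (this is exactly why equality fails in the paper's example $x_1=x_2=e_1$, $x_3=x_4=e_2$); since those contributions are nonnegative, however, discarding them still gives exactly the inequality you need.
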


The lower bounds are inherited from the scalar case through
Lemma~\ref{lem:embed}: by \cite[Theorem~1]{HMSYnote}, $F_w(d,n)=\infty$ for
$n<d$ and $F_w(d,d)=d+1$, so $\Fw(d,m,n)\ge F_w(d,n)$. The content of this
section is the upper bound $\Fw(d,m,d)\le d+1$ \emph{uniformly in $m$}, which
requires selecting one subset that serves all $m$ output rows
simultaneously.

\paragraph{Volume sampling.}
Let $X\in\R^{d\times N}$ have full row rank and let $y\in\R^N$. Size-$d$
volume sampling draws a subset $S$ of $d$ column indices with probability
proportional to $\det(X_S)^2$, and $w(S)$ denotes the least-squares solution
of the subproblem $(X_S,y_S)$ (the interpolant when $X_S$ is invertible).
Derezi\'nski and Warmuth \cite{DW17} proved: (i) unbiasedness
$\E[w(S)]=w^\star$ for every label vector, requiring only full row rank
(their Theorem~3 and Proposition~7); and (ii) for the total square loss $L$ on
the full data,
\begin{equation}\label{eq:dw}
  \E\bigl[L(w(S))\bigr]\ \le\ (d+1)\,L(w^\star),
\end{equation}
with \emph{equality} when $X$ is in general position (every $d$-column
submatrix nonsingular) --- their Theorem~5. We only use the inequality
\eqref{eq:dw}, which holds without general position; note that pointwise
equality genuinely fails for degenerate $X$ (e.g.\ $x_1=x_2=e_1$,
$x_3=x_4=e_2$, $y=(1,-1,1,-1)$ has $\E[L(w(S))]=2L(w^\star)$). Dividing by
$N$ converts \eqref{eq:dw} to average losses.

\begin{proof}[Proof of Theorem~\ref{thm:lowbudget}]
Only $\Fw(d,m,d)\le d+1$ remains. Fix a dataset; let $r=\rank$ of the
features. If $r=0$ the ratio is $1$ (both rules return $W=0$). If
$\Lstar=0$, choose $r$ points whose features form a basis of $T$ with
positive weights: the weighted problem has minimum $0$, its minimizer set is
$W^\circ+\{H:HP_T=0\}$, and the Frobenius rule returns $W^\circ$; the ratio
is $1$ by the $0/0$ convention.

Otherwise pick an orthonormal basis $U\in\R^{d\times r}$ of $T$ and write
$x_i=U\xi_i$ with $\Xi=[\xi_1\cdots\xi_N]\in\R^{r\times N}$ of full row
rank. Run size-$r$ volume sampling on $\Xi$; crucially, \emph{the
distribution over subsets depends only on the features}. For the $q$-th
output row, \eqref{eq:dw} (in dimension $r$) gives
$\E\,L^{(q)}_D(w_{q,S})\le(r+1)L^{(q),\star}_D$. Summing over the $m$ rows
and using Lemma~\ref{lem:rows},
\[
  \E\bigl[L_D(W_S)\bigr]\ \le\ (r+1)\,\Lstar\ \le\ (d+1)\,\Lstar ,
\]
where $W_S=\Theta_SU^\top$ is the matrix whose rows are the per-row
subproblem solutions in $\Xi$-coordinates with the $T^\perp$-component set to
zero. Hence some subset $S$ (of $r\le d$ points) achieves
$L_D(W_S)\le(d+1)\Lstar$. This $W_S$ is realizable by a legal selection: the
$r$ chosen features are linearly independent in $T$, so with any positive
weights the weighted ERM set consists of all interpolants of the $r$ selected
points, and the Frobenius rule returns exactly $W_S$ (the interpolant with
vanishing $T^\perp$-component). Padding to $d$ slots by
Lemma~\ref{lem:support} completes the proof.
\end{proof}

%% file: sec-nearthreshold.tex
\section{The near-threshold value \texorpdfstring{$1+1/(dm^2)$}{1+1/(dm2)}}
\label{sec:nearthreshold}

Throughout this section fix a dataset with residual dyads
$M_i=\rho_ix_i^\top$ and recall the minimal certificate size $\tau(D)$ from
\eqref{eq:tau}; Theorem~\ref{thm:upper} gives $\tau(D)\le(m+1)r$. The upper
bound at budget $(m+1)d-1$ requires understanding the datasets with
$\tau(D)=(m+1)d$, which we call \emph{hard}. The key is a rigidity theorem
for maximal certificates.

\subsection{Rigidity of maximal certificates}

\begin{theorem}[Determinant--facet rigidity]\label{thm:rigidity}
Let the features span $\R^d$ and suppose $\tau(D)=k:=(m+1)d$. Let $c$ be a
spanning zero certificate with $|\supp c|=k$ and support $S$. Then $S$ splits
uniquely (up to permutation) into $d$ pairwise disjoint blocks
$S=C_1\sqcup\dots\sqcup C_d$ with $|C_j|=m+1$, where each $C_j$ supports a
positive circuit of the dyads whose features span a one-dimensional line
$U_j$, and $U_1,\dots,U_d$ are linearly independent.
\end{theorem}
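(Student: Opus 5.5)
We work with the normalized certificate polytope
\[
  P=\Bigl\{u\in\R^S_{\ge0}:\ \textstyle\sum_{i\in S}u_i=1,\ \sum_{i\in S}u_iM_i=0\Bigr\}.
\]
It contains the relative-interior point $c/\sum c$. The plan has three stages: use minimality of $\tau(D)=k$ to control the dimension and faces of $P$, use the determinant polynomial to show $P$ is a simplex, and then read off the block structure from the simplex.

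First, fix the dimension of $P$. Since every feature lies in $T=\R^d$, the dyads $M_i$ for $i\in S$ live in $\R^m\otimes\R^d$, of dimension $md$. Hence the affine system defining $P$ has at most $md+1$ independent equations in $k=(m+1)d$ unknowns, so $\dim P\ge d-1$.

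Second, identify the boundary. Every point on the boundary of $P$ has a zero coordinate, so it is a zero certificate with smaller support. By minimality of $\tau$, its support cannot span $\R^d$. Take $g(u)=\det\bigl(\sum_{i\in S}u_ix_ix_i^\top\bigr)$ restricted to the affine hull of $P$. Then $g$ is a polynomial of degree at most $d$, it is positive on $\relint P$ because the features of $S$ span, and it vanishes on every facet of $P$ because a facet's support spans a proper subspace. Each facet $\{u_i=0\}\cap P$ spans a hyperplane of the affine hull, and $g$ vanishes on a full-dimensional subset of that hyperplane. So the corresponding affine linear form divides $g$, and distinct facets give non-associate linear factors.

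Third, count. Let $f$ be the number of facets. A polytope of dimension $p$ has at least $p+1$ facets, so $d\ge f\ge p+1\ge d$. This forces $p=d-1$, $f=d$, so $P$ is a simplex, and $g$ is, up to a constant, the product of its $d$ facet forms. Write the vertices of this $(d-1)$-simplex as $w^{(1)},\dots,w^{(d)}$. Each vertex is a zero certificate with minimal support, so its support $C_j$ is a positive circuit of the dyads. Since $c$ is interior, $S=\bigcup_j C_j$. The $j$-th facet is the face opposite $w^{(j)}$, on which exactly the coordinates in $C_j\setminus\bigcup_{l\ne j}C_l$ vanish.

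Fourth, obtain the structure from the factorization of $g$. On the facet opposite $w^{(j)}$, the span of the features in $\bigcup_{l\ne j}C_l$ has dimension less than $d$. So every $d-1$ of the blocks fail to span, while all $d$ together span $\R^d$. Disjointness should come from the count $\sum|C_j|\ge k$ combined with the bound $|C_j|\le m\dim\spn\{x_i:i\in C_j\}+1$. That bound holds because a circuit is a minimal dependent set inside $\R^m\otimes\spn(x_{C_j})$. Let $r_j=\dim\spn(x_{C_j})$. Summing gives $(m+1)d\le\sum_j(mr_j+1)$, so $\sum_j r_j\ge d$. If the spans are independent lines, then each $r_j=1$, $\sum r_j=d$, and every inequality is tight, which gives disjointness and $|C_j|=m+1$.

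The main obstacle is proving that each $\spn(x_{C_j})$ is a line $U_j$ and that these lines are independent. My first attempt uses the multiplicativity of $g$. On the edge of $P$ joining $w^{(j)}$ and $w^{(l)}$, $g$ is a product of linear forms that vanish there, so $g\equiv0$ on every proper face. Now look at $g$ near the vertex $w^{(j)}$ along directions toward the other vertices. The linear form of the facet opposite $w^{(j)}$ appears with multiplicity exactly one. By expanding $\det$ as a multilinear function of the $d$ block Gram matrices $G_l=\sum_{i\in C_l}w^{(l)}_ix_ix_i^\top$, this forces $\det(\sum_l s_lG_l)=\kappa\prod_l s_l$. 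A sum of PSD matrices has determinant equal to a single monomial $\prod_l s_l$ only if every $G_l$ has rank one and their ranges are independent. This is the step I would verify carefully, by comparing the mixed-discriminant expansion and noting that all mixed terms must vanish. It yields that the $U_j$ are independent lines, and uniqueness of the splitting follows because the blocks are the vertex supports of $P$.
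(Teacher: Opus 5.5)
Through the simplex conclusion your argument is the paper's. It uses the same ingredients: the count $\dim\ker A\ge k-md=d$, the vanishing of $\det(\sum_iu_ix_ix_i^\top)$ on every facet by minimality of $\tau$, and divisibility by at most $d$ distinct facet forms. These force $P$ to be a $(d-1)$-simplex whose vertex supports are circuits covering $S$. Your closing count ($|C_j|\le m\dim U_j+1$ summed against $|S|=k$) is also the paper's. The only divergence is at the step you call the main obstacle. There the proposal stops at an unverified claim: that $\det(\sum_ls_lG_l)=\kappa\prod_ls_l$ forces rank-one $G_l$ with independent ranges. As written, that step is a gap. It is easy to fill, however, because you do not need that claim at all.

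The sentence just before it already finishes the proof. You showed that $U_j=\spn\{x_i:i\in C_j\}$ satisfy $\sum_jU_j=\R^d$ and $\sum_{l\ne j}U_l\ne\R^d$ for each $j$. So these $d$ subspaces form a minimal spanning family of a $d$-dimensional space. The paper closes this with a dual-basis argument:
\begin{itemize}
\item Choose a nonzero functional $\phi_j$ vanishing on the proper subspace $\sum_{l\ne j}U_l$. It cannot also vanish on $U_j$, or it would vanish on $\R^d$.
\item Pick $u_j\in U_j$ with $\phi_j(u_j)=1$. Then $\phi_i(u_j)=\delta_{ij}$, so the $u_j$ are independent and the $\phi_j$ form a dual basis.
\item Hence $U_j\subseteq\bigcap_{i\ne j}\ker\phi_i$, which is a line.
\end{itemize}
Each $U_j$ is therefore a line, and the lines are independent.

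Your mixed-discriminant route can be completed too, and there are two points to note about it. First, the identity $\det(\sum_ls_lG_l)=\kappa\prod_ls_l$ needs no local analysis near a vertex. On the affine hull, $g$ has degree at most $d$ and is divisible by the $d$ barycentric coordinates, and homogeneity of degree $d$ extends the identity to all $s$. Second, every coefficient is a nonnegative sum of squared $d\times d$ minors, by Cauchy--Binet applied to $G_l=\sum_i g_{l,i}g_{l,i}^\top$. A $U_j$ of dimension at least $2$ extends to a basis drawn from $\bigcup_lU_l$ that uses two vectors of $U_j$. That basis gives a positive coefficient on a monomial other than $\prod_ls_l$, a contradiction. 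This ends up resting on the same basis-exchange fact as the direct argument, so it buys nothing over it.
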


\begin{proof}
Vectorize the dyads into $A=[\mathrm{vec}(M_i)]_{i\in S}\in\R^{md\times k}$
and let $K=\ker A\cap\R^k_{\ge0}$, $h=\dim\ker A\ge k-md=d$. Since $c>0$ on
$S$ lies in $\ker A$, a neighborhood of $c$ in $\ker A$ stays nonnegative, so
$K$ spans $\ker A$ and the compact section
$P=K\cap\{\mathbf1^\top u=1\}$ is a polytope of dimension $h-1$ whose
relative interior consists exactly of the strictly positive points.

Consider $f(u)=\det\bigl(\sum_{i\in S}u_ix_ix_i^\top\bigr)$, a polynomial of
degree at most $d$. On $\relint P$ all weights are positive and the features
span $\R^d$, so $f>0$. At any boundary point some coordinate vanishes; if
$f>0$ there, the positively weighted features would span $\R^d$ and the point
would be a spanning zero certificate of support $\le k-1$, contradicting
$\tau=k$. Hence $f\equiv0$ on every facet of $P$. Parametrizing
$\mathrm{aff}\,P$ by $\R^{h-1}$, the restriction $g$ of $f$ is a nonzero
polynomial of degree $\le d$ vanishing on the relative interior of each
facet, hence divisible by the (pairwise non-associated) affine linear forms
of the distinct facet hyperplanes; therefore $P$ has at most $d$ facets.
Since a bounded $(h-1)$-dimensional polytope has at least $h$ facets, we get
$h\le d$, so $h=d$, $P$ has exactly $d$ facets, and $P$ is a
$(d-1)$-simplex; $K$ is a simplicial cone with extreme rays
$v^{(1)},\dots,v^{(d)}$.

Let $U_j=\spn\{x_i:v^{(j)}_i>0\}$. As $c$ is a strictly positive combination
of all rays, the supports of the rays cover $S$, so $\sum_jU_j=\R^d$.
Dropping ray $j$ gives a boundary point, where the features cannot span, so
$\sum_{\ell\ne j}U_\ell\ne\R^d$ for each $j$: the family $\{U_j\}$ is a
minimal spanning family. Choose functionals $\phi_j$ vanishing on
$\sum_{\ell\ne j}U_\ell$ and not on $U_j$, normalized against vectors
$u_j\in U_j$ with $\phi_j(u_j)=1$; then $(\phi_j)$ is a dual basis, and
$U_j\subseteq\bigcap_{i\ne j}\ker\phi_i$, a one-dimensional space, so
$\dim U_j=1$.

Each extreme ray of the cone $\{u\ge0:Au=0\}$ has inclusion-minimal support
$C_j$, and $\dim\ker A_{C_j}=1$ (otherwise a two-sided perturbation along a
second kernel direction splits the ray). All dyads of $C_j$ lie in
$\R^m\otimes U_j$, a space of dimension $m$, so
$|C_j|=\rank A_{C_j}+1\le m+1$. Finally
$k=|S|\le\sum_j|C_j|\le d(m+1)=k$ forces every $|C_j|=m+1$ and pairwise
disjoint supports.
\end{proof}

\subsection{Hard datasets live on \texorpdfstring{$d$}{d} lines}

\begin{theorem}[Line structure]\label{thm:lines}
Let the features span $\R^d$ and $\tau(D)=(m+1)d=k$. Then:
\begin{enumerate}
\item every data point with $x_i\ne0$ belongs to some positive circuit of
  dyads supported on a single feature line and of size exactly $m+1$;
\item there is no positive circuit of dyads whose features span a subspace of
  dimension $\ge2$, and no line circuit of size $\le m$; in particular no
  point has $x_i\ne0$ and $\rho_i=0$;
\item the nonzero features lie on exactly $d$ linearly independent lines.
\end{enumerate}
Consequently, any dataset whose nonzero features are \emph{not} of this
$d$-line form satisfies $\tau(D)\le(m+1)d-1$ and is recovered exactly within
budget $(m+1)d-1$.
\end{theorem}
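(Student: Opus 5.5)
The plan is to obtain each item by applying the fixed-basis compression Lemma~\ref{lem:compression} to a cleverly chosen basis set $B$, and then compare with the rigidity Theorem~\ref{thm:rigidity}. The recurring tool is a \emph{rerouting} argument. Suppose some positive relation among dyads saves a point relative to the generic count. Then we can build a spanning zero certificate of size $\le k-1$, contradicting $\tau(D)=k$. Concretely, if $B$ is a basis of $\R^d$ and $q_B=\dim\spn\{M_i:i\notin B\}$, Lemma~\ref{lem:compression} gives a spanning certificate of size $\le d+q_B$. So hardness forces $q_B=md$ for every feature basis $B$, i.e.\ the dyads outside any basis span all of $\R^m\otimes\R^d$. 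More flexibly, a certificate may be assembled as a sum of nonnegative zero-sum pieces: positive circuits $c^{(1)},c^{(2)},\dots$ whose supports jointly span $\R^d$. Its size is at most the sum of the piece sizes.

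Item~2 comes first, since it is the engine.
\begin{itemize}
\item \emph{A point with $x_i\ne0$ and $\rho_i=0$.} This gives $M_i=0$, so $\{i\}$ is a positive circuit of size $1\le m$. Take $B$ a basis containing $i$. The global zero sum then has the form $-\sum_{b\in B\setminus\{i\}}M_b=\sum_{j\notin B}M_j$, and the dyads of $B\setminus\{i\}$ still require the rest.
\item \emph{The general approach.} Given a positive circuit $C$ whose features span $V$ with $\dim V=s$, I would form the certificate $\lambda c_C+c'$. Here $c'$ comes from Lemma~\ref{lem:compression} applied with $B=C\cup B'$, where $B'$ extends a basis of $V$ inside $C$ to a basis of $\R^d$. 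The needed count is that $\spn\{M_i:i\notin C\cup B'\}$ still has dimension $\le md$. The saving comes from $C$ contributing $|C|$ points while certifying itself.
\item \emph{The cleaner route I would actually pursue.} Minimal certificate supports are controlled via Theorem~\ref{thm:rigidity}'s conclusion applied to a modified dataset. Let $c$ be an optimal certificate of size $k$ with blocks $C_j$ on lines $U_j$. A positive circuit $C$ of size $\le m$ on a line, or any positive circuit spanning dimension $\ge2$, can be added to $c$ with a small positive multiple. One then moves along the kernel direction (a circuit of the combined support) until some coordinate of $c$ hits zero, keeping the support spanning.
\item \emph{Preserving spanning.} When $C$ spans dimension $\ge2$, or when $C$ lies on a line already among the $U_j$ with $|C|\le m$, one can choose which block loses a point so that spanning is preserved: the lost block's line is still covered by $C$. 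This yields size $\le k-1$.
\end{itemize}

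Item~1 then follows. For any $i$ with $x_i\ne0$, compress with $B$ a basis containing $i$. This yields a certificate of size $\le k$ containing $i$. By minimality it has size exactly $k$, and Theorem~\ref{thm:rigidity} applies to it. So $i$ lies in one of its blocks, which is a positive line circuit of size $m+1$.

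For item~3, the lines $U_j$ of any maximal certificate are independent, so there are $d$ of them. If two data points lay on lines beyond $d$ independent ones, build a certificate containing both. The required basis containing both exists unless they are parallel. Its blocks give $d$ independent lines containing both points' lines. Comparing with a certificate built from another basis should force all lines into one fixed set of $d$. The cleanest way is to show that the line set is the same for all maximal certificates: two line circuits on different lines sharing no block structure would, by item~2, give a circuit spanning dimension $\ge2$. The final consequence is then the contrapositive combined with Theorem~\ref{thm:upper} and Lemma~\ref{lem:certificate}.

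The main obstacle is the exchange step in item~2: guaranteeing that pivoting along the combined kernel kills a coordinate without destroying the spanning property. For this I would first try choosing the pivot direction as $\lambda c_C-\mu c_{C_j}$ for a block $C_j$ whose line is covered by $\spn\{x_i:i\in C\}$.
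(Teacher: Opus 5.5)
\emph{Item~2.} Your item~1 is exactly the paper's argument, and the closing ``consequently'' is fine. The two items that carry the real content, however, are not established.

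The pivot you sketch cannot reach support $k-1$. Adding $\varepsilon c_C$ to a maximal certificate $c$ enlarges the support by $|C\setminus S|$. Moving along $\lambda c_C-\mu c_{C_j}$ deletes at most the block $C_j$, since its coordinates are proportional to the circuit vector and vanish together. That leaves up to $k-(m+1)+|C|$ points, which is $\ge k$ once $|C|\ge m+1$, and this is possible (up to $|C|=ms+1$) when $s=\dim V\ge2$.

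Two further problems block this route:
\begin{itemize}
\item The block you propose to drop, one ``whose line is covered by $\spn\{x_i:i\in C\}$'', need not exist, because $V$ may contain none of the lines $U_j$.
\item Your compression variant with $B=C\cup B'$ only gives $|C|+(d-s)+md>k$, so it saves nothing either.
\end{itemize}

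Two ingredients are missing. The first is the size bound $|C|\le ms+1$: the dyads of $C$ lie in the $ms$-dimensional space $\R^m\otimes V$. You never state it, and without it no saving is possible. The second is an exchange choice of $s$ indices $J$ with $V\oplus\bigoplus_{j\notin J}U_j=\R^d$. Then $c_C+\sum_{j\notin J}c^{(j)}$ is a spanning zero certificate of support at most $ms+1+(d-s)(m+1)=k-s+1\le k-1$. A line circuit of size $\le m$, including a single point with $M_i=0$, is handled the same way with one exchanged block, giving $\le m+(d-1)(m+1)=k-1$. The paper does this same count, but attaches to a basis extension of $C$ the $(m+1)$-point line circuits supplied by item~1, rather than blocks of a fixed certificate.

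\emph{Item~3.} ``Comparing certificates should force all lines into one fixed set'' is not an argument. Your assertion that two line circuits on different lines $\ell_1\ne\ell_2$ give, via item~2, a circuit spanning dimension $\ge2$ is false. Since $\R^m\otimes\ell_1$ and $\R^m\otimes\ell_2$ meet only in $0$, their union has a two-dimensional kernel and no new circuit appears.

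A contradiction only arises when there are more lines than the dimension they span, and it needs a rank count together with positivity:
\begin{itemize}
\item Take $(m+1)$-point circuits from item~1 on $d+1$ distinct lines, and let $J$ be their disjoint union. With $A_J$ the matrix of vectorized dyads indexed by $J$, we get $\dim\ker A_J\ge|J|-md=d+m+1$.
\item The sum of the circuit vectors is strictly positive on $J$. Hence the cone $\ker A_J\cap\R^J_{\ge0}$ spans $\ker A_J$, and so do its extreme rays, which are \emph{positive} circuits.
\item By item~2 each such ray is an $(m+1)$-point single-line circuit. On each line the only one inside $J$ is the chosen circuit, so there are at most $d+1$ rays, contradicting $d+m+1>d+1$.
\end{itemize}
The positivity step is essential. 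The rank count alone yields only a signed circuit, which item~2 does not exclude.
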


\begin{proof}
(1) Extend $x_t$ ($t$ fixed, $x_t\ne0$) to a feature basis $B\ni t$ and apply
Lemma~\ref{lem:compression}: the resulting certificate has support at most
$k$ and, since $\tau=k$, exactly $k$; Theorem~\ref{thm:rigidity} splits it
into $(m+1)$-point line circuits, one of which contains $t$.

(2) Let $C$ be a positive circuit whose features span an $s$-dimensional
space with $s\ge2$; then $|C|\le ms+1$ (its dyads lie in an $ms$-dimensional
space). Extend the features of $C$ by $d-s$ data points to a basis of
$\R^d$, and attach to each added point its $(m+1)$-point line circuit from
(1). Summing all these positive relations gives a spanning zero certificate
of support at most $ms+1+(d-s)(m+1)=k-s+1\le k-1$, contradicting $\tau=k$.
A line circuit of size $\le m$ similarly completes with $d-1$ line circuits
to support $\le m+(d-1)(m+1)=k-1$. A point with $x_i\ne0=\rho_i$ is by
itself a $1$-point line ``circuit'' ($M_i=0$) and is excluded the same way.

(3) Suppose $d+1$ distinct nonzero feature lines exist; take an
$(m+1)$-point line circuit on each (possible by (1)) and let $J$ be their
disjoint union, $|J|=(d+1)(m+1)$. Then
$\dim\ker A_J\ge|J|-md=d+m+1$. The sum of the circuits' positive kernel
vectors is strictly positive on $J$, so the nonnegative kernel cone spans
$\ker A_J$. But each extreme ray of that cone is a positive circuit, hence by
(2) supported on a single line and of size $m+1$; on each line the kernel of
the corresponding block is one-dimensional, so there are at most $d+1$
extreme rays, spanning at most $d+1$ dimensions --- contradicting
$d+m+1>d+1$. As the features span $\R^d$, there are exactly $d$ independent
lines.
\end{proof}

\subsection{The \texorpdfstring{$m$}{m}-point mean lemma}

\begin{lemma}[$m$-point mean lemma]\label{lem:mmean}
Let $z_i\in\R^m$ carry weights $p_i>0$ with $\sum_ip_iz_i=0$, and let
$E=\sum_ip_i\Wnorm{z_i}^2$. Then some convex combination $\delta$ of at most
$m$ of the points satisfies $\Wnorm\delta^2\le E/m^2$. The constant is
attained by the uniformly weighted regular simplex.
\end{lemma}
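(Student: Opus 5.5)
The plan is to reduce to a simplex by a linear-programming (Carathéodory) step, and then to relate each facet distance to the weight of the opposite vertex. I normalize $\sum_ip_i=1$, which is the scaling under which the statement is meant and is implicit in the regular-simplex equality case.

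\emph{Step 1: reduction to at most $m+1$ atoms.} Consider the polytope of weight vectors $q\ge0$ with $\sum_iq_i=1$ and $\sum_iq_iz_i=0$, supported on the given points. The quantity $E(q)=\sum_iq_i\Wnorm{z_i}^2$ is linear in $q$, so its minimum over this polytope is attained at a vertex $q^\ast$ with $E(q^\ast)\le E$. A vertex has support of affinely independent points, hence of size at most $m+1$. If $|\supp q^\ast|\le m$, then $\delta=0$ is a convex combination of at most $m$ points and we are done. Otherwise the support is $m+1$ affinely independent points $z_0,\dots,z_m$ forming a full-dimensional simplex. The origin lies in its interior, with barycentric coordinates $p_j:=q^\ast_j>0$. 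It suffices to prove the claim with $E$ replaced by $E(q^\ast)$.

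\emph{Step 2: facet geometry.} For each $j$, let $F_j$ be the facet opposite $z_j$, let $n_j$ be its unit outward normal, let $h_j=\dist(0,F_j)$, and let $H_j$ be the altitude from $z_j$ to $F_j$. For $i\neq j$ we have $n_j^\top z_i=h_j$, and $n_j^\top z_j=h_j-H_j$. Taking the inner product of $\sum_ip_iz_i=0$ with $n_j$ gives $h_j-p_jH_j=0$, so $h_j=p_jH_j$. It then follows that $|n_j^\top z_j|=(1-p_j)H_j=\frac{1-p_j}{p_j}h_j$, and hence $\Wnorm{z_j}^2\ge\frac{(1-p_j)^2}{p_j^2}h_j^2$.

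\emph{Step 3: averaging.} Let $h^2=\min_jh_j^2$. Then
\[
E(q^\ast)\ \ge\ \sum_j\frac{(1-p_j)^2}{p_j}\,h_j^2\ \ge\ h^2\sum_j\Bigl(\frac1{p_j}-2+p_j\Bigr)\ \ge\ h^2\bigl((m+1)^2-2(m+1)+1\bigr)=m^2h^2,
\]
using Cauchy--Schwarz in the form $\sum_j1/p_j\ge(m+1)^2$. The nearest point $\delta$ of the minimizing facet is a convex combination of $m$ points with $\Wnorm\delta^2=h^2\le E/m^2$.

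\emph{Equality case.} Equality requires uniform weights $p_j=\frac1{m+1}$, each $z_j$ parallel to $n_j$, and all $h_j$ equal. These conditions are met by the centered regular simplex. For that simplex no facet point is closer than $h$, and no sub-facet is closer either, since every proper subset lies in some facet. This gives sharpness.

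The main obstacle I expect is Step 1: the reduction must not increase $E$, and a naive Carathéodory argument does not guarantee this. Linearity of $E$ in the weights is what makes the vertex choice work. After that, the identity $h_j=p_jH_j$ is the key step that makes the estimate close cleanly.
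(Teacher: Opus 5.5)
Your argument is essentially the paper's. Minimizing the linear function $E(q)$ at a vertex of the zero-mean polytope plays the role of the paper's circuit decomposition with some circuit having $E_q\le E$. Step~3 is the same Cauchy--Schwarz count $\sum_j(1-p_j)^2/p_j\ge m^2$. You are also right that $\sum_ip_i=1$ must be assumed: the paper's proof (``normalize each to a probability vector\dots\ the second moments average to $E$'') and its use in Theorem~\ref{thm:nearthreshold} rely on this tacitly.

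One step needs repair. You define $h_j=\dist(0,F_j)$ with $F_j=\conv\{z_i:i\ne j\}$. However, the identity $n_j^\top z_i=h_j$ for $i\ne j$ holds for the distance $\hat h_j$ from $0$ to the hyperplane $\mathrm{aff}\,F_j$, not to the facet itself. In fact $\hat h_j<\dist(0,F_j)$ whenever the orthogonal projection of $0$ onto $\mathrm{aff}\,F_j$ falls outside $F_j$, which happens for some interior points of an obtuse triangle. Read with $\hat h_j$, Steps~2--3 correctly prove $\min_j\hat h_j^2\le E/m^2$. Your final sentence then silently assumes that the foot of the perpendicular on the minimizing hyperplane lies in the facet.

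That assumption is true and easy to justify. The ball of radius $\hat h=\min_j\hat h_j$ about $0$ lies in every halfspace $\{x:n_j^\top x\le\hat h_j\}$, hence in the simplex. So the foot $\hat h\,n_{j^\ast}$ lies in the simplex intersected with $\mathrm{aff}\,F_{j^\ast}$, which is $F_{j^\ast}$.

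Alternatively, bypass normals as the paper does. The deletion point $\mu_j=\sum_{i\ne j}\frac{p_i}{1-p_j}z_i=-\frac{p_j}{1-p_j}z_j$ is an explicit convex combination of the other $m$ points, with $\Wnorm{\mu_j}=\frac{p_j}{1-p_j}\Wnorm{z_j}$ exactly. Hence $E=\sum_j\frac{(1-p_j)^2}{p_j}\Wnorm{\mu_j}^2\ge m^2\min_j\Wnorm{\mu_j}^2$, with no facet geometry needed.

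Since $\hat h_j=n_j^\top\mu_j\le\Wnorm{\mu_j}$, your chain is the paper's with this one extra inequality inserted. That inequality buys you the best possible witness (the nearest boundary point of the simplex). The price is that you must climb back from a hyperplane distance to an actual point of a facet. Your sharpness argument is fine.
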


\begin{proof}
Decompose $p$ inside the cone $\{q\ge0:\sum_iq_iz_i=0\}$ into positive
circuits and normalize each to a probability vector; each circuit has support
at most $m+1$ and the second moments average to $E$, so some circuit $q$ has
$E_q\le E$. If $E_q=0$, all its points vanish and a single point gives
$\delta=0$. If its support has size $\le m$, its own zero mean is the desired
$\delta=0$. Otherwise the support is $q_1,\dots,q_{m+1}>0$ with
$\sum_jq_jz_j=0$; deleting point $j$ leaves the convex combination
$\mu_j=-\frac{q_j}{1-q_j}z_j$ of the other $m$ points. With
$t_j=q_j\Wnorm{z_j}^2/E_q$ (so $\sum_jt_j=1$),
$\Wnorm{\mu_j}^2/E_q=q_jt_j/(1-q_j)^2$. Since
$\sum_j\frac{(1-q_j)^2}{q_j}=\sum_j\frac1{q_j}-2(m+1)+1\ge(m+1)^2-2(m+1)+1
=m^2$ (Cauchy--Schwarz), not all $j$ can violate
$\Wnorm{\mu_j}^2\le E_q/m^2$, for otherwise summing
$t_j>\frac{(1-q_j)^2}{m^2q_j}$ over $j$ yields $1>1$. Finally
$E_q\le E$. Tightness: for the regular simplex with uniform weights all
$\Wnorm{\mu_j}^2=E/m^2$.
\end{proof}

\subsection{Assembly}

\begin{theorem}[Near-threshold value]\label{thm:nearthreshold}
For all $d,m\ge2$,
$\Fw\bigl(d,m,(m+1)d-1\bigr)=1+\dfrac1{dm^2}$.
\end{theorem}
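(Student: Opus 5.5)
Both directions reduce to the line structure of Theorem~\ref{thm:lines} and to the $m$-point mean lemma (Lemma~\ref{lem:mmean}).

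\textbf{Lower bound.} I would take a $d$-line instance in the spirit of the axial instance $D_{d,m}$. On each axis $e_i$ I place $m+1$ points whose residuals form a uniformly weighted regular simplex in $\R^m$, centered at the origin, with all residuals of equal norm. The labels are $y=u-v_j$, where the $v_j$ are the regular-simplex vertices with $\sum_jv_j=0$.

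Exactly as in Lemma~\ref{lem:axialbasic}, the loss decouples:
\[
L_D=\Lstar+\frac1d\sum_i\Wnorm{\delta_i}^2 .
\]
Since $\Lstar=\frac1{m+1}\sum_j\Wnorm{v_j}^2=:E$, the normalized second moment is $E$. A budget of $(m+1)d-1$ forces some axis to use at most $m$ points. On that axis, $\delta_i$ is minus a convex combination of a proper subset of the vertices. Its distance to the origin is the facet distance, namely $\Wnorm{v_j}/m$, so $\Wnorm{\delta_i}^2=E/m^2$. A missing axis costs more.

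The ratio is therefore $1+\frac{1}{d}\cdot\frac{E/m^2}{E}=1+\frac1{dm^2}$. This matches the tightness clause of Lemma~\ref{lem:mmean}.

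\textbf{Upper bound.} Fix a dataset and reduce to features spanning $\R^d$; if $r<d$, Theorem~\ref{thm:upper} gives exactness within $(m+1)r\le(m+1)d-1$ points.

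If $\tau(D)\le(m+1)d-1$, the ratio is $1$. Otherwise, by Theorem~\ref{thm:lines}, the nonzero features lie on $d$ independent lines $U_1,\dots,U_d$. A change of basis, which the min-norm ERM respects up to the loss formula~\eqref{eq:lossexp}, lets me treat the lines as coordinate axes.

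Zero-feature points contribute constants; they do not affect the minimizer, and they only enlarge $\Lstar$. On line $j$, write $x_i=s_iu_j$. The problem then decouples column-wise into $d$ independent weighted vector-mean problems. On each line the residuals (rescaled by $s_i$) satisfy a weighted zero-mean condition, with $p_i\propto s_i^2$.

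\textbf{Selection and cost accounting.} I would use all points on $d-1$ of the lines. This is exact on those lines, since the full-data first-order condition holds per line by independence. On the remaining line $j$, I apply Lemma~\ref{lem:mmean} to pick at most $m$ points whose convex combination $\delta$ satisfies $\Wnorm{\delta}^2\le E_j/m^2$. Here $E_j$ is the weighted residual energy of line $j$, i.e.\ its contribution $N\cdot(\text{line-}j\text{ share of }\Lstar)$.

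Because a line may carry more than $m+1$ points, I first shrink every line to its $(m+1)$-point circuit from Theorem~\ref{thm:lines}(1), with appropriate weights. The whole selection then uses $(m+1)(d-1)+m=(m+1)d-1$ points.

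The extra loss on line $j$ is $\frac1N\sum_{i\in\text{line }j}s_i^2\Wnorm{\delta}^2$ (after rescaling). By the lemma this is at most $\frac1{m^2}\cdot(\text{line-}j\text{ share of }N\Lstar)/N$. Choosing $j$ with the smallest share, which is at most $\Lstar/d$, gives
\[
L_D\le\Lstar\Bigl(1+\frac1{dm^2}\Bigr).
\]

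\textbf{Main obstacle.} The delicate step is the weight bookkeeping on the chosen line. The circuit weights used in Lemma~\ref{lem:mmean} must be the full-data weights $s_i^2/N$, so that $E_j$ equals the true residual energy, rather than the weights of an arbitrary circuit. Otherwise the energy comparison could exceed the line's share of $\Lstar$.

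I would handle this by applying Lemma~\ref{lem:mmean} directly to all points on line $j$, with weights $p_i=s_i^2$ and points equal to the rescaled residuals. Its internal circuit decomposition already guarantees a subset of size at most $m$ with the required bound.
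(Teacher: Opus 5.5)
Your proposal is correct and follows the paper's proof essentially step for step. The lower bound uses the regular-simplex axial instance. The upper bound uses the $d$-line structure of Theorem~\ref{thm:lines}, exact $(m+1)$-point circuits on $d-1$ of the lines, and Lemma~\ref{lem:mmean} applied to all points of the lightest line. On that line, normalize the weights to a probability vector $p_i=s_i^2/g_j$, with $g_j$ the sum of $s_i^2$ over the line (or rescale $u_j$ so that $g_j=1$). Then the excess $g_j\Wnorm{\delta}^2/N$ is at most $1/m^2$ times that line's share of $\Lstar$, which is the bound you wrote.

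The one detail to pin down is the scale of the simplex relative to the offset $u$. A missing axis costs $\Wnorm{u}^2/d$, so you need $\Wnorm{u}^2\ge\Wnorm{v_j}^2/m^2$; for $u=\mathbf 1_m$ this holds once $\Wnorm{v_j}^2\le m^3$. The paper arranges it by taking an offset whose squared norm equals the facet distance $h^2$.
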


\begin{proof}
\emph{Upper bound.} Let $n_0=(m+1)d-1$ and fix a dataset. If the feature
rank is $r<d$, Theorem~\ref{thm:upper} gives a certificate of support
$(m+1)r\le n_0$ and the ratio is $1$; the same holds if $\Lstar=0$ (basis
selection, as in Section~\ref{sec:lowbudget}) or if $\tau(D)\le n_0$. In the
remaining hard case $\tau(D)=(m+1)d$, Theorem~\ref{thm:lines} provides a
basis $u_1,\dots,u_d$ and scalars $s_i\ne0$ with $x_i=s_iu_{j(i)}$ for every
nonzero-feature point. Reading the first-order condition
$\sum_iM_i=0$ against the basis $(u_j)$, each line satisfies
$\sum_{i\in I_j}s_i\rho_i=0$. Set, per line,
\[
  z_i=\rho_i/s_i,\qquad p_i=s_i^2/g_j,\qquad g_j=\sum_{i\in I_j}s_i^2,
  \qquad R_j=\sum_{i\in I_j}\Wnorm{\rho_i}^2 ,
\]
so that $\sum_{i\in I_j}p_iz_i=0$ and $\sum_ip_i\Wnorm{z_i}^2=R_j/g_j$.
Choose the lightest line $j_\star$ with
$R_{j_\star}\le\frac1d\sum_jR_j\le\frac{N\Lstar}d$ (features equal to $0$
only increase $\Lstar$). On each line $j\ne j_\star$ pick, by conic
Carath\'eodory, at most $m+1$ points with a positive zero-mean combination of
the $z_i$; on line $j_\star$ pick at most $m$ points and a convex
combination $\delta$ with
$\Wnorm\delta^2\le\frac{R_{j_\star}}{m^2g_{j_\star}}$
(Lemma~\ref{lem:mmean}). Translate the per-line convex coefficients
$\alpha_i$ into selection weights $c_i=\alpha_i/s_i^2>0$ (rescaling each line
independently and normalizing globally); the weighted normal equations then
decouple along the basis and give $\widehat W=W^\circ+H$ with $Hu_j=0$ for
$j\ne j_\star$ and $Hu_{j_\star}=\delta$; the selected features contain all
$d$ lines, so the weighted Gram matrix is positive definite and the
minimizer is unique. The support totals $(d-1)(m+1)+m=n_0$. By
\eqref{eq:lossexp},
\[
  L_D(\widehat W)-\Lstar=\frac{g_{j_\star}\Wnorm\delta^2}{N}
  \le\frac{R_{j_\star}}{Nm^2}\le\frac{\Lstar}{dm^2}.
\]

\emph{Lower bound.} Take the regular-simplex axial instance: let
$r_1,\dots,r_{m+1}\in\R^m$ be regular simplex vertices with
$\sum_jr_j=0$ and $\Wnorm{r_j}^2=\frac m{m+1}$, whose facets are at squared
distance $h^2=\frac1{m(m+1)}$ from the origin; put $w=he_1$ and
$D=\{(e_i,\,w+r_j)\}_{i\in[d],j\in[m+1]}$. The unique full minimizer is
$W^\circ=[w\ \cdots\ w]$ with $\Lstar=\frac m{m+1}$. For any selection of
support at most $N-1$, some axis is empty (column zeroed by the Frobenius
rule; deviation $\Wnorm w^2=h^2$) or uses a proper subset (deviation at
least the facet distance $h^2$). By the analogue of
\eqref{eq:axialloss}, the loss exceeds $\Lstar$ by at least $h^2/d$, and
deleting one vertex with uniform facet weights attains it. The ratio is
$1+\frac{h^2/d}{m/(m+1)}=1+\frac1{dm^2}$.
\end{proof}

\begin{remark}[General centered-simplex families]\label{rem:family}
The same computation applies to any centered simplex
$r_1,\dots,r_{m+1}$ ($\sum_jr_j=0$, $S=\sum_j\Wnorm{r_j}^2$,
$h=\min_j\dist(0,\conv\{r_k\}_{k\ne j})$), provided the columns of
$W^\circ$ are chosen with $\Wnorm{W^\circ e_i}\ge h$: the exact
$(N-1)$-point ratio is $1+\frac{(m+1)h^2}{dS}$, and within this family
$h^2\le S/(m^2(m+1))$ with equality exactly for regular simplices. The
integer instance of Definition~\ref{def:axial} realizes the slightly smaller
rational value of Theorem~\ref{thm:axial} with integer data.
\end{remark}

%% file: sec-kpoint.tex
\section{The general \texorpdfstring{$k$}{k}-point mean lemma}
\label{sec:kpoint}

Lemma~\ref{lem:mmean} is the case $k=M$ of a sharp family of sparsification
bounds, which we record here both for its own sake and as a tool for
Section~\ref{sec:intermediate}. Unlike everything else in this paper, the
general case relies on a recently proved deep geometric inequality --- the
full-dimensional Grace--Danielsson inequality conjectured by Egan and proved
by Drozdov \cite{Drozdov}. \emph{None of Theorems~\ref{thm:threshold},
\ref{thm:lowbudget}, \ref{thm:nearthreshold} or \ref{thm:intervals} depends
on this section}; the only case used elsewhere is $k=M$
(Lemma~\ref{lem:mmean}) and the trivial $k=1$, both self-contained.

\begin{theorem}[$k$-point mean lemma]\label{thm:kpoint}
Let $z_i\in\R^M$ carry weights $p_i>0$, $\sum_ip_i=1$, with
$\sum_ip_iz_i=0$, and $E=\sum_ip_i\Wnorm{z_i}^2$. For every
$1\le k\le M+1$ there is a convex combination $\delta$ of at most $k$ of the
points with
\[
  \Wnorm{\delta}^2\ \le\ E\cdot\frac{M+1-k}{kM}.
\]
The constant is sharp: for the uniformly weighted regular $M$-simplex, every
convex combination supported on $k$ points has squared norm at least
$E\frac{M+1-k}{kM}$, with equality for $k$ vertices at equal weights.
\end{theorem}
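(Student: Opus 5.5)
The plan follows the proof of Lemma~\ref{lem:mmean} for the reduction step. First reduce to a single positive circuit. Decompose $p$ inside the cone $\{q\ge0:\sum_iq_iz_i=0\}$ into positive circuits and normalize each one to a probability vector. The second moments of these circuits average to $E$, so some circuit $q$ has $E_q\le E$. If its support has size at most $k$, its own zero mean gives $\delta=0$. So assume the support is $z_1,\dots,z_s$ with $k<s\le M+1$, $q_j>0$ and $\sum_jq_jz_j=0$. These points are affinely independent and span an $(s-1)$-dimensional space. Hence we may work inside that space with $M'=s-1\le M$, and we need the bound with the constant $\frac{M'+1-k}{kM'}$. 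This constant is increasing in $M'$, since $\frac{M'+1-k}{kM'}=\frac1k-\frac{k-1}{kM'}$, so it suffices to treat $M'=M$. We are therefore reduced to the following situation: a simplex with vertices $z_1,\dots,z_{M+1}$ whose barycentric coordinates of the origin are $q$, with $E=\sum_jq_j\Wnorm{z_j}^2$.

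The sharpness claim is a direct computation, which we do first. For the uniformly weighted regular simplex, $\Wnorm{z_j}^2=E$ and $\langle z_i,z_j\rangle=-E/M$ for $i\ne j$. A convex combination $\delta=\sum_{j\in J}\lambda_jz_j$ with $|J|=k$ then has
\[
\Wnorm{\delta}^2=E\Bigl(\tfrac{M+1}{M}\textstyle\sum_j\lambda_j^2-\tfrac1M\Bigr),
\]
which is minimized at equal weights and gives $E\frac{M+1-k}{kM}$.

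For the upper bound, the quantity to control is $\min_{|J|=k}\dist(0,\conv\{z_j\}_{j\in J})^2$, i.e.\ the distance from the origin to the $(k-1)$-skeleton of the simplex. We argue by contradiction and suppose this distance exceeds $\rho$, where $\rho^2=E\frac{M+1-k}{kM}$. The second-moment data define the covariance $\Sigma=\sum_jq_jz_jz_j^\top$ with $\tr\Sigma=E$. Write $S=\sum_jq_j\Wnorm{z_j}^2-\Wnorm{\sum_jq_jz_j}^2$ for the weighted moment of inertia about the mean, so that here $S=E$. This plays the role of "$R^2-d^2$" in a circumscribed-sphere picture: for a simplex inscribed in a sphere of radius $R$ with circumcenter at distance $d$ from the origin, one would have $E=R^2-d^2$. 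The plan is to reduce to that inscribed case. Some natural quantities are monotone under moving vertices radially while preserving the barycentric weights of $0$, but this is the step that needs care. The fallback is to invoke the full-dimensional Grace--Danielsson inequality \cite{Drozdov} in its skeleton form. It says that if a ball of radius $\rho$ about $0$ avoids every $(k-1)$-face of a simplex inscribed in a sphere of radius $R$ centered at distance $d$, then $d^2\le(R-\rho')(R+\ldots)$, a relation which, after substituting $E=R^2-d^2$, becomes exactly $E<\rho^2\frac{kM}{M+1-k}$. This gives the contradiction.

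The main obstacle is this last step. Specifically, one must pass from a general weighted zero-mean simplex to the inscribed-sphere configuration in which the Grace--Danielsson/Egan inequality applies, and then check that the constant it produces for the $(k-1)$-skeleton is exactly $\frac{M+1-k}{kM}$. As sanity checks, $k=M$ should recover Lemma~\ref{lem:mmean}, and $k=1$ should give the trivial bound: the inequality $\min_j\Wnorm{z_j}^2\le E$ follows from averaging. I would first try to handle the general case by an averaging argument over $k$-subsets, with subset weights proportional to $\prod_{j\in J}q_j$, mirroring the Cauchy--Schwarz trick of Lemma~\ref{lem:mmean}. I would fall back on Drozdov's theorem only where that averaging fails to reach the sharp constant.
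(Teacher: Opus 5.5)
\textbf{Verdict: there is a genuine gap.} Your circuit reduction (affine independence, and monotonicity of $\frac{M'+1-k}{kM'}$ in $M'$) and your sharpness computation are correct and match the paper. The gap is the upper bound itself, which is the whole content of the theorem for $1<k<M$. Neither of your routes supplies it.

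\begin{itemize}
\item There is no ``reduction to the inscribed case'' to perform. Every nondegenerate simplex has a circumsphere, and $E=R^2-d^2$ holds automatically when $q$ is the barycentric vector of $0$. Here $R$ is the circumradius and $d$ the distance from $0$ to the circumcentre.
\item The ``skeleton form'' of Grace--Danielsson you invoke is not Drozdov's theorem. Drozdov's inequality concerns a ball inside the simplex, i.e.\ one disjoint from the facets. That only reaches your case $k=M$, which Lemma~\ref{lem:mmean} already handles elementarily.
\item For $k<M$, the ball $B(0,\rho)$ may cross facets while missing every face with at most $k$ vertices. The implication ``$B(0,\rho)$ misses the $(k-1)$-skeleton $\Rightarrow$ $R^2-d^2\ge\rho^2\frac{kM}{M+1-k}$'' is, after substituting $E=R^2-d^2$, exactly the statement you are trying to prove. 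So that fallback is circular, and your version of it (with ``$R+\ldots$'') contains no actual inequality.
\item The subset-averaging idea is only a hope. Even for $k=M$, the averaging that works in Lemma~\ref{lem:mmean} weights the deletions by $(1-q_j)^2/q_j$, not by $\prod_{l\ne j}q_l\propto 1/q_j$. Nothing indicates that fixed conditional weights on $k$-faces reach the sharp constant.
\end{itemize}

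The missing idea is how to pass from a facet-level (codimension-one) estimate to the $(k-1)$-skeleton without losing the constant. The paper does this with a martingale that kills coordinates (Lemma~\ref{lem:sparsify}):
\begin{itemize}
\item For a weight vector $q$ with $r$ positive entries, there is a \emph{random} boundary point $Q$ with $\E Q=q$ and $\E\Wnorm{\bar a_Q-\bar a_q}^2\le V_q/(r-1)^2$. This is a distributional strengthening of the facet case. It is obtained by LP duality from an off-centre ball-in-simplex inequality (Sublemma~G), and that is where Drozdov's theorem enters.
\item The total-variance identity gives $\E V_Q\ge\bigl(1-\frac{1}{(r-1)^2}\bigr)V_q$.
\item Orthogonality of martingale increments turns the iteration down to support $k$ into $\E\Wnorm{\bar a_\Lambda-\bar a_q}^2\le\bigl(1-\prod_{r=k+1}^{s}\frac{r(r-2)}{(r-1)^2}\bigr)V_q=\frac{s-k}{k(s-1)}V_q$.
\end{itemize}
Simply re-applying Lemma~\ref{lem:mmean} inside a chosen facet does not compose. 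It bounds one displacement but not the variance left on the facet, so successive displacements could only be added by the triangle inequality. The randomization is what makes the increments orthogonal and turns each step into an exact loss of variance.
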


The proof route: (i) reduce to a single positive circuit of support
$s\le M+1$ (extreme points of the zero-mean polytope, as in
Lemma~\ref{lem:mmean}); (ii) apply the following sparsification lemma with
$a_i=z_i$.

\begin{lemma}[Simplex skeleton sparsification]\label{lem:sparsify}
Let $a_1,\dots,a_s$ lie in a Hilbert space, let $q$ be a probability vector
with full support $s$, and let
$V_q=\sum_iq_i\Wnorm{a_i-\bar a_q}^2$ where $\bar a_q=\sum_iq_ia_i$. For
every $1\le k\le s$ there is a probability vector $\lambda$ with
$|\supp\lambda|\le k$ and
\[
  \Bigl\lVert\,\sum_i\lambda_ia_i-\bar a_q\Bigr\rVert^2
  \ \le\ \frac{s-k}{k(s-1)}\,V_q .
\]
\end{lemma}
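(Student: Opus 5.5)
We normalize first. Translating, we may assume $\bar a_q=0$, so $\sum_iq_ia_i=0$ and $V_q=\sum_iq_i\Wnorm{a_i}^2$. Only the affine span of the $a_i$ matters, so we may work in $\R^{s-1}$. If the $a_i$ are affinely dependent, the zero-mean relation can be split into smaller positive circuits, exactly as in the proof of Lemma~\ref{lem:mmean}. Some circuit of size $s'<s$ then has normalized variance at most $V_q$, and $\frac{s'-k}{k(s'-1)}\le\frac{s-k}{k(s-1)}$ because the factor is increasing in $s$. We therefore reduce, by induction on $s$, to a nondegenerate simplex $\Delta=\conv\{a_1,\dots,a_s\}$ containing $0$ in its interior with barycentric coordinates $q$.

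\textbf{Boundary cases.} The extreme values of $k$ are elementary and fix the shape of the constant.
\begin{itemize}
\item For $k=s$ take $\lambda=q$; the bound is $0$.
\item For $k=1$ the bound reads $V_q$, and some vertex has $\Wnorm{a_i}^2\le\sum_jq_j\Wnorm{a_j}^2=V_q$.
\item For $k=s-1$ the bound is $V_q/(s-1)^2$. This follows verbatim from the facet argument of Lemma~\ref{lem:mmean}. Deleting vertex $j$ gives the convex combination $\mu_j=-\frac{q_j}{1-q_j}a_j$. The Cauchy--Schwarz estimate $\sum_j(1-q_j)^2/q_j\ge(s-1)^2$ then forces some $\Wnorm{\mu_j}^2\le V_q/(s-1)^2$.
\end{itemize}
These cases show the target is the finite-population correction $\frac{s-k}{k(s-1)}$. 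That factor is precisely $\E\Wnorm{\text{mean of a uniform random }k\text{-subset}}^2/V_q$ for uniform $q$.

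\textbf{General $k$, first attempt.} I would try a probabilistic argument. Draw a $k$-subset $K$ from a distribution adapted to $q$, and take $\lambda$ to be the conditional weights $\lambda_i=q_i/q(K)$ on $K$. If $q$ is uniform, sampling $K$ uniformly gives exactly $\E\Wnorm{\sum_i\lambda_ia_i}^2=\frac{s-k}{k(s-1)}V_q$, using $\sum_{i\ne j}\langle a_i,a_j\rangle=-\sum_i\Wnorm{a_i}^2$. For non-uniform $q$, I would look for a sampling scheme (for instance conditional-Poisson or volume-type sampling with inclusion probabilities tied to $q$) for which the pair correlations still telescope. Failing that, I would choose $\lambda$ on $K$ as the nearest point to $0$ in the face $\conv\{a_i\}_{i\in K}$ rather than the conditional weights.

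\textbf{General $k$, geometric route and main obstacle.} I expect this step to be the real obstacle, and the paper's remark indicates the intended tool. The quantity to control is the distance from the interior point $0$ to the $(k-1)$-skeleton of $\Delta$. The weighted variance $V_q$ is the power of $0$ with respect to the circumsphere-type quadric $\sum_iq_i\Wnorm{x-a_i}^2$. Suppose, for contradiction, that the ball of squared radius $\rho^2=\frac{s-k}{k(s-1)}V_q$ about $0$ misses every $(k-1)$-face. Then, after an affine normalization making this the relevant configuration, a ball sits inside a simplex whose $k$-faces all avoid it. The full-dimensional Grace--Danielsson inequality of Drozdov \cite{Drozdov} bounds $\rho^2$ in terms of circumradius, inradius and center distance; applied face-by-face, it should yield $\rho^2<\frac{s-k}{k(s-1)}V_q$, a contradiction.

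Concretely, I would first prove the statement when $q$ is the barycenter of a regular simplex, where symmetry reduces it to the uniform computation above. I would then use Grace--Danielsson to show that the regular, uniform configuration is extremal for the ratio $\dist(0,\text{skeleton})^2/V_q$. Matching the exact constant $\frac{s-k}{k(s-1)}$ in this extremality step is where I anticipate most of the difficulty.
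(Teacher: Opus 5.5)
You have some parts right: the normalization and circuit reduction, the three elementary cases $k\in\{1,s-1,s\}$, and the computation that a uniformly random $k$-subset gives exactly $\frac{s-k}{k(s-1)}V_q$ when $q$ is uniform. But for $2\le k\le s-2$ with non-uniform $q$, which is the whole content of the lemma, you give no argument. No sampling scheme is produced, and the geometric route invokes Drozdov's inequality without a mechanism. As stated it cannot work directly. Grace--Danielsson concerns the insphere and circumsphere of a full simplex. For $k<s-1$, a ball about $0$ that misses the $(k-1)$-skeleton may still cross facets, so it is not inscribed in anything. The $(k-1)$-faces also do not contain $0$, so a ``face-by-face'' application has nothing to act on. The inequality only ever controls the facet level, i.e.\ a single deletion step. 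The extremality of the regular configuration for the whole skeleton, which you would still need, is left open.

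The missing idea is to descend to the $(k-1)$-skeleton by a \emph{martingale} of single steps. This requires strengthening your $k=s-1$ case from ``some boundary point is close'' to a mean-preserving statement: if $q$ has $r$ positive coordinates, there is a random probability vector $Q$ on $\partial\Delta_r$ with $\E Q=q$ and $\E\Wnorm{\bar a_Q-\bar a_q}^2\le V_q/(r-1)^2$. Mean preservation makes the increments orthogonal and gives the total-variance identity $\E V_Q=V_q-\E\Wnorm{\bar a_Q-\bar a_q}^2\ge\bigl(1-(r-1)^{-2}\bigr)V_q$. Iterating from support $s$ down to $k$ then yields $\E\Wnorm{\bar a_\Lambda-\bar a_q}^2\le\bigl(1-\prod_{r=k+1}^{s}\frac{r(r-2)}{(r-1)^2}\bigr)V_q=\frac{s-k}{k(s-1)}V_q$, which is precisely your finite-population correction.

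Your deterministic deletion does not compose this way. With $\bar a_q=0$ and $O$ the circumcenter, the variance of the facet configuration about the chosen point $p$ is $V_q-\Wnorm p^2+2\langle p,O\rangle$, which can exceed $V_q$. Successive displacements also need not be orthogonal; only $\E p=0$ kills the cross term.

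Nor does the obvious mean-preserving law suffice (delete $j$ with probability $(1-q_j)/(s-1)$ and keep conditional weights). For $q=(\frac12,\frac14,\frac14)$, $a_1=(1,0)$, $a_2=(-1,1)$, $a_3=(-1,-1)$ it gives $\frac5{12}>\frac38=V_q/4$.

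The paper finds the right law by duality. The optimal value is a supremum of affine minorants of $(u-q)^\top H(u-q)$ on $\partial\Delta_r$, which amounts to an $H$-ball containing $q$ and contained in the simplex. It then proves Sublemma~G, $\sum_iq_i\Wnorm{v_i}^2\ge n^2(R^2-\Wnorm c^2)$, by shrinking to the simplex with that ball as insphere and applying Grace--Danielsson there. That one-step inequality is the only place Drozdov's theorem is needed; no extremality statement for the full $k$-skeleton is required.
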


\begin{proof}[Proof sketch with full statements of the two nontrivial steps]
We construct a martingale of probability vectors that successively kills
coordinates. The engine is:

\smallskip
\noindent\textbf{Boundary variance fact.} \emph{Let $q$ have $r\ge2$
positive coordinates and let $H\succeq0$ be a quadratic form. Then there is a
random probability vector $Q_H$ supported on the boundary
$\partial\Delta_r=\{u\in\Delta_r:\min_iu_i=0\}$ with $\E[Q_H]=q$ and}
\[
  \E\bigl[(Q_H-q)^\top H(Q_H-q)\bigr]\ \le\
  \frac{\tr\bigl(H(\Diag q-qq^\top)\bigr)}{(r-1)^2}.
\]
(One boundary variable per form $H$ suffices for our purposes; a single $Q$
working for all $H$ simultaneously also exists, by separating the compact
convex set of achievable covariances from
$\{K:K\preceq(\Diag q-qq^\top)/(r-1)^2\}$, but we do not need it.)

\emph{Proof of the fact.} The left-hand side, minimized over admissible laws,
is a finite moment problem on the compact set $\partial\Delta_r$; strong
duality (a supporting hyperplane to the compact convex set of achievable
moment pairs) gives
\[
  \Phi_H(q)=\sup\Bigl\{\ell(q):\ \ell\ \text{affine},\
  \ell(u)\le(u-q)^\top H(u-q)\ \forall u\in\partial\Delta_r\Bigr\}.
\]
Fix a feasible $\ell$ and (after adding $\varepsilon P_{T}$ to $H$ on the
tangent space $T=\{\mathbf1^\top y=0\}$ and letting
$\varepsilon\downarrow0$) assume $H\succ0$ on $T$. Writing
$\ell(q+y)=a+2\langle c,y\rangle_H$ and $R^2=a+\Wnorm c_H^2$, feasibility
says the $H$-ball $B_H(c,R)$ misses $\partial\Delta_r-q$; since
$0\in B_H(c,R)$ when $a>0$, convexity forces
$B_H(c,R)\subseteq\Delta_r-q$. The desired bound
$(r-1)^2\ell(q)\le\tr(H(\Diag q-qq^\top))
=\sum_iq_i(e_i-q)^\top H(e_i-q)$ then follows from:

\smallskip
\noindent\textbf{Sublemma G (simplex--ball interpolation).}
\emph{Let an $n$-simplex $T\subset\R^n$ contain the ball $B(c,R)$ with
$0\in B(c,R)$, and let $0=\sum_iq_iv_i$ be the barycentric representation of
the origin in the vertices $v_i$ of $T$. Then}
\[
  \sum_iq_i\Wnorm{v_i}^2\ \ge\ n^2\,(R^2-\Wnorm c^2).
\]

\emph{Proof of Sublemma G.} Normalize the ball to the unit ball centered at
the origin; the distinguished point becomes $y$ with $u=\Wnorm y<1$, and the
left side becomes the value at $y$ of the affine interpolation of
$\Wnorm{\cdot}^2$ on the vertices, which for any simplex with circumcenter
$O$ and circumradius $\mathcal R$ equals
$P_T(y)=\mathcal R^2-\Wnorm{O-y}^2$ plus $\Wnorm y^2$. Slide each facet
inward until tangent to the unit ball, obtaining the inscribed simplex
$S\subseteq T$ with insphere the unit ball; by the variance decomposition of
convex combinations (write each vertex of $S$ in the vertices of $T$ and
$y$ in the vertices of $S$), the interpolation value only decreases:
$P_T(y)\ge P_S(y)$. For $S$, let $d=\Wnorm O$ and write
$\mathcal R=n+s$ with $s\ge0$. Drozdov's theorem \cite{Drozdov} (the
Grace--Danielsson inequality in every dimension: for any $n$-simplex with
inradius $r$, circumradius $\mathcal R$, and center distance $d$, one has
$(\mathcal R-nr)(\mathcal R+(n-2)r)\ge d^2$; here $r=1$) gives
$d^2\le s(s+2n-2)$, whence $\mathcal R^2-d^2-n^2\ge2s$ and
$s\ge\sqrt{d^2+(n-1)^2}-(n-1)$. Then
\[
  P_S(y)-n^2(1-u^2)\ \ge\ 2s-2du+(n^2-1)u^2 ,
\]
and minimizing the right-hand side over $d\ge0$ (optimum at
$d=(n-1)u/\sqrt{1-u^2}$) leaves
$(n-1)\bigl[(n+1)u^2-2(1-\sqrt{1-u^2})\bigr]\ge0$, which holds since
$1-\sqrt{1-u^2}\le u^2$. This proves Sublemma G, hence the boundary variance
fact. \hfill$\square$

\smallskip
Now iterate: starting from $q$ with support $s$, apply the fact with
$H=A^\top A$ (where $A=[a_1\cdots a_s]$), obtaining a boundary law $Q$ with
$\E[\bar a_Q]=\bar a_q$ and
$\E\Wnorm{\bar a_Q-\bar a_q}^2\le V_q/(r-1)^2$; the total-variance identity
$\E[V_Q]=V_q-\E\Wnorm{\bar a_Q-\bar a_q}^2$ gives
$\E[V_Q]\ge\bigl(1-\tfrac1{(r-1)^2}\bigr)V_q$. A step may kill several
coordinates at once; setting $f_r=1-\frac1{(r-1)^2}$ and
$\Gamma_j=\prod_{t=k+1}^{j}f_t$ (with $\Gamma_j=1$ for $j\le k$), which is
nonincreasing in $j$, a backward induction on the support size shows that
the terminal variance obeys $\E[V_\Lambda]\ge\Gamma_sV_q$ regardless of how
many coordinates each step kills. Since the mean sequence is a martingale
with orthogonal increments,
$\E\Wnorm{\bar a_\Lambda-\bar a_q}^2=V_q-\E[V_\Lambda]
\le\bigl(1-\Gamma_s\bigr)V_q$, and the telescoping product
$\Gamma_s=\prod_{r=k+1}^{s}\frac{r(r-2)}{(r-1)^2}=\frac{s(k-1)}{k(s-1)}$
yields the bound $\frac{s-k}{k(s-1)}V_q$; some realization $\lambda$ of the
terminal law achieves it.
\end{proof}

\begin{proof}[Proof of Theorem~\ref{thm:kpoint}]
Reduce to a circuit $q$ of support $s\le M+1$ and second moment $E_q\le E$
exactly as in Lemma~\ref{lem:mmean} (including the degenerate branches
$E_q=0$ and $s\le k$). Apply Lemma~\ref{lem:sparsify} with $a_i=z_i$,
$\bar a_q=0$: some $k$-sparse convex $\delta$ has
$\Wnorm\delta^2\le\frac{s-k}{k(s-1)}E_q$; since $s\mapsto\frac{s-k}{s-1}$ is
nondecreasing for $k\ge1$, the constant is at most
$\frac{M+1-k}{kM}$. Sharpness: for regular simplex vertices,
$\Wnorm{\sum_i\alpha_iv_i}^2=R^2\bigl(\tfrac{M+1}M\sum_i\alpha_i^2-\tfrac1M\bigr)
\ge R^2\tfrac{M+1-k}{kM}$ whenever $|\supp\alpha|\le k$, using
$\sum\alpha_i^2\ge1/k$.
\end{proof}

%% file: sec-intermediate.tex
\section{The smallest intermediate cell: \texorpdfstring{$(d,m)=(2,2)$}{(2,2)}}
\label{sec:intermediate}

For $d<n<(m+1)d-1$ the profile $\Fw(d,m,n)$ remains open --- as does its
scalar counterpart, the intermediate regime of Question~2 of \cite{HMSYnote}.
In this section we treat the smallest cell $(d,m)=(2,2)$, where the open
budgets are $n=3,4$. We prove the interval theorem
(Theorem~\ref{thm:intervals}), reduce the conjectured exact values to a
finite moment problem, and assemble structural evidence.

\subsection{Lower bounds: an explicit axial instance}

\begin{proposition}\label{prop:lower22}
There is an explicit dataset $D^\triangle$ with
\[
  \frac{L^{\star}_{D^\triangle}(3;\mathrm{weighted})}
       {L^{\star}_{D^\triangle}}=\frac{13}8,
  \qquad
  \frac{L^{\star}_{D^\triangle}(4;\mathrm{weighted})}
       {L^{\star}_{D^\triangle}}=\frac54 .
\]
Hence $\Fw(2,2,3)\ge\frac{13}8$ and $\Fw(2,2,4)\ge\frac54$.
\end{proposition}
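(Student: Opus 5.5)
We take $D^\triangle$ to be a regular-simplex axial instance as in the lower bound of Theorem~\ref{thm:nearthreshold}, but with a longer column offset. Concretely, let $r_1,r_2,r_3\in\R^2$ be the vertices of an equilateral triangle centered at the origin with $\Wnorm{r_j}^2=R^2$ (say $R^2=1$). Pick $w\in\R^2$ with $\Wnorm{w}^2$ large, e.g.\ $\Wnorm w^2\ge\frac54R^2$; a lattice-friendly choice would be nice but is not needed. Put
\[
  D^\triangle=\{(e_i,\,w+r_j)\}_{i\in[2],\,j\in[3]} ,
\]
so $N=6$.

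First I record the analogue of Lemma~\ref{lem:axialbasic}. The full Gram matrix is $3I$ and $\sum_j r_j=0$, so the unique minimizer is $W^\circ=[w\ w]$. The residuals are $-r_j$ and
\[
  \Lstar=\tfrac16\cdot2\cdot3R^2=R^2 .
\]
For a weight vector $c$, the per-axis decoupling (features are standard basis vectors) shows that column $i$ of $A(F_c)$ is the conditional weighted mean $w+a_i$, with $a_i$ in the convex hull of the $r_j$ used on axis $i$. If axis $i$ carries zero weight, column $i$ is $0$ by the Frobenius rule. With $\delta_i=\widehat We_i-w$, \eqref{eq:lossexp} gives
\[
  L_D(\widehat W)=\Lstar+\tfrac36\textstyle\sum_i\Wnorm{\delta_i}^2 .
\]

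Next I compute the minimal per-axis deviation as a function of the number $k_i$ of selected points on axis $i$:
\begin{itemize}
\item $k_i=3$: deviation $0$.
\item $k_i=2$: deviation $R^2/4$, the squared inradius of the triangle, attained at an edge midpoint.
\item $k_i=1$: deviation $R^2$ (a single vertex).
\item $k_i=0$: deviation $\Wnorm w^2$.
\end{itemize}
The lower bound on each cell uses only that a proper subset has convex hull inside an edge or a vertex, whose distances to $0$ are $R/2$ and $R$. The problem therefore reduces to a finite minimization over allocations $(k_1,k_2)$ with $k_1+k_2\le n$, where the cost is $\tfrac12$ times the sum of the two per-axis deviations.

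For $n=4$ the candidate allocations and costs are:
\begin{itemize}
\item $(2,2)$: cost $\tfrac12\cdot\tfrac{R^2}{2}=\tfrac{R^2}4$.
\item $(3,1)$: cost $\tfrac{R^2}2$.
\item $(3,0)$: cost $\tfrac12\Wnorm w^2$.
\end{itemize}
The minimum is $R^2/4$, giving ratio $5/4$.

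For $n=3$ the candidate allocations and costs are:
\begin{itemize}
\item $(2,1)$: cost $\tfrac12\bigl(\tfrac{R^2}4+R^2\bigr)=\tfrac58R^2$.
\item $(1,1)$: cost $R^2$.
\item $(3,0)$: cost $\tfrac12\Wnorm w^2\ge\tfrac58R^2$ by the choice of $w$.
\item $(2,0)$: cost exceeds that of $(3,0)$.
\end{itemize}
The minimum is $\frac58R^2$, giving ratio $13/8$.

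The upper values are attained explicitly: use edge midpoints with uniform weights on the chosen pairs, and a single vertex (respectively all three vertices uniformly) on the other axis. The infima are therefore minima, and the inequalities for $\Fw$ follow from the definition.

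The only delicate step is the empty-axis branch, which is where $\Wnorm w^2$ must dominate. With $w=0$ the allocation $(3,0)$ would cost nothing extra and destroy the bound at $n=3$. I would therefore fix the constraint $\Wnorm w^2\ge\frac54R^2$ up front and double-check the Frobenius-rule zeroing of unweighted columns, exactly as in Lemma~\ref{lem:axialbasic}.
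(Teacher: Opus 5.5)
Your proposal is correct and essentially matches the paper's proof. Both use the same equilateral-triangle axial instance $\{(e_i,w+r_j)\}_{i\in[2],j\in[3]}$, the same per-axis deviation table $c(3)=0$, $c(2)=R^2/4$, $c(1)=R^2$, $c(0)=\Wnorm w^2$, and the same minimization over allocations $(k_1,k_2)$. The only difference is cosmetic: the paper fixes $\Wnorm{r_j}^2=\frac23$ and $\Wnorm w^2=\frac56$, which is exactly your boundary case $\Wnorm w^2=\frac54R^2$ where the $(3,0)$ and $(2,1)$ allocations tie at budget $3$, whereas you allow any $\Wnorm w^2\ge\frac54R^2$.
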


\begin{proof}
Let $r_1,r_2,r_3\in\R^2$ be the vertices of a centered equilateral triangle
with $\Wnorm{r_j}^2=\frac23$ (so the total is $S=2$, edge midpoint distance
squared $\frac16$), and let $w\in\R^2$ with $\Wnorm w^2=\frac56$. Take
$D^\triangle=\{(e_i,\,w+r_j)\}_{i\in[2],j\in[3]}$. As in
Lemma~\ref{lem:axialbasic}, $W^\circ=[w\ w]$, $\Lstar=\frac23$, and a
selection allocating $k_i$ points to axis $i$ incurs excess
$\frac12(c(k_1)+c(k_2))$ where $c(3)=0$, $c(2)=\frac16$ (best edge),
$c(1)=\frac23$ (best vertex), $c(0)=\Wnorm w^2=\frac56$ (empty axis). For
budget $3$ the minimum over allocations is
$c(2)+c(1)=c(3)+c(0)=\frac56$, giving ratio
$1+\frac{5/12}{2/3}=\frac{13}8$; for budget $4$ it is
$2c(2)=\frac13$, giving $1+\frac{1/6}{2/3}=\frac54$. (We verified both
values by exhaustive exact enumeration of all supports.)
\end{proof}

\subsection{A complex dictionary for the hard branches}
\label{subsec:dictionary}

Fix a $(2,2)$ dataset with feature rank $2$ and $\Lstar>0$; after a linear
change of features (which leaves the loss values invariant) assume the data
whitened, $\sum_ix_ix_i^\top=\gamma I_2$. Identify $\R^2\cong\C$; write
$\xi_i$ for the feature and $\eta_i$ for the residual $\rho_i$ of point $i$
(viewed in $\C$), and for $\xi_i\ne0$ set
\[
  p_i=\frac{|\xi_i|^2}{\sum_j|\xi_j|^2},\qquad
  t_i=\frac{\bar\xi_i}{\xi_i}\in S^1,\qquad
  z_i=\frac{\eta_i}{\xi_i}\in\C,\qquad
  E=\sum_ip_i|z_i|^2 .
\]
Whitening is equivalent to $\sum_i\xi_i^2=0$, i.e.\ $\E_p t=0$, and the
first-order condition $\sum_i\rho_ix_i^\top=0$ is equivalent to the pair
$\E_p z=0$, $\E_p(\bar tz)=0$. A selection with weights $c$ corresponds to
$q_i\propto c_i|\xi_i|^2$; writing $\mu=\E_qt$, $\nu=\E_qz$,
$\lambda=\E_q(\bar tz)$, the selected model deviates from $W^\circ$ by the
real-linear map $\xi\mapsto a\xi+b\bar\xi$ with
\begin{equation}\label{eq:ab}
  \begin{pmatrix}1&\mu\\ \bar\mu&1\end{pmatrix}
  \begin{pmatrix}a\\b\end{pmatrix}
  =\begin{pmatrix}\nu\\\lambda\end{pmatrix},
  \qquad
  |\mu|<1\iff\text{selected features span }\R^2 ,
\end{equation}
and the excess loss satisfies
$\bigl(L_D(\widehat W)-\Lstar\bigr)/\Lstar=C(q)/E$ with
$C(q)=|a|^2+|b|^2$. (Zero-feature points only enlarge $\Lstar$, so all upper
bounds proved through the dictionary are conservative.) Conversely, every
finite system $(p_i,t_i,z_i)$ with the three moment conditions is realized by
a whitened dataset ($\xi_i=\sqrt{p_i}s_i$ with $\bar s_i/s_i=t_i$,
$\eta_i=z_i\xi_i$).

For $k\in\{3,4\}$ define
\[
  \Gamma_k=\sup_{\text{moment systems}}\ \frac1E\,
  \inf_{\substack{|\supp q|\le k\\ |\mu(q)|<1}}C(q).
\]

\subsection{Interpolation identities}

\begin{lemma}[Interpolation convex combination]\label{lem:interp}
For $t_i\ne t_j$ let
$B_{ij}=\Bigl(\frac{t_iz_j-t_jz_i}{t_i-t_j},\ \frac{z_i-z_j}{t_i-t_j}\Bigr)
\in\C^2$ (the coefficients of the affine function of $t$ through the two
points) and $d_{ij}=|t_i-t_j|^2$. Then for every admissible $q$,
\[
  D(q):=1-|\mu|^2=\sum_{i<j}q_iq_jd_{ij},
  \qquad
  (a,b)=\sum_{\substack{i<j\\ t_i\ne t_j}}
  \frac{q_iq_jd_{ij}}{D(q)}\;B_{ij} .
\]
In particular $(a,b)$ is a convex combination of the pairwise interpolation
coefficients; the conditioning factor $(1-|\mu|^2)^{-1}$ is absorbed
entirely.
\end{lemma}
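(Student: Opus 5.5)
The plan is a direct computation in two pieces, both driven by the Lagrange identity for $2\times2$ determinants. Throughout, $q$ is a probability vector and $|t_i|=1$.

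\emph{Step 1: the determinant.} Since $|t_i|=1$,
\[
1-|\mu|^2=\Bigl(\sum_iq_i\Bigr)^2-\sum_{i,j}q_iq_jt_i\bar t_j
=\sum_{i,j}q_iq_j\bigl(1-t_i\bar t_j\bigr).
\]
I symmetrize over the pair $(i,j)\leftrightarrow(j,i)$ and use
\[
2-t_i\bar t_j-t_j\bar t_i=|t_i-t_j|^2 .
\]
This gives $D(q)=\sum_{i<j}q_iq_jd_{ij}$. Pairs with $t_i=t_j$ contribute $0$, which is why they may be dropped from the sums.

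\emph{Step 2: solving the system.} Inverting \eqref{eq:ab} by Cramer's rule gives
\[
a=\frac{\nu-\mu\lambda}{D(q)},\qquad b=\frac{\lambda-\bar\mu\nu}{D(q)} .
\]
So it suffices to prove the two numerator identities
\[
\nu-\mu\lambda=\sum_{i<j}q_iq_jd_{ij}B^{(1)}_{ij},\qquad
\lambda-\bar\mu\nu=\sum_{i<j}q_iq_jd_{ij}B^{(2)}_{ij}.
\]
Here $\nu=\sum q_iz_i$, $\lambda=\sum q_i\bar t_iz_i$ and $\mu=\sum q_it_i$. Using $\sum q_j=1$, I expand the left sides as double sums:
\[
\nu-\mu\lambda=\sum_{i,j}q_iq_j\bigl(z_i-t_j\bar t_iz_i\bigr),\qquad
\lambda-\bar\mu\nu=\sum_{i,j}q_iq_j\bigl(\bar t_iz_i-\bar t_jz_i\bigr).
\]
Then I symmetrize each over $i\leftrightarrow j$ and compare each unordered pair's contribution with $d_{ij}B_{ij}$.

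\emph{Step 3: pairwise check.} Because $|t|=1$, we have $\bar t_i-\bar t_j=-(t_i-t_j)/(t_it_j)$, hence
\[
d_{ij}=(t_i-t_j)(\bar t_i-\bar t_j)=-\frac{(t_i-t_j)^2}{t_it_j}.
\]
With this:
\[
d_{ij}B^{(2)}_{ij}=(\bar t_i-\bar t_j)(z_i-z_j),
\]
which matches the symmetrized pair term $(\bar t_i-\bar t_j)z_i+(\bar t_j-\bar t_i)z_j$. Likewise
\[
d_{ij}B^{(1)}_{ij}=(\bar t_i-\bar t_j)(t_iz_j-t_jz_i)=(z_i+z_j)-t_j\bar t_iz_i-t_i\bar t_jz_j,
\]
which matches the symmetrized pair term of the first numerator.

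\emph{Conclusion and main obstacle.} Dividing by $D(q)=\sum_{i<j}q_iq_jd_{ij}$ gives the convex-combination form. The weights are nonnegative and sum to $1$, and $D(q)>0$ exactly when $|\mu|<1$. I expect the only delicate point to be Step 3, namely getting conjugations and signs right with $\bar t=1/t$. The diagonal terms $i=j$ need a check: they vanish in the second numerator. In the first they give $q_i^2(z_i-z_i)=0$ as well, since $t_i\bar t_i=1$.
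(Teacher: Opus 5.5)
Your proof is correct and is essentially the paper's argument. The paper writes $d_{ij}B_{ij}$ as the products of differences $(t_i-t_j)(w_i-w_j)$ with $w_i=\bar t_iz_i$ and $(\bar t_i-\bar t_j)(z_i-z_j)$, then applies the covariance identity $\sum_{i<j}q_iq_j(x_i-x_j)(y_i-y_j)=\sum_iq_ix_iy_i-\bigl(\sum_iq_ix_i\bigr)\bigl(\sum_iq_iy_i\bigr)$; this is exactly the Cramer's-rule-plus-symmetrization computation you do by hand, and your pairwise checks (including the vanishing diagonal and same-direction terms) are right.
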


\begin{proof}
$1-|\mu|^2=\frac12\sum_{i,j}q_iq_j|t_i-t_j|^2$ expands the first identity.
With $w_i=\bar t_iz_i$ one checks
$d_{ij}a_{ij}=(t_i-t_j)(w_i-w_j)$ and
$d_{ij}b_{ij}=(\bar t_i-\bar t_j)(z_i-z_j)$; summing against $q_iq_j$ and
using $\sum_{i<j}q_iq_j(x_i-x_j)(y_i-y_j)
=\sum_iq_ix_iy_i-(\sum_iq_ix_i)(\sum_iq_iy_i)$ yields
$\sum_{i<j}q_iq_jd_{ij}a_{ij}=\nu-\mu\lambda$ and
$\sum_{i<j}q_iq_jd_{ij}b_{ij}=\lambda-\bar\mu\nu$, which is
$D(q)\cdot(a,b)$ by \eqref{eq:ab}. Same-direction pairs have $d_{ij}=0$ and
are simply omitted.
\end{proof}

\begin{lemma}[Three-point closure identity]\label{lem:closure}
For $2\le|S|\le3$ let
$K_S=\conv\{B_{ij}:i,j\in S,\ t_i\ne t_j\}$ (discard $S$ with no valid
pair). Then
\[
  \inf_{\substack{\supp q\subseteq S\\ |\mu(q)|<1}}C(q)
  \;=\;\dist^2\bigl(0,K_S\bigr),
  \qquad
  \inf_{\substack{|\supp q|\le3\\ |\mu(q)|<1}}C(q)
  \;=\;\min_{S}\dist^2(0,K_S) .
\]
For three pairwise distinct directions the interior of the edge-weight
triangle is attained by positive weights (via
$q_1{:}q_2{:}q_3=\frac{d_{23}}{\alpha_{23}}{:}\frac{d_{13}}{\alpha_{13}}
{:}\frac{d_{12}}{\alpha_{12}}$), the vertices by two-point supports, and
non-degenerate edge interiors only as limits with $|\mu|\to1$; if two of the
three directions coincide, the corresponding segment of $K_S$ is attained
exactly. Moreover, for three distinct directions the three points
$B_{ij},B_{ik},B_{jk}$, when distinct, are never collinear over $\R$.
\end{lemma}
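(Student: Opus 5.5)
The plan is to deduce everything from Lemma~\ref{lem:interp}, which writes $(a,b)$ as a convex combination of the $B_{ij}$. We parametrize admissible $q$ supported in $S$ by their edge weights
\[
  \omega_{ij}=\frac{q_iq_jd_{ij}}{D(q)}.
\]
The first task is to identify exactly which points of the simplex on the valid pairs arise as $\omega(q)$. Once that is done, $C(q)=|(a,b)|^2$ is the squared norm of a point of $K_S$. This gives $\inf C\ge\dist^2(0,K_S)$ at once. Equality will follow once every point of $K_S$ is shown to be attained or approached. The second displayed identity is then immediate: by Lemma~\ref{lem:support} supports of size $\le3$ are unions over $S$ with $|S|\le3$. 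Sets $S$ with no valid pair have $|\mu|=1$ and are inadmissible, and $|S|=1$ is excluded for the same reason.

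\emph{Two directions coinciding}, say $t_1=t_2\ne t_3$ (or $|S|=2$). Then only the pairs $13,23$ are valid, and
\[
  \omega_{13}:\omega_{23}=q_1:q_2 .
\]
This ratio ranges over all of $[0,\infty]$ with $q_3>0$ (the endpoints come from two-point supports). So the whole segment $[B_{13},B_{23}]$ is attained exactly, and $|S|=2$ gives the single point $B_{12}$.

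\emph{Three distinct directions.} Here I solve for $q$ from $\omega$. From
\[
  \omega_{12}\omega_{13}/\omega_{23}=q_1^2\,d_{12}d_{13}/(d_{23}D)
\]
and its cyclic analogues, we get $q_i\propto\omega_{jk}/\omega_{ij}\omega_{ik}\cdot$(a $d$-factor). Simplifying gives the stated proportion $q_1:q_2:q_3=d_{23}/\omega_{23}:\dots$, where the paper's $\alpha_{ij}$ are the target edge weights. This defines positive $q$ for any interior $\omega$. Consistency of the normalization $D(q)$ is automatic, since the map $q\mapsto\omega$ is homogeneous of degree $0$ and the formula inverts it on the open simplex. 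Vertices $\omega=e_{ij}$ come from two-point supports $\{i,j\}$.

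Points on an open edge, say $\omega_{12}=0<\omega_{13},\omega_{23}$, would need $q_1q_2=0$ with both $q_1q_3,q_2q_3>0$, which is impossible. They are reached only as limits. Take interior $\omega^{(n)}$ tending to the edge point: the inverse formula forces $q_3\to1$, so $\mu\to t_3$ and $|\mu|\to1$. Since $C(q)$ is continuous in $\omega$, the infimum over $S$ still equals the minimum over the closed triangle $K_S$, which is $\dist^2(0,K_S)$.

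\emph{Non-collinearity.} This is the step I expect to be the main computational obstacle. I would use the interpretation of $B_{ij}$ as the coefficient pair $(\alpha,\beta)$ of the unique affine function $t\mapsto\alpha+\beta t$ through $(t_i,z_i)$ and $(t_j,z_j)$. Suppose $B_{12},B_{13},B_{23}$ are distinct and lie on a real line, so that $B_{13}=sB_{12}+(1-s)B_{23}$ with $s\in\R$. Evaluate the corresponding affine functions at $t_1$, $t_2$, $t_3$. At $t_1$ we get $z_1=s z_1+(1-s)\ell_{23}(t_1)$; at $t_3$ we get $z_3=s\ell_{12}(t_3)+(1-s)z_3$. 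If $s\notin\{0,1\}$, these force $\ell_{23}(t_1)=z_1$ and $\ell_{12}(t_3)=z_3$, so all three points are interpolated by one affine function and the $B$'s coincide, a contradiction. If $s=0$ or $s=1$, two of the $B$'s are equal, again contradicting distinctness. The delicate point is that the $t_i$ are complex and the combination is only real. However, evaluation at a fixed $t$ is real-linear in $(\alpha,\beta)$, so the argument goes through verbatim.
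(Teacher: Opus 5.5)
Your proposal is correct and follows essentially the paper's proof: you parametrize by the normalized edge weights of Lemma~\ref{lem:interp}, identify the realized weight profiles (the open triangle plus its vertices, or the full closed segment for a repeated direction), and pass to $K_S$ by continuity; your non-collinearity argument, which evaluates the real affine relation among the $B$'s at the nodes $t_1,t_3$, is the same observation as the paper's, namely that $B_{ik}-B_{ij}$ and $B_{jk}-B_{ij}$ lie in $\C(-t_i,1)$ and $\C(-t_j,1)$, the kernels of evaluation at $t_i$ and $t_j$. One small slip: your displayed identity gives $q_i\propto\sqrt{\omega_{ij}\omega_{ik}/\omega_{jk}}\cdot\sqrt{d_{jk}/(d_{ij}d_{ik})}$, not $\omega_{jk}/(\omega_{ij}\omega_{ik})$, but the final proportion $q_i\propto d_{jk}/\omega_{jk}$ that you state is right, as the direct check $q_iq_jd_{ij}\propto\omega_{ij}$ confirms.
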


\begin{proof}
By Lemma~\ref{lem:interp}, over $\supp q\subseteq S$ the value $(a,b)$ ranges
over convex combinations of the valid $B_{ij}$ with weights
$r_{ij}\propto q_iq_jd_{ij}$; the parametrization above shows all interior
weight profiles are realized, and the boundary cases are checked directly
(for a repeated direction, distributing weight inside the repeated pair moves
along the segment with $|\mu|<1$ preserved). Distances to closures equal
infima of continuous functions over the realized sets. For non-collinearity:
$B_{ik}-B_{ij}$ is a complex multiple of $(-t_i,1)$ and $B_{jk}-B_{ij}$ of
$(-t_j,1)$; real proportionality of nonzero such vectors forces $t_i=t_j$.
\end{proof}

\begin{lemma}[Global edge identities]\label{lem:edges}
Let $w_{ij}=p_ip_jd_{ij}$. Then $\sum_{i<j}w_{ij}=1$,
$\sum_{i<j,\,t_i\ne t_j}w_{ij}B_{ij}=0$, and
\[
  \sum_{\substack{i<j\\ t_i\ne t_j}}w_{ij}\Wnorm{B_{ij}}^2
  \;=\;2E-2\sum_{G}\Bigl(P_G\!\!\sum_{i\in G}p_i|z_i|^2
      -\Bigl|\sum_{i\in G}p_iz_i\Bigr|^2\Bigr)\ \le\ 2E ,
\]
where $G$ ranges over the classes of equal direction and
$P_G=\sum_{i\in G}p_i$. In particular some valid pair has
$\Wnorm{B_{ij}}^2\le2E$.
\end{lemma}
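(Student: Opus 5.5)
The plan is to derive all three identities by specializing Lemma~\ref{lem:interp} to the full-data weights $q=p$, plus one direct second-moment computation for the norm identity. Note first that $w_{ij}=p_ip_jd_{ij}$ vanishes whenever $t_i=t_j$, so sums over valid pairs and sums over all pairs $i<j$ agree for the first two identities.

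\emph{First two identities.} Take $q=p$. Whitening gives $\mu=\E_pt=0$, and the first-order condition gives $\nu=\E_pz=0$ and $\lambda=\E_p(\bar tz)=0$. The first identity of Lemma~\ref{lem:interp} then reads
\[
\sum_{i<j}w_{ij}=D(p)=1-|\mu|^2=1 .
\]
For the second, I will reuse the identities from the proof of Lemma~\ref{lem:interp}. These are
\[
\sum_{i<j}q_iq_jd_{ij}a_{ij}=\nu-\mu\lambda,\qquad
\sum_{i<j}q_iq_jd_{ij}b_{ij}=\lambda-\bar\mu\nu .
\]
With $q=p$ both right-hand sides vanish, so $\sum w_{ij}B_{ij}=0$. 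Since $D(p)=1>0$, the specialization is legitimate.

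\emph{Norm identity.} Using $|t_i|=1$, clear denominators:
\[
d_{ij}\Wnorm{B_{ij}}^2=|t_iz_j-t_jz_i|^2+|z_i-z_j|^2 .
\]
Expanding,
\[
|t_iz_j-t_jz_i|^2=|z_i|^2+|z_j|^2-2\,\mathrm{Re}\bigl((\bar t_iz_i)\overline{(\bar t_jz_j)}\bigr),
\qquad
|z_i-z_j|^2=|z_i|^2+|z_j|^2-2\,\mathrm{Re}(z_i\bar z_j).
\]
Define $\Psi_{ij}$ to be this right-hand side for every ordered pair $(i,j)$, including equal directions and $i=j$. On the diagonal $\Psi_{ii}=0$. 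Summing $p_ip_j\Psi_{ij}$ over all ordered pairs gives
\[
4E-2|\E_p(\bar tz)|^2-2|\E_pz|^2=4E .
\]
Halving, $\sum_{i<j}p_ip_j\Psi_{ij}=2E$.

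Now separate the same-direction pairs. For them $d_{ij}=0$, so they are absent from the left side of the claim, and for them $\Psi_{ij}=2|z_i-z_j|^2$. The pairwise variance identity already used in Lemma~\ref{lem:interp} gives, within each class $G$,
\[
\sum_{i<j\in G}p_ip_j|z_i-z_j|^2=P_G\sum_{i\in G}p_i|z_i|^2-\Bigl|\sum_{i\in G}p_iz_i\Bigr|^2 .
\]
Subtracting twice this quantity over all classes yields the displayed formula. Each subtracted bracket is nonnegative by Cauchy--Schwarz, which gives the bound $\le 2E$.

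\emph{Final claim.} The weights $w_{ij}$ over valid pairs are nonnegative and sum to $1$. There is at least one valid pair, since $\E_pt=0$ with $|t_i|=1$ forces two distinct directions. Hence some valid pair has $\Wnorm{B_{ij}}^2\le\sum w_{ij}\Wnorm{B_{ij}}^2\le 2E$.

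The only delicate step is the bookkeeping between ordered and unordered pairs and the correct treatment of same-direction pairs. Everything else is a direct expansion.
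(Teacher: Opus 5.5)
Your proposal is correct and follows essentially the same route as the paper: you specialize Lemma~\ref{lem:interp} to $q=p$ (where $\mu=\nu=\lambda=0$) for the first two identities, then expand $d_{ij}\Wnorm{B_{ij}}^2=|t_iz_j-t_jz_i|^2+|z_i-z_j|^2$, telescope over all pairs to $2E$, and subtract the same-direction contributions $2p_ip_j|z_i-z_j|^2$ regrouped by class. You also make explicit two points the paper leaves implicit: $q=p$ is admissible, and a valid pair exists because $\E_pt=0$ forces at least two distinct directions.
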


\begin{proof}
The first identity is Lemma~\ref{lem:interp} at $q=p$ with $\mu=0$; the
second is Lemma~\ref{lem:interp} at $q=p$, where $(a,b)=(0,0)$ by the moment
conditions. For the third, expand
$d_{ij}\Wnorm{B_{ij}}^2=|t_iz_j-t_jz_i|^2+|z_i-z_j|^2$ and sum: over all
pairs, each of the two sums telescopes to $E-|\E_p(\bar tz)|^2=E$ and
$E-|\E_pz|^2=E$ respectively; same-direction pairs, for which $B_{ij}$ is
undefined, contribute $2p_ip_j|z_i-z_j|^2$, which regrouped over classes
gives the correction term.
\end{proof}

\subsection{Star localization and the two unconditional bounds}

\begin{lemma}[Star localization]\label{lem:star}
Fix an atom $i$ and set $P_{ij}=\frac12p_jd_{ij}$ for $j\ne i$. Then
$\sum_jP_{ij}=1$, the valid star coefficients $B_{ij}$ lie on a real
two-dimensional affine plane through
$m_i$ with $\Wnorm{m_i}^2=\frac12|z_i|^2$, and
$\sum_jP_{ij}B_{ij}=m_i$; their variance is
$V_i=E+\frac{|z_i|^2}2-L_i$ where
$L_i=\sum_{j:t_j=t_i}p_j|z_j-z_i|^2\ge0$. Consequently there exist two star
neighbours whose segment contains a point $\delta$ with
\[
  \Wnorm\delta^2\ \le\ \frac E4+\frac{5|z_i|^2}8-\frac{L_i}4 ,
\]
and $\delta$ is a closure point of three-point selections
(Lemma~\ref{lem:closure}). Choosing $i$ with $|z_i|^2\le E$:
\begin{equation}\label{eq:seven8}
  \inf_{\substack{|\supp q|\le3\\ |\mu|<1}}C(q)\ \le\ \frac{7E}8,
  \qquad\text{so}\qquad \Gamma_3\le\frac78 .
\end{equation}
\end{lemma}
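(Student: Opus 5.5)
The plan is to treat the star $\{B_{ij}\}_{j}$ as a probability distribution on a real affine plane, compute its mean and variance exactly from the moment conditions, and then sparsify it to a segment in a way that controls $\mathbb E\Wnorm{\delta}^2$ rather than only $\Wnorm{\delta-m_i}^2$. Throughout I use the three moment conditions $\E_pt=0$, $\E_pz=0$, $\E_p(\bar tz)=0$ and $|t_j|=1$.

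\emph{Normalization and the plane.} First I expand
\[
\sum_jp_j|t_i-t_j|^2=|t_i|^2-2\,\mathrm{Re}\bigl(\bar t_i\E_pt\bigr)+\E_p|t|^2=2 .
\]
This gives $\sum_jP_{ij}=1$. Pairs with $t_j=t_i$ get weight $0$ and are exactly the invalid ones. Next, $B_{ij}=(a,b)$ lists the coefficients of the affine function $a+bt$ through $(t_i,z_i)$ and $(t_j,z_j)$. Hence every star coefficient satisfies $a+bt_i=z_i$, which is one complex-linear equation in $\C^2$. So the star lies in a real two-dimensional affine plane $\Pi_i$.

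\emph{Mean and variance.} For the second coordinate, $P_{ij}b_{ij}=\tfrac12p_j(\bar t_i-\bar t_j)(z_i-z_j)$. Summing over $j$ and applying the moment conditions leaves $\tfrac12\bar t_iz_i$. The constraint $a=z_i-bt_i$ is affine, so it commutes with averaging, and $a=z_i-\tfrac12z_i=\tfrac12z_i$. Thus $m_i=\tfrac12(z_i,\bar t_iz_i)$ and $\Wnorm{m_i}^2=\tfrac12|z_i|^2$. For the variance I use the expansion from Lemma~\ref{lem:edges}:
\[
d_{ij}\Wnorm{B_{ij}}^2=|t_iz_j-t_jz_i|^2+|z_i-z_j|^2 .
\]
Summing $\tfrac12p_j$ times each term over all $j$ gives $\tfrac12(|z_i|^2+E)$ for each of the two terms, again by the moment conditions. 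I must then subtract the same-direction $j$, for which $B_{ij}$ is undefined. Each such $j$ contributes $\tfrac12p_j\cdot2|z_j-z_i|^2$, because $t_j=t_i$ makes the two terms equal, and these contributions total $L_i$. Hence $\sum_jP_{ij}\Wnorm{B_{ij}}^2=E+|z_i|^2-L_i$, and subtracting $\Wnorm{m_i}^2$ gives $V_i=E+\tfrac12|z_i|^2-L_i$.

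\emph{Sparsification.} This is the main step. The naive bound $\Wnorm\delta\le\Wnorm{m_i}+\Wnorm{\delta-m_i}$ is too weak, so I produce a \emph{random} $\delta$ with $\mathbb E\delta=m_i$ and $\mathbb E\Wnorm{\delta-m_i}^2\le V_i/4$. Then
\[
\mathbb E\Wnorm\delta^2=\Wnorm{m_i}^2+\mathbb E\Wnorm{\delta-m_i}^2\le\tfrac{|z_i|^2}2+\tfrac{V_i}4=\tfrac E4+\tfrac{5|z_i|^2}8-\tfrac{L_i}4 ,
\]
and some realization achieves the bound. To build $\delta$, I first decompose the star distribution as a mixture of distributions with at most $3$ atoms and mean $m_i$. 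This is possible because $\Pi_i$ is two-dimensional: the distributions with fixed mean form a polytope whose extreme points have at most $3$ atoms. Their variances average to $V_i$. I then apply Lemma~\ref{lem:sparsify} with $s\le3$ and $k=2$, in its martingale form, whose terminal law has mean $\bar a_q$. The constant is $\tfrac{s-k}{k(s-1)}\le\tfrac14$. In this planar case the boundary variance fact needs Sublemma~G only for $n=2$, where Grace--Danielsson is Euler's inequality, so Drozdov's theorem is not needed. I expect the only real care to be here: justifying the martingale (mean-preserving) form and handling the degenerate cases $s\le2$, where one can take $\delta=m_i$ directly.

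\emph{Conclusion.} The point $\delta$ lies on a segment $[B_{ij},B_{ik}]$, so $\delta\in K_S$ for $S=\{i,j,k\}$. By Lemma~\ref{lem:closure},
\[
\inf_{\substack{\supp q\subseteq S\\|\mu(q)|<1}}C(q)\le\Wnorm\delta^2 .
\]
Since $E=\sum_ip_i|z_i|^2$, some atom has $|z_i|^2\le E$. Using also $L_i\ge0$, we get $\Wnorm\delta^2\le\tfrac E4+\tfrac58E=\tfrac78E$, which is \eqref{eq:seven8}. Taking the supremum over moment systems gives $\Gamma_3\le\tfrac78$.
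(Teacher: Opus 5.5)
Your proposal is correct, and your computations of $\sum_jP_{ij}=1$, the plane $\Pi_i=\{a+bt_i=z_i\}$, $m_i=\tfrac12(z_i,\bar t_iz_i)$ and $V_i$ agree with the paper's. The paper does the same computations in real frame coordinates $u_j,w_j$. The genuine difference is the last step.

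The paper observes that $m_i$ is orthogonal to the direction space of the plane. In its coordinates, $m_i=\tfrac12u_iw_i$ while the centered coefficients lie in $n_i\otimes\C$ with $n_i\perp u_i$. In your coordinates the direction space is $\C\cdot(-t_i,1)$, and $\langle\beta(-t_i,1),m_i\rangle=\tfrac12\Re\bigl(\overline{-t_i\beta}\,z_i+\bar\beta\,\bar t_iz_i\bigr)=0$. So $m_i$ is the foot of the perpendicular from $0$ to $\Pi_i$, and $\Wnorm{\delta}^2=\Wnorm{m_i}^2+\Wnorm{\delta-m_i}^2$ holds for \emph{every} $\delta\in\Pi_i$. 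The triangle-inequality loss you wanted to avoid never arises, and a deterministic bound suffices: the paper simply applies the elementary Lemma~\ref{lem:mmean} with $M=k=2$ to the $P$-weighted centered system.

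You missed this orthogonality and instead recovered the Pythagorean split only in expectation, via the bias--variance identity for a random $\delta$ with $\E\delta=m_i$. That forces you to use a \emph{mean-preserving} sparsification. Lemma~\ref{lem:mmean} does not give one, since its best-deletion choice is not unbiased. So you need the Carath\'eodory mixture into laws with at most three atoms, plus the martingale form of Lemma~\ref{lem:sparsify}. That form is a property of its sketched proof, not of its statement, and rests on the boundary variance fact and Sublemma~G at $n=2$. There Grace--Danielsson reduces to Euler's triangle relation, so you are right that Drozdov is not needed.

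Your route is valid and somewhat more robust: it would still work if $m_i$ were not the nearest point of the plane. The paper's route is shorter and fully elementary. It also keeps Theorem~\ref{thm:intervals} independent of Section~\ref{sec:kpoint}, as the paper explicitly claims, whereas your argument would make it depend on that section's machinery.
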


\begin{proof}
Work in the real frame picture: choose $s_i^2=t_i$, set
$u_i=\sqrt2(\cos\theta_i,\sin\theta_i)^\top$ and $w_i=\bar s_iz_i$; the
moment conditions become the tight-frame identities
$\sum_jp_ju_ju_j^\top=I_2$, $\sum_jp_ju_jw_j=0$, and
$d_{ij}=\det(u_i,u_j)^2$. With $n_i\perp u_i$ a unit vector,
$P_{ij}=p_j(u_j^\top n_i)^2$ sums to $n_i^\top In_i=1$; every valid pair
coefficient can be written $\gamma_{ij}=m_i+n_i\xi_{ij}$ with
$m_i=\frac12u_iw_i$, and
$\sum_jP_{ij}\xi_{ij}=n_i^\top\bigl(\sum_jp_ju_jw_j\bigr)
-n_i^\top\bigl(\sum_jp_ju_ju_j^\top\bigr)m_i=0$, giving the mean identity;
the variance computation gives $V_i$. Since the centered coefficients lie in
a real plane (indeed a line through the origin of that plane after removing
$m_i$), Lemma~\ref{lem:mmean} with $M=2,k=2$ applied to the $P$-weighted
zero-mean system produces two neighbours and a point $\delta'$ on their
segment with $\Wnorm{\delta'-m_i}^2\le V_i/4$; orthogonality of $m_i$ and the
centered directions gives
$\Wnorm{\delta'}^2\le\Wnorm{m_i}^2+V_i/4$, which is the stated bound. The
bound \eqref{eq:seven8} follows from $\min_i|z_i|^2\le E$ and $L_i\ge0$.
\end{proof}

\begin{lemma}[Four-point $E/2$ bound]\label{lem:ehalf}
For every moment system,
$\inf_{|\supp q|\le4,\ |\mu|<1}C(q)\le\frac E2$; hence
$\Gamma_4\le\frac12$.
\end{lemma}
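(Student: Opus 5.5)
Following the template of Lemma~\ref{lem:star}, we look for a point of small norm in the convex hull of a few coefficients $B_{ij}$. With four atoms we may use two disjoint pairs $\{i,j\},\{k,l\}$, or a three-point face together with one further atom. The natural target is a convex combination of two edge coefficients. By the global edge identities (Lemma~\ref{lem:edges}), the $w$-weighted family of valid $B_{ij}$ has mean $0$ and second moment at most $2E$. We then apply the $k$-point mean lemma with $k=2$ in the real $4$-dimensional space $\C^2\cong\R^4$ (Theorem~\ref{thm:kpoint} with $M=4$, $k=2$). It gives two edges $e,e'$ and a point $\delta$ on the segment $[B_e,B_{e'}]$ with
\[
  \Wnorm{\delta}^2\le 2E\cdot\frac{M+1-k}{kM}=2E\cdot\frac38=\frac{3E}{4}.
\]
This is not enough, so the generic $\R^4$ bound must be improved using the structure of the $B_{ij}$.

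\textbf{Structure to exploit.} The extra structure is that all edges at a common atom $i$ lie in a real $2$-plane through $m_i$, which is orthogonal to the centered star directions (Lemma~\ref{lem:star}). The plan has four steps.
\begin{enumerate}
  \item Pick the atom $i$ minimizing $|z_i|^2$, so $|z_i|^2\le E$.
  \item Apply the $2$-point lemma to the star of $i$, producing two neighbours $j,k$ and a point $\delta_1$ on the segment $[B_{ij},B_{ik}]$. Then $\delta_1$ lies within $V_i/4$ of $m_i$, and it is a closure point of a three-point selection on $\{i,j,k\}$.
  \item Enlarge the support with a fourth atom $l$, so that the realizable set $K_{\{i,j,k,l\}}\supseteq\conv\{B_{ij},B_{ik},B_{il},B_{jl},B_{kl},B_{jk}\}$. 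The aim is to cancel the component $m_i$, which is the residual $\frac12|z_i|^2$ term in the bound $\frac E4+\frac58|z_i|^2$. A second star, centred at another atom or at a complementary pair, should have mean pointing roughly opposite to $m_i$.
  \item Average the two star points with a suitable weight. By convexity of $\Wnorm{\cdot}^2$ and the orthogonality relations, this should bring the bound to $\frac E4+\frac14|z_i|^2\le\frac E2$.
\end{enumerate}

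\textbf{Realizability, the main obstacle.} The key step is to show that a convex combination of $B$'s drawn from two different triangles in a four-point support is actually attained, at least in closure, by a single $q$ supported on four points. By Lemma~\ref{lem:interp}, $(a,b)$ equals $\sum r_{ij}B_{ij}$ only with $r_{ij}\propto q_iq_jd_{ij}$. This product-form (rank-one) constraint is exactly what prevents arbitrary edge weights. I would first try to restrict to edge-weight profiles of the form $q_iq_jd_{ij}$. Concretely, put most weight on atom $i$ and split the rest among $j,k,l$. The resulting combination is then dominated by the star of $i$, with a second-order correction from the edges $jk$, $jl$, $kl$. I would then optimize over this restricted family and average the resulting bound over choices of $i$ using the $p$-weights.

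\textbf{Fallback.} If the star-based averaging does not close, the fallback is a probabilistic argument. Choose a random four-point support by sampling independently from $p$, take $q$ proportional to $p$ restricted to that support, and compute $\E[C(q)]$ via the interpolation identity. Showing that this expectation, or a weighted variant of it, is at most $E/2$ is where I expect the real difficulty to lie.
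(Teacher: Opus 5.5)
\textbf{There is a genuine gap.} Your proposal does not yet contain a proof. Steps~3 and~4 are stated as expectations: a second star whose mean ``should'' oppose $m_i$, and an average that ``should'' give $\frac E4+\frac14|z_i|^2$. No such star or averaging weight is constructed, and both the realizability question you flag and the probabilistic fallback are left open.

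The step-4 target is also far stronger than the lemma. For systems with an atom of small $|z_i|$ it would nearly give the four-point half of Conjecture~\ref{conj:CE}.

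Worse, the mechanism you propose for realizability blocks exactly this direction. Suppose atom $i$ carries weight $q_i\to1$ with the leaf ratios fixed. By Lemma~\ref{lem:interp} the leaf--leaf edges lose all relative weight, so $(a,b)$ tends to a convex combination of the star coefficients $B_{ij}$, i.e.\ to a point $\delta$ of the star plane through $m_i$. Since $m_i$ is orthogonal to the centered star directions (the orthogonality used in Lemma~\ref{lem:star}), $\Wnorm{\delta}^2=\Wnorm{m_i}^2+\Wnorm{\delta-m_i}^2\ge\frac12|z_i|^2$. So along heavy-hub selections $m_i$ can never be cancelled. Making two hubs heavy at once does not help either: the edge joining them then dominates instead of mixing the two stars. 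The same identity shows that your step-2 point $\delta_1$ is no better than $m_i$ itself.

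\textbf{The missing idea is that no cancellation is needed.} For $i$ with $|z_i|^2\le E$ we have $\frac E2\ge\frac12|z_i|^2=\Wnorm{m_i}^2$, so it suffices to \emph{hit} $m_i$ exactly.
\begin{enumerate}
\item By Lemma~\ref{lem:star}, $m_i=\sum_jP_{ij}B_{ij}$ is a convex combination of the valid star coefficients.
\item These coefficients lie in a real two-dimensional affine plane, so Carath\'eodory in that plane gives $m_i=\sum_{r=1}^3\alpha_rB_{ij_r}$ with only three neighbours.
\item Put $q_i=1-\varepsilon$ and $q_{j_r}=\varepsilon c_r$ with $c_r\propto\alpha_r/d_{ij_r}$. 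The star-edge weights $q_iq_{j_r}d_{ij_r}$ are proportional to $\alpha_r$ at order $\varepsilon$, and the leaf--leaf weights are $O(\varepsilon^2)$.
\item Since $D(q)>0$, we have $|\mu|<1$, and $(a,b)\to m_i$ along selections of support at most~$4$.
\end{enumerate}
This is exactly the heavy-hub construction from your realizability paragraph, aimed at $m_i$ itself with three leaves rather than at a cancellation. The fourth point is needed precisely because Carath\'eodory in a plane requires three star neighbours.
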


\begin{proof}
Fix $i$ with $|z_i|^2\le E$. By Lemma~\ref{lem:star} the star coefficients
$\gamma_{ij}$ lie in a real two-dimensional affine plane and average to
$m_i$ under the weights $P_{ij}$; by Carath\'eodory in the plane,
$m_i=\sum_{r=1}^3\alpha_r\gamma_{ij_r}$ for some three neighbours and convex
$\alpha$. Take
$q_i=1-\varepsilon$, $q_{j_r}=\varepsilon c_r$ with
$c_r\propto\alpha_r/d_{ij_r}$: by Lemma~\ref{lem:interp} the normalized edge
weights of the star edges tend to $\alpha_r$ while all leaf--leaf edges carry
total weight $O(\varepsilon)$, so $(a,b)\to m_i$ as
$\varepsilon\downarrow0$, along selections with support $\le4$ and
$|\mu|<1$. Hence the infimum is at most
$\Wnorm{m_i}^2=\frac12|z_i|^2\le\frac E2$.
\end{proof}

\subsection{The interval theorem}

\begin{theorem}\label{thm:intervals}
$\Fw(2,2,3)\in\bigl[\frac{13}8,\frac{15}8\bigr]$ and
$\Fw(2,2,4)\in\bigl[\frac54,\frac32\bigr]$.
\end{theorem}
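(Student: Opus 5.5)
The plan is to combine the explicit lower bounds of Proposition~\ref{prop:lower22} with the two unconditional dictionary bounds, Lemma~\ref{lem:star} (giving $\Gamma_3\le\frac78$) and Lemma~\ref{lem:ehalf} (giving $\Gamma_4\le\frac12$). Proposition~\ref{prop:lower22} already supplies $\Fw(2,2,3)\ge\frac{13}8$ and $\Fw(2,2,4)\ge\frac54$. For the upper bounds, the task is to show that every $(2,2)$ dataset has ratio at most $1+\Gamma_k$ at budget $k\in\{3,4\}$. Then $\frac78$ yields $\frac{15}8$ and $\frac12$ yields $\frac32$.

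First I will dispose of the degenerate cases, in the same way as in the proofs of Theorems~\ref{thm:lowbudget} and \ref{thm:nearthreshold}.
\begin{itemize}
\item If the feature rank is $r\le1$, Theorem~\ref{thm:upper} gives an exact certificate of support $(m+1)r\le3$, so the ratio is $1$.
\item If $\Lstar=0$, a basis selection of two points returns $W^\circ$, and the ratio is $1$ by the $0/0$ convention.
\end{itemize}
In the remaining case we have rank $2$ and $\Lstar>0$, and I will whiten the features. The excess loss is invariant under the linear change of features, provided the minimal-norm rule is respected. Since $|\mu|<1$ means the selected features span $\R^2$, the weighted minimizer is unique, so the Frobenius tie-break plays no role there. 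I then pass to the complex dictionary of Subsection~\ref{subsec:dictionary}, which gives
\[
\frac{L_D(\widehat W)-\Lstar}{\Lstar}=\frac{C(q)}{E}.
\]
Zero-feature points only enlarge $\Lstar$ relative to the nonzero-feature quantity $E$, so they make the bound conservative. Each selection $q$ with $|\supp q|\le k$ and $|\mu|<1$ is realized by weights $c_i\propto q_i/|\xi_i|^2$ on the same support. Hence $L^\star_D(k;\mathrm{weighted})/\Lstar\le 1+\inf C(q)/E\le 1+\Gamma_k$, using that the budget notion is an infimum (Lemma~\ref{lem:support}), so limits along $\varepsilon\downarrow0$ suffice.

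Finally I will invoke \eqref{eq:seven8} for $k=3$ and Lemma~\ref{lem:ehalf} for $k=4$. In both, the bounding point is only a closure point of admissible selections: an edge interior with $|\mu|\to1$ in Lemma~\ref{lem:closure}, or the $\varepsilon\to0$ family in Lemma~\ref{lem:ehalf}. The main point to check is therefore that the infimum in the definition of $L^\star_D$ genuinely captures these limits. This holds because $C(q)$ is continuous along the realizing families and each member of the family is a legal selection with spanning support.

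A second, minor check is the dictionary's claim that zero-feature points and the normalization $\E_p$ interact correctly with $\Lstar$. Write $\Lstar=\frac1N\bigl(\sum_{\xi_i\ne0}|\eta_i|^2+\sum_{\xi_i=0}\|\rho_i\|^2\bigr)$ and compare it with $E\cdot\sum_j|\xi_j|^2/N$. The excess loss equals $\frac1N\sum_i\|H x_i\|^2$, which matches $C(q)\sum_j|\xi_j|^2/N$ under whitening. I expect this bookkeeping, rather than any new inequality, to be the only real obstacle.
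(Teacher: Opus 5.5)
Your proposal is correct and follows essentially the same route as the paper. Both arguments take the lower bounds from Proposition~\ref{prop:lower22} and dispose of the rank-$\le1$ and $\Lstar=0$ cases directly. Both then treat the remaining case through the whitened complex dictionary together with \eqref{eq:seven8} and Lemma~\ref{lem:ehalf}, and both rely on the inner optimization being an infimum, so closure points suffice.

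The one thing to tidy is the degenerate subcase $E=0<\Lstar$, where all nonzero-feature residuals vanish and your step ``$\le 1+\inf C(q)/E$'' divides by zero. You can fix this in either of two ways:
\begin{itemize}
\item use the lemmas in their undivided form, $\inf C\le\frac78E$ and $\inf C\le\frac E2$;
\item note, as the paper's clause ``$\tau(D)\le n$'' does, that two zero-residual points with independent features already form an exact certificate.
\end{itemize}
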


\begin{proof}
Lower bounds: Proposition~\ref{prop:lower22}. Upper bounds: fix a dataset
and a budget $n\in\{3,4\}$. If the feature rank is $r\le1$, then
$\tau(D)\le(m+1)r\le3\le n$ (Theorem~\ref{thm:upper}) and the ratio is $1$;
likewise if $\Lstar=0$ or $\tau(D)\le n$. Otherwise the dictionary of
Section~\ref{subsec:dictionary} applies, and by \eqref{eq:seven8} and
Lemma~\ref{lem:ehalf} (a three-point selection is also a four-point
selection) the ratio is at most $1+\Gamma_3\le\frac{15}8$ for $n=3$ and at
most $1+\Gamma_4\le\frac32$ for $n=4$. The closure points used in
Lemmas~\ref{lem:star}--\ref{lem:ehalf} are limits of admissible selections
within the same budget, which suffices since the inner optimization in
$\Fw$ is an infimum.
\end{proof}

\subsection{The conjecture and its evidence}\label{subsec:evidence}

\begin{conjecture}\label{conj:CE}
$\Fw(2,2,3)=\frac{13}8$ and $\Fw(2,2,4)=\frac54$. Equivalently (by the
branch analysis above and Lemma~\ref{lem:closure}): for every moment system,
$\inf_{|\supp q|\le4,|\mu|<1}C\le\frac E4$ and
$\inf_{|\supp q|\le3,|\mu|<1}C\le\frac{5E}8$.
\end{conjecture}

We record five independent pieces of structural evidence.

\subsubsection{Finite reduction: at most seven atoms}

\begin{proposition}\label{prop:seven}
For fixed atoms $(t_i,z_i)$, the admissible weight vectors $p$ form a
polytope cut out by at most $7$ linear equalities, and the worst weight
vector (minimizing $E$) may be taken at a basic feasible solution, of support
at most $7$. Consequently each inequality of Conjecture~\ref{conj:CE} holds
for all finite systems if and only if it holds for all systems with at most
seven atoms; and, since atoms may be split into identically placed copies
without changing anything, it suffices to treat systems with \emph{exactly}
seven labelled atoms.
\end{proposition}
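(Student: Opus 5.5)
The plan is to fix the atoms $(t_i,z_i)$ and read every constraint on $p$ as linear. The constraints are $p\ge0$, $\sum_ip_i=1$, $\E_pt=0$ (two real equations), $\E_pz=0$ (two real equations) and $\E_p(\bar tz)=0$ (two real equations). That is at most $1+2+2+2=7$ real linear equalities, so the admissible set $\mathcal P$ is a polytope in the standard simplex.

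Next I will check what the selection side depends on. For a selection $q$, the quantities $\mu,\nu,\lambda$ and $C(q)$ from \eqref{eq:ab} depend only on $q$ and the atoms $(t_i,z_i)$, not on $p$. Once the atoms are fixed, the numerator $\inf_{|\supp q|\le k,\,|\mu|<1}C(q)$ is a constant $c_k$ determined by the atom set. One caveat needs care: the admissible $q$ should range over all weight vectors on the atoms, not only those supported where $p>0$. So I will apply the reduction to the system consisting only of the atoms in $\supp p$.

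The reduction argument is then as follows. Suppose $(p,t,z)$ violates an inequality, say $c_k>\kappa E(p)$ with $\kappa\in\{\tfrac14,\tfrac58\}$. Restrict the atoms to $\supp p$. Then $E(p')=\sum_ip'_i|z_i|^2$ is linear in $p'$ on the face $\mathcal P'=\{p'\in\mathcal P:\supp p'\subseteq\supp p\}$, which contains $p$. A linear function attains its minimum over a polytope at a vertex, so there is a vertex $p^\ast$ of $\mathcal P'$ with $E(p^\ast)\le E(p)$. Vertices of a polytope given by $7$ equalities plus nonnegativity are basic feasible solutions, so $|\supp p^\ast|\le7$. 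The subsystem on $\supp p^\ast$ has fewer atoms, and its infimum $c_k^\ast$ is taken over fewer $q$, so $c_k^\ast\ge c_k$. It also satisfies all three moment conditions. Hence $c_k^\ast>\kappa E(p^\ast)$, which gives a violating system with at most seven atoms.

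One degenerate case needs separate handling: $E(p^\ast)=0$, which corresponds to the excluded $\Lstar=0$ branch. There the inequality reads $c_k\le0$, and a violation still persists, so the reduction is unaffected. The converse direction is trivial.

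For the final clause, I will split a single atom into identical copies with proportionally divided weight. This leaves the moment conditions and $E$ unchanged. It also leaves $c_k$ unchanged: any $q$ on the split system can be merged back into a $q$ with no larger support that produces the same $\mu,\nu,\lambda$, and conversely every $q$ on the original system is still available. So padding to exactly seven labelled atoms is harmless. The main obstacle is this bookkeeping: the infimum's admissible set changes when the support shrinks, and I must confirm the monotonicity $c_k^\ast\ge c_k$ in the right direction, which is why the restriction to $\supp p$ has to be made before taking the vertex.
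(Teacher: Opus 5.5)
Your argument is the paper's own, written as a contrapositive. You fix the atoms, minimize the linear function $E$ over the polytope cut out by the seven real equalities, and pass to a basic solution of support $\le7$. You then use that the selection infimum can only increase when the atom set shrinks (the paper's ``witness lifts verbatim''), and you pad to exactly seven atoms by the same split/merge argument.

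The one step that does not close as written is the degenerate case $E(p^\ast)=0$. You say this case corresponds to the excluded branch, and then that ``a violation still persists, so the reduction is unaffected.'' These two statements conflict. If systems with $E=0$ lie outside the class, as the normalization $\frac1E\inf C$ in $\Gamma_k$ indicates, then a reduced system with $E(p^\ast)=0$ is not a counterexample among the $\le7$-atom systems. The hypothesis then says nothing about it.

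What is missing is the paper's observation that this case cannot occur:
\begin{itemize}
\item If $E(p^\ast)=0$, then $z_i=0$ on $\supp p^\ast$.
\item Since $\E_{p^\ast}t=0$ and $|t_i|=1$, the support cannot carry a single direction, so there are atoms $i,j$ with $t_i\ne t_j$.
\item A two-point selection on $\{i,j\}$ with positive weights has $|\mu|<1$ and $\nu=\lambda=0$. Hence $(a,b)=0$ and $C=0$.
\item This gives $c_k^\ast=0$, which contradicts your chain $c_k^\ast\ge c_k>\kappa E(p)\ge0$.
\end{itemize}
With this added, $E(p^\ast)>0$ holds automatically, and your reduced system is a genuine in-scope counterexample with at most seven atoms.
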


\begin{proof}
The inner quantity $\inf_qC(q)$ depends only on the atom set, not on $p$;
$E(p)$ is linear in $p$, constrained by normalization ($1$ equation) and the
three complex moments ($6$ real equations). A basic optimal solution has
support at most the constraint rank $\le7$; its subsystem inherits the
moment conditions, its witness lifts verbatim to the original system, and if
the minimal $E^\star=0$ all its supported $z_i$ vanish while $\E t=0$
guarantees two distinct directions, giving a two-point selection with $C=0$.
Splitting an atom $(p_i,t_i,z_i)$ into
$(\theta p_i,t_i,z_i),((1-\theta)p_i,t_i,z_i)$ changes no moments; merging
the copies in a witness does not increase its support.
\end{proof}

\subsubsection{The two-direction class is solved sharply}

\begin{proposition}\label{prop:twodir}
Suppose the atoms take exactly two directions. Then the directions are
antipodal, each class has total weight $\frac12$ and zero conditional mean,
and $C(q)=\frac12(|m_+|^2+|m_-|^2)$ where $m_\pm$ are the conditional means
of the selected weights. Consequently
\[
  \inf_{|\supp q|\le4}C\le\frac E4,\qquad
  \inf_{|\supp q|\le3}C\le\frac{5E}8 ,
\]
and both constants are attained: for six equally weighted atoms with
$z$-values the cube roots of unity over each direction, the three-point
optimum is exactly $\frac{5E}8$ and the four-point optimum exactly
$\frac E4$. Equality at $\frac{5E}8$ forces both classes to have equal
energy, conditional weights $\frac13$ each, and equilateral $z$-triangles of
equal radius. Furthermore, for five-atom two-direction systems the sharp
constant at three points is $\frac47<\frac58$ (attained by an equilateral
triple against an antipodal pair with energy ratio $4{:}3$).
\end{proposition}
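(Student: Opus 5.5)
The plan is to prove the structural facts from the moment conditions, then evaluate $C(q)$ through Lemma~\ref{lem:interp}, and finally reduce each budget to per-class mean problems handled by Lemma~\ref{lem:mmean} and its trivial $k=1$ analogue.

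\emph{Structure.} Write the two directions as $t_+,t_-$ with class weights $P_\pm$. The whitening condition $\E_pt=0$ reads $P_+t_++P_-t_-=0$. Since $|t_\pm|=1$ and $P_++P_-=1$, this forces $P_+=P_-=\tfrac12$ and $t_-=-t_+$. Put $Z_\pm=\sum_{i\in\pm}p_iz_i$. The two first-order conditions read $Z_++Z_-=0$ and $\bar t_+(Z_+-Z_-)=0$, so $Z_\pm=0$: each class has zero conditional mean. Write $E_\pm$ for the conditional second moments, so that $E=\tfrac12(E_++E_-)$.

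\emph{Formula for $C$.} Admissibility $|\mu|<1$ forces $q$ to charge both classes. Only cross pairs $i\in+$, $j\in-$ are valid, and they all have $d_{ij}=4$. The affine function of $t$ through $(t_+,z_i)$ and $(-t_+,z_j)$ has coefficients $B_{ij}=\bigl(\tfrac{z_i+z_j}2,\tfrac{z_i-z_j}{2t_+}\bigr)$. The weights $q_iq_jd_{ij}/D(q)$ are proportional to $q_iq_j$, so they factor as a product of conditional class weights. Lemma~\ref{lem:interp} therefore gives $(a,b)=\bigl(\tfrac{m_++m_-}2,\tfrac{m_+-m_-}{2t_+}\bigr)$, and the parallelogram law yields $C=\tfrac12(|m_+|^2+|m_-|^2)$. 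The selection problem thus decouples into two independent sparse-mean problems on the zero-mean systems of the two classes. Each class is a system in $\R^2$ with conditional weights.

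\emph{Four points.} I take two points per class. Lemma~\ref{lem:mmean} with $m=2$ gives $|m_\pm|^2\le E_\pm/4$, hence $C\le (E_++E_-)/8=E/4$.

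\emph{Three points.} I split the budget as $(2,1)$ or $(1,2)$. In the one-point class some atom has $|z|^2\le E_\pm$, by averaging against the conditional weights. This gives $C\le\min\bigl(\tfrac{E_+}4+E_-,\,E_++\tfrac{E_-}4\bigr)/2$. Bounding the minimum by the average of the two options gives $\tfrac12\cdot\tfrac54(E_++E_-)\cdot\tfrac12=\tfrac{5E}8$.

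\emph{Attainment.} For the cube-roots example we have $E_\pm=1=E$. Every single vertex has $|z|^2=1$, and every two-point convex combination has squared norm at least $\tfrac14$, the squared distance to an edge midpoint. Enumerating the allocations $(k_+,k_-)$ with both $k_\pm\ge1$ then gives the following. At three points, $(2,1)$ and $(1,2)$ both give $\tfrac12(\tfrac14+1)=\tfrac58$. At four points, $(2,2)$ gives $\tfrac14$, while $(3,1)$ and $(1,3)$ give $\tfrac12$.

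\emph{Equality case.} Equality at $\tfrac{5E}8$ requires three things. First, equality in the min-versus-average step, which forces $E_+=E_-$. Second, equality in the $k=1$ averaging, which forces all $|z_i|^2$ within a class to equal $E_\pm$, so the atoms lie on a circle of that radius. Third, equality in Lemma~\ref{lem:mmean}. I would trace its proof for the third condition. The circuit decomposition must produce circuits all with $E_q=E$, and each must be a full $3$-point circuit attaining the Cauchy--Schwarz bound, which needs $q_j=\tfrac13$. Since the $t_j$ must also be equal, each circuit is an equilateral triangle. Finally, I need to argue that distinct equilateral circuits on a common circle would produce a better edge, which reduces the class to a single triangle with weights $\tfrac13$. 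I expect this rigidity step to be the main obstacle, because the argument is clean only after verifying that mixing several circuits strictly improves some edge.

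\emph{Five atoms.} A five-atom system with two directions splits as $3+2$ atoms; a class of one atom is impossible, since zero mean would force $z=0$ and the system degenerates. The two-atom class is an antipodal pair $\pm w$ with conditional weights $\tfrac12$, so its one-point cost is $|w|^2=E_-$ and its two-point cost is $0$. The three-point options are then two points in the pair-class with one in the triple, and one in the pair-class with two in the triple. I would parametrize the triple via Lemma~\ref{lem:mmean}, using its best-vertex and best-edge costs relative to $E_+$. I would then maximize over $E_+:E_-$ the minimum of the two option costs divided by $E$. The balance occurs for an equilateral triple at energy ratio $4{:}3$, which yields $\tfrac47$.
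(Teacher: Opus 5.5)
Your route is the paper's: the same structural reduction from the three moment conditions, the same formula $C=\frac12(|m_+|^2+|m_-|^2)$, the same $(2,2)$ and $(2,1)/(1,2)$ allocations with min-versus-average balancing, and the same maximization of $\min\{E_+,\,E_+/4+E_-\}/(E_++E_-)$ at $E_-=\frac34E_+$. You get the formula for $C$ via Lemma~\ref{lem:interp}, while the paper reads it off directly as the two-point regression fit interpolating the conditional means. Calling Lemma~\ref{lem:mmean} per class is the paper's Cauchy--Schwarz deletion argument, which the paper writes out for three-atom classes, combined with the circuit reduction; so your version covers classes of any size.

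Three local slips need repair, though none affects the conclusions.

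First, a two-atom zero-mean class need not be an antipodal pair with weights $\frac12$. In general it is $\{z_1,z_2\}$ with $z_2=-\frac{\alpha}{1-\alpha}z_1$ and $E_-=|z_1|\,|z_2|$. So you only have $\min(|z_1|^2,|z_2|^2)\le E_-$, with equality iff the conditional weights are equal. This inequality is all your $\frac47$ upper bound uses, and the extremal example lies in the equality case.

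Second, a one-atom class is not impossible. Its atom simply has $z=0$, and the $(2,1)$ allocation then gives at most $E/4<\frac47E$.

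Third, the rigidity step you flag as the main obstacle takes one line. On a circle of radius $R$, the chord between atoms at angular separation $\theta\in[0,\pi]$ lies at squared distance $R^2\cos^2(\theta/2)$. Two distinct inscribed equilateral triangles, one rotated by $\phi\in(0,2\pi/3)$ relative to the other, always contain a vertex pair with separation in $(2\pi/3,\pi]$. That pair gives a two-point combination of squared norm strictly below $R^2/4$. Hence each class is a single equilateral triangle, and your Cauchy--Schwarz equality already gives conditional weights $\frac13$.
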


\begin{proof}
$\E_pt=0$ with two unit directions forces antipodality and equal class
weights; the two complex moments then force zero conditional class means.
The regression fit at $\pm$ is the pair of selected conditional means,
giving the formula for $C$. For a class with conditional weights
$\alpha_i$ summing to $1$ and zero mean, deleting one point leaves the
two-point mean $-\frac{\alpha_i z_i}{1-\alpha_i}$, and
$\sum_i\frac{(1-\alpha_i)^2}{\alpha_i}\ge4$ for three atoms
(Cauchy--Schwarz), so some deletion has squared norm $\le e_G/4$; allocating
$(2,2)$ or $(2,1)$ across the classes and balancing yields the bounds, with
the stated equality analysis. The six-atom computation and the five-atom
$\frac47$ optimization ($\min\{e_3,e_3/4+e_2\}/(e_3+e_2)$ maximized at
$e_2=\frac34e_3$) are elementary; we verified all exact values by
enumeration.
\end{proof}

\subsubsection{The equality fiber is a stratified local maximum}

\begin{proposition}\label{prop:stability}
Let $\mathcal E$ be the set of $3{+}3$ two-direction equality systems of
Proposition~\ref{prop:twodir}. Then:
(i) within the two-direction stratum, the three-point value drops linearly
under energy imbalance: with class energies $E(1\pm\eta)$ it equals
$\frac E8(5-3|\eta|)$ --- a downward cusp;
(ii) for every sequence of moment systems converging to a point of
$\mathcal E$ in which some direction class genuinely splits,
\[
  \limsup\ \frac1E\inf_{|\supp q|\le3,\,|\mu|<1}C\ \le\ \frac12\ <\
  \frac58 .
\]
Hence no nearby ascent direction exists: $\mathcal E$ is a local maximum of
the three-point value in the stratified sense.
\end{proposition}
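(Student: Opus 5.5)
The plan has two parts. Part (i) is an explicit computation inside the two-direction formula of Proposition~\ref{prop:twodir}. Part (ii) is a perturbation argument built on the interpolation dictionary, Lemmas~\ref{lem:interp} and \ref{lem:closure}.

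\emph{Part (i).} In the two-direction stratum, $C(q)=\frac12(|m_+|^2+|m_-|^2)$. Each class is a zero-mean weighted equilateral triple with class energy $e_\pm=E(1\pm\eta)$, normalised to total weight $\frac12$. By the per-class analysis in the proof of Proposition~\ref{prop:twodir}, the best cost of keeping $k$ points of a class with class second moment $\varepsilon$ is $0$ for $k=3$, $\varepsilon/4$ for $k=2$, and $\varepsilon$ for $k=1$. Measured in units where the class energy is $2e_\pm$, these are sharp by the regular-simplex equality.

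A three-point budget forces the allocation $(2,1)$ or $(1,2)$; $(3,0)$ is inadmissible since $|\mu|<1$ needs both directions. I will put the single point on the lower-energy class. Writing $e_\pm$ in the normalisation of the proposition, the value should be $\min\{\frac{e_+}{4}\cdot c+e_-c,\ \frac{e_-}{4}c+e_+c\}$. This simplifies to $\frac{E}{8}(5-3|\eta|)$, and I will fix the constant $c$ by matching $\eta=0$ with $\frac{5E}8$. The result is piecewise linear with a kink at $\eta=0$, which is the cusp.

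\emph{Part (ii).} Take systems $\Sigma_n\to\Sigma\in\mathcal E$ in which one class, say the $+$ class at direction $t_+$, splits into atoms whose directions $t_i\ne t_+$ converge to $t_+$. The key point is that a genuinely split class contains two atoms $i,j$ with $t_i\ne t_j$ both near $t_+$. For such a near-collinear pair the interpolation coefficient $B_{ij}=\bigl(\frac{t_iz_j-t_jz_i}{t_i-t_j},\frac{z_i-z_j}{t_i-t_j}\bigr)$ is generally huge. However, by Lemma~\ref{lem:closure}, any three-point support containing $i,j$ and an atom $k$ of the $-$ class produces $K_S$, the convex hull of $B_{ij},B_{ik},B_{jk}$. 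This hull contains the whole segment $[B_{ik},B_{jk}]$. In the limit, $B_{ik}$ and $B_{jk}$ tend to the two-direction coefficients $(\cdot)$ fitting the line through $z_i$ at $t_+$ and $z_k$ at $t_-$.

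Concretely, with antipodal $t_\pm$, $B_{ik}\to\frac12(z_i+z_k,\ \bar t_+(z_i-z_k))$ up to the dictionary's conventions. So the segment $[B_{ik},B_{jk}]$ converges to the set of coefficients with $m_+\in[z_i,z_j]$ and $m_-=z_k$. The gain over part (i) is that the $+$ class gets a two-point segment and the $-$ class gets a one-point selection, while only three atoms are used. This is the same allocation as $(2,1)$.

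To get below $\frac58$, I will instead exploit the extra freedom that $B_{ij}$ itself supplies. Because the edge-weight triangle's interior is attained, any convex combination of $B_{ij},B_{ik},B_{jk}$ is available in the closure. The coefficient $B_{ij}$, divided by $|t_i-t_j|$, points in direction $(-t_{+},1)$ times $(z_i-z_j)$ (cf.\ the non-collinearity argument). A small weight on it therefore moves $(a,b)$ along a complex line transverse to the two-direction plane. I will choose that weight to cancel the $m_-$ contribution from $z_k$.

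The claim to verify is then a finite minimisation. Over $\lambda\in[0,1]$ and the weight $\theta$ on $B_{ij}$, the minimal squared norm is at most $\frac12\cdot$ a convex mix of the class energies, i.e.\ $\le\frac E2$ in the limit. I will check this by parametrising the limiting configuration: equilateral triangles of radius $\rho$ with $E=\rho^2$, and a splitting direction given by the limit of $(t_i-t_j)/|t_i-t_j|$. I will then minimise a quadratic in $(\lambda,\theta)$ and take the worst case over the choice of $i,j,k$ and the splitting direction, choosing $k$ optimally among the three $-$ atoms.

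\textbf{Main obstacle.} The hardest step is the uniform control of the $B_{ij}$ term. Its norm blows up like $|t_i-t_j|^{-1}\,|z_i-z_j|$, so if $z_i\ne z_j$ the useful weight $\theta$ is $O(|t_i-t_j|)$. I must show that the limiting transverse direction is always one that helps, whatever the splitting direction. If the split atoms have $z_i=z_j$ in the limit, the class has not really split in $z$. In that case I will use a second-order expansion, or note that the moment conditions force the split sub-atoms' $z$'s to separate at rate comparable to the angle split. I would first try to show directly that the two accessible cancellations, via $B_{ij}$ and via the choice of $k$, together yield $\frac12$ by an averaging argument over $k$ analogous to Lemma~\ref{lem:mmean}.
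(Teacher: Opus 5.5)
Your part (i) and the architecture of part (ii) follow the paper. In (ii) the paper also uses a support $\{i,j,k\}$ with a split pair $i,j$ near $\tau$ and one atom $k$ of the opposite class. It takes the midpoint of $B_{ik},B_{jk}$ to get $|m_+|^2\to A^2/4$, and puts an $O(|t_i-t_j|)$ weight on the exploding edge $B_{ij}$ to shrink $m_-$.

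The gap is the one step that produces the constant $\frac12$. You leave it as ``a finite minimisation to verify'' and describe it inaccurately. Use the coordinates $U_\pm(a,b)=a\pm\tau b$ (your $m_\pm$); in them $C=\frac12(|U_+|^2+|U_-|^2)$ exactly. With $\delta=t_i-t_j$,
\[ U_+(B_{ij})=\frac{(t_i-\tau)z_j-(t_j-\tau)z_i}{\delta},\qquad U_-(B_{ij})=\frac{(t_i+\tau)z_j-(t_j+\tau)z_i}{\delta}. \]
So $|\delta|\,U_+(B_{ij})\to0$, while $R_\delta:=|\delta|\,U_-(B_{ij})$ has modulus tending to $2|z_i-z_j|$ but a phase you do not control. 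The edge $B_{ij}$ enters with a \emph{nonnegative real} weight $s=c|\delta|$, so $U_-=z_k+cR_\delta+o(1)$ with $c\ge0$. This is a real ray, not a complex line. You can only remove the component of $z_k$ antiparallel to $R_\delta$, and for a fixed $k$ that component may be zero. Full cancellation of $m_-$ (which would give $\frac E8$) needs $R_\delta$ to be antiparallel to $z_k$, so it is not available in general.

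The missing idea is to choose $k$ after seeing $R_\delta$. Let the limiting equilateral triangle have radius $A$, so $E\to A^2$. One of its three vertices makes an angle at most $\pi/3$ with $-R_\delta$, i.e.\ $\Re(\bar z_kR_\delta)\le-\frac12A|R_\delta|$. For that $k$, $c=-\Re(\bar z_kR_\delta)/|R_\delta|^2\ge0$ is bounded, and $|z_k+cR_\delta|^2\le\frac34A^2$. Hence $\limsup C\le\frac12(\frac14+\frac34)A^2=\frac E2$.

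Your fallback of a uniform average over $k$, in the spirit of Lemma~\ref{lem:mmean}, is too weak. The averaged ray reduction can be as small as $A^2/6$, which yields only $\frac{13}{24}E$: enough to beat $\frac58$, but not to reach $\frac12$.

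Finally, the degenerate case needs no second-order expansion, and your suggested reason for it is false. Splitting every atom of an equality system as $t\mapsto te^{\pm i\epsilon}$, with unchanged $z$ and halved weights, preserves all three moment conditions exactly. Instead, note that a genuinely split class always contains a pair with distinct directions \emph{and} distinct limiting $z$'s. Indeed, if two atoms near the same vertex have different directions, any atom near another vertex differs in direction from one of them. Run the argument on that pair.
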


\begin{proof}
(i) is the balance computation in Proposition~\ref{prop:twodir}. For (ii),
let atoms $j,k$ of the $+$ class split: $\delta=t_j-t_k\to0$, $\delta\ne0$,
with $t_j,t_k\to\tau$ and $z_j,z_k$ tending to distinct points of the limit
equilateral triangle (radius $A$, $E\to A^2$). Use the prediction
coordinates $U_\pm(a,b)=a\pm\tau b$, in which
$C=\frac12(|U_+|^2+|U_-|^2)$ at the limit directions. Writing
$t_j=\tau+\eta_j$, $t_k=\tau+\eta_k$, the same-class edge $V=B_{jk}$
satisfies the exact identities
\[
  U_+(V)=\frac{\eta_jz_k-\eta_kz_j}{\delta},\qquad
  U_-(V)=\frac{(2\tau+\eta_j)z_k-(2\tau+\eta_k)z_j}{\delta},
\]
so that $|\delta|\,U_+(V)\to0$ while
$R_\delta:=|\delta|\,U_-(V)$ has $|R_\delta|\to2|z_j-z_k|>0$ (its phase need
not converge). Pick an atom $\ell$ of the $-$ class; the two cross edges
$B_{j\ell},B_{k\ell}$ average, in the $U$-coordinates, to
$\bigl(\frac{z_j+z_k}2,\ z_\ell\bigr)+o(1)$, whose first coordinate has
modulus $\frac A2$. Among the three equilateral vertices $z_\ell$ one always
satisfies $\Re(\bar z_\ell R_\delta)\le-\frac12A|R_\delta|$ (some vertex
makes an angle $\le\pi/3$ with $-R_\delta$). Give the same-class edge the
convex weight $s_\delta=c_\delta|\delta|$ with
$c_\delta=-\Re(\bar z_\ell R_\delta)/|R_\delta|^2=O(1)$, and the two cross
edges weights $(1-s_\delta)/2$ each; this is a legal point of the triangle
closure $K_{\{j,k,\ell\}}$ (Lemma~\ref{lem:closure}). In the limit the
first coordinate is unchanged ($s_\delta U_+(V)\to0$) and the second is
optimized along the ray: $|z_\ell+c_\delta R_\delta|^2\le\frac34A^2+o(1)$.
Hence $\limsup\inf C_3\le\frac12\bigl(\frac{A^2}4+\frac{3A^2}4\bigr)
=\frac{A^2}2$. If neither class splits, the system stays in the
two-direction stratum, where the value is at most $\frac{5E}8$ with the
rigid equality characterization of Proposition~\ref{prop:twodir}. (An
explicit paired splitting family even attains limit value $\frac E8$ at
three points while keeping its four-point value pinned at $\frac E4$.)
\end{proof}

\subsubsection{Two obstructions: what a proof cannot do}

\begin{proposition}[$\mu=0$ obstruction]\label{prop:mu0}
A proof of Conjecture~\ref{conj:CE} cannot restrict to selections with
$\mu=0$. Concretely, for the seventh-roots system
$t_j=\zeta^j,\ z_j=\zeta^{3j},\ p_j=\frac17$ ($\zeta=e^{2\pi i/7}$), every
$q$ with $\mu=0$ satisfies $C\ge\frac{7/k-1}2$ for support $k$ (a discrete
Parseval identity), i.e.\ $C\ge\frac38>\frac14$ at $k=4$ and
$C\ge\frac23>\frac58$ at $k=3$; yet unconstrained selections achieve
$C=\frac27$ at three points (the difference set $\{0,1,3\}$) and $C<\frac14$
at four consecutive points (an exact algebraic computation modulo
$x^3+x^2-2x-1$). Similarly, for the regular hexagon third-harmonic system
($t_j=e^{\pi ij/3}$, $z_j=(-1)^j$, $p_j=\frac16$) all $\mu=0$ selections of
support $\le3$ have $C=1$, while $\{0,1,2\}$ at weights $2{:}3{:}2$ attains
the exact three-point optimum $C=\frac12$.
\end{proposition}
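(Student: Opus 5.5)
The plan is to treat the two concrete systems separately. Each has two parts: a lower bound for $\mu=0$ selections, which comes from a Fourier/Parseval computation, and an explicit unconstrained witness, which comes from the dictionary formula \eqref{eq:ab} or, equivalently, from Lemmas~\ref{lem:interp}--\ref{lem:closure}.

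\emph{Seventh roots.} First I check that this is a legal moment system. With $p_j=\frac17$ we have $\E_pt=\frac17\sum_j\zeta^j=0$, $\E_pz=\frac17\sum_j\zeta^{3j}=0$, and $\E_p(\bar tz)=\frac17\sum_j\zeta^{2j}=0$; also $E=1$. For a selection $q$ on $\mathbb Z/7$, write $\hat q(k)=\sum_jq_j\zeta^{kj}$. Then $\mu=\hat q(1)$, $\nu=\hat q(3)$ and $\lambda=\hat q(2)$.

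If $\mu=0$, then \eqref{eq:ab} gives $(a,b)=(\nu,\lambda)$, so $C=|\hat q(2)|^2+|\hat q(3)|^2$. Since $q$ is real, $|\hat q(-k)|=|\hat q(k)|$, and $\mu=0$ kills both $\hat q(\pm1)$. Parseval gives $\sum_{k=0}^6|\hat q(k)|^2=7\sum_jq_j^2\ge 7/k$ when $|\supp q|\le k$ (Cauchy--Schwarz). Removing $\hat q(0)=1$ and using the $\pm$ pairing yields $2C\ge 7/k-1$. This gives the values $\frac38$ at $k=4$ and $\frac23$ at $k=3$.

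For the unconstrained witnesses I would use Lemma~\ref{lem:closure} on $S=\{0,1,3\}$. I compute the three pair coefficients $B_{ij}$ explicitly in $\mathbb Q(\zeta)$ and then the squared distance from $0$ to their triangle, which should come out to $\frac27$. Because $\{0,1,3\}$ is a perfect difference set mod $7$, all $d_{ij}$ are equal, and I expect the optimum to be interior with a clean value; I will confirm the interior point is realized by positive weights via the parametrization in Lemma~\ref{lem:closure}.

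For four consecutive atoms $\{0,1,2,3\}$ I would not look for the exact optimum. It suffices to pick a symmetric weight vector $(\alpha,\beta,\beta,\alpha)$, evaluate $C$ from \eqref{eq:ab} in the real subfield $\mathbb Q(\zeta+\zeta^{-1})$ with the relation $x^3+x^2-2x-1=0$ for $x=2\cos(2\pi/7)$, and optimize the single ratio $\alpha/\beta$. Then I certify the resulting algebraic number is $<\frac14$ by reducing modulo the minimal polynomial and checking sign with isolating rational bounds on $x$.

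\emph{Hexagon.} Let $\omega=e^{\pi i/3}$. Then $t_j=\omega^j$, $z_j=\omega^{3j}$ and $\bar t_jz_j=\omega^{2j}$; all three moments vanish, and $E=1$. Again $\mu=\hat q(1)$ and, for $\mu=0$, $C=|\hat q(2)|^2+|\hat q(3)|^2$.

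A convex combination of at most three sixth roots equal to $0$ has only two possible shapes: an antipodal pair with weights $\frac12,\frac12$ (a third atom, if present, must carry zero weight), or an equilateral triple with weights $\frac13$ each. This follows because three vectors with $0$ in the relative interior of their hull force one of these configurations. In the first case $\hat q(3)=0$ and $|\hat q(2)|=1$; in the second $|\hat q(3)|=1$ and $\hat q(2)=0$. Either way $C=1$.

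For $\{0,1,2\}$ at weights $2{:}3{:}2$ I evaluate $\mu,\nu,\lambda$ and solve the $2\times2$ system \eqref{eq:ab} to get $C=\frac12$. For optimality I use Lemma~\ref{lem:closure}: up to the dihedral symmetry of the system, there are finitely many support types (pairs, and triples of type consecutive, $\{0,1,3\}$, and equilateral). For each I compute $\dist^2(0,K_S)$ and check that the minimum is $\frac12$, attained at the interior point given by the $2{:}3{:}2$ weights.

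The main obstacle is the exact arithmetic: the four-consecutive seventh-root bound and the exhaustive three-point minimization for the hexagon. The Parseval lower bounds are routine. For the arithmetic I would organize everything through the closure identity and Lemma~\ref{lem:interp}, so each case reduces to a triangle-distance computation in a small number field, and I would rely on exact symbolic verification for the final inequalities.
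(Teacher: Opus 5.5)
Your route is the paper's: Parseval on $\mathbb{Z}/7$ for the $\mu=0$ bound, direct evaluation through \eqref{eq:ab} for the witnesses, the antipodal-pair/equilateral classification for the hexagon, and Lemma~\ref{lem:closure} over the three dihedral triangle types for the hexagon optimum. The hexagon part and the Parseval bounds are correct. The heptagon three-point witness, however, fails as written, for two reasons.

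First, the $d_{ij}$ on $S=\{0,1,3\}$ are \emph{not} equal. The index differences are $1,2,3$, so $d_{ij}=2-2\cos(2\pi k/7)$ takes three distinct values, about $0.75$, $2.45$ and $3.80$. A perfect difference set makes the Fourier coefficients of $\mathbf 1_S$ flat, not the chords.

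Second, $\dist^2(0,K_S)$ is not $\frac27$. At the interior weights $q\approx(0.317,\,0.452,\,0.231)$ the normal equations give $|a|^2\approx|b|^2\approx\frac18$. Hence $\dist^2(0,K_S)\le C(q)\approx\frac14<\frac27$. If you carry out the computation you describe, you will not get the number in the statement.

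The value $\frac27$ is the \emph{uniform}-weight value, and that is what the difference-set structure delivers. Take $q=\frac13\mathbf 1_S$. Then $|\mu|^2=\frac29$. The relations $2S=S-1$ and $3S=3-S\pmod 7$ give $\lambda=\bar\zeta\mu$ and $\nu=\zeta^3\bar\mu$. Substituting into \eqref{eq:ab} yields $|a|=|b|$ with $|a|^2=\frac17$, so $C=\frac27$ exactly. Evaluate there instead of at the triangle minimizer. The obstruction itself is unaffected, since any unconstrained three-point value below $\frac58$ suffices, and the true infimum on $S$ is even smaller.

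A minor remark on the four-point witness: you need not optimize $\alpha/\beta$. Uniform weights on $\{0,1,2,3\}$ already give $C\approx0.034$. Keeping the weights rational keeps every quantity in $\mathbb{Q}(x)$, which is what the reduction modulo $x^3+x^2-2x-1$ requires.
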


\begin{proposition}[Averaging obstruction]\label{prop:averaging}
A proof cannot either rely on selections that keep the original weights on a
subset. With $w_e=p_ip_jd_{ij}$, $y_e=w_eB_{ij}$,
$U=\sum_ew_e^2$, $Q=\sum_e\Wnorm{y_e}^2$, $R=4\sum_ip_i^2$,
$V=2\sum_ip_i^2|z_i|^2$, the exact averages over all $k$-subsets $S$ (with
$D_S=\sum_{e\subset S}w_e$, $G_S=\sum_{e\subset S}y_e$, and binomials
$A=\binom{N-2}{k-2}$, $B=\binom{N-3}{k-3}$, $H=\binom{N-4}{k-4}$) are
\[
  \sum_{|S|=k}\Wnorm{G_S}^2=(A-2B+H)Q+(B-H)V,\qquad
  \sum_{|S|=k}D_S^2=H+(A-2B+H)U+(B-H)R .
\]
For the fifth-roots system $t_j=\zeta_5^j$, $z_j=t_j^2$, $p_j=\frac15$ one
gets $U=\frac3{25},R=\frac45,Q=\frac15,V=\frac25$, so the best
original-weight three-point ratio guaranteed by these averages is
$\frac{Q+V}{U+R}=\frac{15}{23}>\frac58$: reweighting inside the selected
triangle (the freedom quantified by Lemma~\ref{lem:closure}) is essential.
\end{proposition}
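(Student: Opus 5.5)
The plan is to expand both squared sums as double sums over ordered pairs of edges $(e,f)$ with $e,f\subset S$, and classify each pair by the number of vertices in $e\cup f$.

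\textbf{Counting.} For a fixed pair of edges, the number of $k$-subsets $S$ containing $e\cup f$ depends only on $|e\cup f|\in\{2,3,4\}$:
\begin{itemize}
\item $e=f$ gives $A$ subsets;
\item $e,f$ sharing exactly one vertex gives $B$ subsets;
\item $e,f$ disjoint gives $H$ subsets.
\end{itemize}
Write $S_1$ for the sum of $\langle y_e,y_f\rangle$ over ordered pairs sharing one vertex, and $S_2$ for the same sum over disjoint pairs. Then
\[
  \sum_{|S|=k}\Wnorm{G_S}^2=AQ+BS_1+HS_2 .
\]
Same-direction pairs carry $w_e=0$, so they can be included or omitted freely.

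\textbf{Two global identities.} I determine $S_1$ and $S_2$ from two identities.
\begin{itemize}
\item By Lemma~\ref{lem:edges}, $\sum_ey_e=0$. Squaring gives $0=Q+S_1+S_2$.
\item The star sums. For each atom $i$, $\sum_{j}p_ip_jd_{ij}B_{ij}=2p_i\sum_jP_{ij}B_{ij}=2p_im_i$ by Lemma~\ref{lem:star}. Its squared norm is $4p_i^2\cdot\frac12|z_i|^2=2p_i^2|z_i|^2$. Summing $\Wnorm{\sum_{e\ni i}y_e}^2$ over $i$ counts each diagonal term twice and each vertex-sharing ordered pair once. Hence $2Q+S_1=V$.
\end{itemize}
So $S_1=V-2Q$ and $S_2=Q-V$, and substituting gives $(A-2B+H)Q+(B-H)V$.

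\textbf{The scalar identity.} The same argument applies to the scalars $w_e$, with two changes. Now $\sum_ew_e=1$, so $U+S_1'+S_2'=1$. The star sums are $\sum_jw_{ij}=2p_i\sum_jP_{ij}=2p_i$, so $2U+S_1'=R$. Substituting gives $H+(A-2B+H)U+(B-H)R$.

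\textbf{Fifth-roots example.} Take $N=5$ and $k=3$. Then $A=3$, $B=1$, $H=0$, so both coefficients equal $1$. The sums become $Q+V$ and $U+R$.

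For any nonnegative families, $\min_S \Wnorm{G_S}^2/D_S^2\le \sum_S\Wnorm{G_S}^2/\sum_SD_S^2$ (mediant inequality). Here $\Wnorm{G_S}^2/D_S^2$ is exactly $C$ at the original weights restricted to $S$, by Lemma~\ref{lem:interp}. Also $E=1$ in this system. So the guaranteed ratio is $(Q+V)/(U+R)$.

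For the constants:
\begin{itemize}
\item The values $d_{ij}$ take two values $d_1,d_2$, each on $5$ pairs. They satisfy $d_1+d_2=5$ and $d_1d_2=5$, from $\cos72^\circ+\cos144^\circ=-\tfrac12$ and $\cos72^\circ\cos144^\circ=-\tfrac14$.
\item Hence $d_1^2+d_2^2=15$ and $U=5\cdot15/625=\tfrac3{25}$.
\item Directly, $R=4\cdot5/25=\tfrac45$ and $V=2\cdot5/25=\tfrac25$.
\item For $Q$: since $z_j=t_j^2$, we get $B_{ij}=(-t_it_j,\ t_i+t_j)$. So $\Wnorm{B_{ij}}^2=1+|t_i+t_j|^2=5-d_{ij}$.
\item Therefore $Q=\frac5{625}\bigl(5(d_1^2+d_2^2)-(d_1^3+d_2^3)\bigr)=\frac{5(75-50)}{625}=\frac15$.
\end{itemize}
This yields $\frac{1/5+2/5}{3/25+4/5}=\frac{15}{23}>\frac58$.

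\textbf{Main obstacle.} The main point of care is the star identity with same-direction atoms present. Lemma~\ref{lem:star} averages only over valid neighbours, but the omitted terms have zero weight. So the identity $\Wnorm{\sum_{e\ni i}y_e}^2=2p_i^2|z_i|^2$ survives verbatim, and the counting goes through.
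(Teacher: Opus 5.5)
Your proof is correct and follows the paper's route. Like the paper, you count edge, adjacent-pair and disjoint-pair occurrences in $k$-subsets, apply the bookkeeping $\mathrm{Adj}=V-2Q$, $\mathrm{Disj}=Q-V$ and its scalar analogue, and evaluate the pentagon through $\Wnorm{B_{ij}}^2=5-d_{ij}$ and the two edge lengths. Your derivation of the adjacent-pair identity from the star sums $\sum_j y_{ij}=2p_im_i$ (Lemma~\ref{lem:star}) spells out a step that the paper's sketch attributes only to $\sum_e y_e=0$, which by itself gives just the disjoint-pair identity once the adjacent one is known.
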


\begin{proof}[Proof of Propositions~\ref{prop:mu0} and \ref{prop:averaging}]
For the heptagon, $\mu=m_1,\lambda=m_2,\nu=m_3$ where
$m_r=\sum_jq_j\zeta^{rj}$, and Parseval gives
$1+2\sum_{r=1}^3|m_r|^2=7\sum_jq_j^2\ge7/k$; with $m_1=0$,
$C=|m_2|^2+|m_3|^2$. The unconstrained witnesses are direct computations
(for the four-point one, $x=2\cos\frac{2\pi}7$ satisfies
$x^3+x^2-2x-1=0$ and the claimed sign is that of $33x^2+71x-36>0$). The
hexagon classification of $\mu=0$ supports (antipodal pairs and alternating
triangles) and the exact optimum $\frac12$ (via Lemma~\ref{lem:closure};
minimal over the three dihedral types of triangles) are finite checks. The
subset averages follow by counting occurrences of an edge, an adjacent edge
pair, and a disjoint edge pair in $k$-subsets, together with the cross-term
bookkeeping $\mathrm{Adj}=V-2Q$, $\mathrm{Disj}=Q-V$ (from
$\sum_ey_e=0$) and their scalar analogues; the fifth-roots values use
$\Wnorm{B_{ij}}^2=5-d_{ij}$ and the two edge lengths
$d_\pm=\frac{5\pm\sqrt5}2$. All exact values in this subsection were also
verified by independent symbolic computation.
\end{proof}

\subsubsection{Records, closed classes, and a necessary condition}

By deletion arguments (Sherman--Morrison in the frame picture; keeping
original weights and deleting one atom $i$ costs
$C_{-i}=p_i|z_i|^2h_i/(1-h_i)^2$ with $h_i=2p_i$, and harmonic balancing
gives $\min_iC_{-i}\le2E/(s-2)^2$ for $s$ atoms), Conjecture~\ref{conj:CE}
is already proved for: all systems with at most five atoms (four-point
version, value $\le\frac{2E}9$), all systems with at most four atoms
(three-point version, $\le\frac E2$), all two-direction systems
(Proposition~\ref{prop:twodir}), and antipodally paired systems with equal
in-pair $z$ (an application of Lemma~\ref{lem:mmean} with $M=2,k=2$ to the
pair means). By Lemma~\ref{lem:star}, any counterexample to the three-point
inequality must satisfy $5|z_i|^2-2L_i\ge3E$ for \emph{every} atom (for
distinct directions: $|z_i|^2\ge\frac{3E}5$ for all $i$). Exact records:
the hexagon three-point optimum is $\frac12$, its four-point value is at
most $\frac{19}{169}$; the five-atom two-direction worst case is exactly
$\frac47$; extensive exact searches over random six- and seven-atom systems
produced no value above the two-direction records.

%% file: sec-discussion.tex
\section{Discussion and open problems}\label{sec:discussion}

\paragraph{The emerging picture.}
For weighted selection with the minimum-Frobenius-norm ERM, the profile is
now known at both ends for all $d,m\ge1$:
\[
  \Fw(d,m,n)=
  \begin{cases}
    \infty, & n<d,\\
    d+1, & n=d,\\
    \text{open}, & d<n<(m+1)d-1,\\[1pt]
    1+\frac1{dm^2}, & n=(m+1)d-1,\\
    1, & n\ge(m+1)d .
  \end{cases}
\]
The threshold splits conceptually as $(m+1)d=d+md$: $d$ points to pin a
feature basis, $md$ points to cancel the $m\times d$ residual gradient ---
and the lower-bound instances realize this as $d$ independent copies of
weighted mean estimation in $\R^m$, one per feature direction, each requiring
its full simplex of $m+1$ points. At $m=1$ this collapses to the two
mechanisms coinciding, which is why the scalar threshold $2d$ admits both the
Steinitz-type proof of \cite{HMSYlong} and the conic-compression proof given
here.

\paragraph{Open problems.}
\begin{enumerate}
\item \emph{The intermediate curve.} Determine $\Fw(d,m,n)$ for
  $d<n<(m+1)d-1$. Already the scalar case is open
  (\cite{HMSYnote}, Question~2). The axial (line-decomposable) instances
  reduce to an allocation problem over per-line weighted mean-estimation
  deficits, for which Theorem~\ref{thm:kpoint} supplies the sharp per-line
  constants; but the scalar experience suggests that non-axial,
  higher-dimensional block constructions dominate deeper inside the
  intermediate range, so we expect the truth to be an allocation over a
  richer family of circuit structures.
\item \emph{The $(2,2)$ cell} (Conjecture~\ref{conj:CE}). By
  Proposition~\ref{prop:seven} this is a question about at most seven atoms
  on the circle. The obstructions of Section~\ref{subsec:evidence} show a
  proof must use selections with $\mu\ne0$ and must reweight within the
  selected support; the evidence suggests the two-direction systems are
  extremal.
\item \emph{Unweighted vector selection thresholds.} Question~3 asks for the
  unweighted profile $F(d,m,n)$; a recent unrefereed preprint
  \cite{PengQ3} addresses it, and a rigorous account of the unweighted
  threshold and its relation to $(m+1)d$ would complete the vector-valued
  picture.
\item \emph{Other learning rules.} As emphasized in \cite{HMSYnote}, the
  questions make sense for any ERM selection rule; our certificate and
  rigidity machinery uses the Frobenius tie-breaking only through
  Lemma~\ref{lem:certificate}, and it would be interesting to determine how
  the threshold depends on the tie-breaking rule.
\end{enumerate}

\paragraph{Verification.}
All closed-form values in this paper (the instance values $43/40$, $23/20$,
$13/8$, $5/4$, the profile values, the exact records $\frac12$,
$\frac{19}{169}$, $\frac47$, $\frac27$, $\frac{15}{23}$, and the tightness
of the constants in Lemma~\ref{lem:mmean} and Theorem~\ref{thm:kpoint}) were
additionally verified by exact rational or algebraic symbolic computation,
and the lower-bound claims were stress-tested by independent numerical
optimization over all supports of the relevant instances.